\newtheorem{remark}{Remark}
\begin{document}
\begin{frontmatter}

\title{Multi-objective Evolutionary Algorithms are Generally Good: Maximizing Monotone Submodular Functions over Sequences}
\author{Chao Qian$^{1}$}
\ead{qianc@lamda.nju.edu.cn}
\author{Dan-Xuan Liu$^{1}$}
\ead{liudx@lamda.nju.edu.cn}
\author{Chao Feng$^{2}$}
\ead{chaofeng@mail.ustc.edu.cn}
\author{Ke Tang$^{3}$\corref{cor1}}
\ead{tangk3@sustech.edu.cn}
\cortext[cor1]{Corresponding author. A preliminary version of this paper has appeared at IJCAI'18~\cite{qian2018sequence}.}
\address{$^{1}$State Key Laboratory for Novel Software Technology, Nanjing University, Nanjing 210023, China\\\vspace{0.5em}
$^{2}$School of Computer Science and Technology, University of Science and Technology of China, Hefei 230027, China\\\vspace{0.5em}
$^{3}$Shenzhen Key Laboratory of Computational Intelligence, Department of Computer Science and Engineering,\\ Southern University of Science and Technology, Shenzhen 518055, China}

\begin{abstract}
Evolutionary algorithms (EAs) are general-purpose optimization algorithms, inspired by natural evolution. Recent theoretical studies have shown that EAs can achieve good approximation guarantees for solving the problem classes of submodular optimization, which have a wide range of applications, such as maximum coverage, sparse regression, influence maximization, document summarization and sensor placement, just to name a few. Though they have provided some theoretical explanation for the general-purpose nature of EAs, the considered submodular objective functions are defined only over sets or multisets. To complement this line of research, this paper studies the problem class of maximizing monotone submodular functions over sequences, where the objective function depends on the order of items. We prove that for each kind of previously studied monotone submodular objective functions over sequences, i.e., prefix monotone submodular functions, weakly monotone and strongly submodular functions, and DAG monotone submodular functions, a simple multi-objective EA, i.e., GSEMO, can always reach or improve the best known approximation guarantee after running polynomial time in expectation. Note that these best-known approximation guarantees can be obtained only by different greedy-style algorithms before. Empirical studies on various applications, e.g., accomplishing tasks, maximizing information gain, search-and-tracking and recommender systems, show the excellent performance of the GSEMO.
\end{abstract}

\begin{keyword}
Evolutionary algorithms \sep multi-objective evolutionary algorithms \sep submodular optimization \sep sequences \sep computational complexity \sep approximation ratio \sep experimental studies
\end{keyword}
\end{frontmatter}

\newpage
\section{Introduction}

Evolutionary algorithms (EAs)~\citep{back:96} are a large class of heuristic randomized optimization algorithms, inspired by natural evolution. They simulate the natural evolution process by considering two key factors, variational reproduction and superior selection. Generally, EAs maintain a population of solutions, and repeatedly reproduce solutions and eliminate inferior ones so that the maintained solutions can be improved iteratively. In order to solve an optimization problem, EAs only require the solutions to be evaluated, and thus are general-purpose optimization algorithms. As a result, EAs have been successfully applied to solve various sophisticated optimization problems, e.g., antenna design~\citep{hornby2011computer}, neural architecture search~\citep{elsken2019neural} and biodiversity analysis~\citep{fan2020high}, just to name a few.

However, in contrast to the great success in practice, the theoretical foundations of EAs are still underdeveloped. Particularly, most of previous theoretical analyses considered isolated problems~\citep{auger2011theory,neumann2020discrete,neumann2010bioinspired}, which cannot reflect the general-purpose property of EAs. With the goal of showing good general approximation ability of EAs theoretically, some efforts~\citep{bian2020efficient,do2020maximizing,aaai2021dynamic,friedrich2015maximizing,friedrich2018heavy,neumann2020optimising,qian2020multi,qian2017generalsubset,qian2017constrained,qian2018multiset,qian2019maximizing,aaai2019dynamic} have been recently put into studying their approximation performance for solving general problem classes of submodular optimization, where the objective function is only required to satisfy the submodular property and its explicit formulation is not needed. Submodularity~\citep{nemhauser1978analysis} characterizes the diminishing returns property, and is satisfied by the objective functions of many applications, e.g., maximum coverage~\citep{feige1998threshold,liu2023primal}, feature selection~\citep{6137220}, influence maximization~\citep{kempe2003maximizing} and document summarization~\citep{lin2011class}, to name a few. Thus, submodular optimization is a fundamental task and has attracted a lot of research attention~\citep{krause2014submodular}. As it is NP-hard in general, many algorithms with bounded approximation guarantees have been developed.

Submodular optimization was originally based on set functions. That is, the objective function $f$ is a set function, which maps any subset of a given ground set $V$ to a real value. Let $\mathbb{R}$ denote the set of reals. A set function $f:2^V \rightarrow \mathbb{R}$ is submodular if $\forall X \subseteq Y \subseteq V, v \notin Y: f(X \cup \{v\})-f(X) \geq f(Y \cup \{v\}) - f(Y)$, i.e., the benefit of adding an item to a set will not increase as the set extends, implying the diminishing returns property. For maximizing monotone submodular functions with a size constraint, i.e.,
\begin{align}\label{eq-subset}
\arg \max\nolimits_{X \subseteq V}f(X) \quad \text{s.t.} \quad |X| \leq k,
\end{align}
where $f$ is monotone and submodular, Friedrich and Neumann~\cite{friedrich2015maximizing} proved that the GSEMO, a simple multi-objective EA widely used in theoretical analyses~\citep{Laumanns04}, can achieve the optimal polynomial-time approximation ratio of $1-1/e$~\citep{nemhauser1978best,nemhauser1978analysis} in $O(n^2(\log n +k))$ expected running time, where $n$ is the size of $V$. Note that a set function $f: 2^V \rightarrow \mathbb{R}$ is monotone if $\forall X \subseteq Y \subseteq V: f(X) \leq f(Y)$. When $f$ is monotone and approximately submodular, i.e., satisfies the submodular property to some extent, Qian et al.~\cite{qian2019maximizing} proved that the GSEMO can achieve the optimal polynomial-time approximation ratio of $1-e^{-\gamma}$~\citep{das2011submodular,harshaw2019submodular}, where $\gamma \in [0,1]$ measures the closeness of $f$ to submodularity. When $f$ is submodular and approximately monotone, Qian et al.~\cite{qian2019maximizing} also proved that the GSEMO can find a subset $X$ with $f(X) \geq (1-1/e)\cdot (\mathrm{OPT}-k\epsilon)$ in $O(n^2(\log n +k))$ expected running time, where $\mathrm{OPT}$ denotes the optimal function value, and $\epsilon \geq 0$ captures the degree of approximate monotonicity. This reaches the best-known polynomial-time approximation guarantee~\citep{krause2008near}. The good approximation performance of EAs has also shown on the problem of Eq.~(\refeq{eq-subset}) with non-monotone objective functions~\citep{friedrich2018heavy,qian2020multi}, noisy objective functions~\citep{qian2019distributed,qian2017subset}, general cost constraints~\citep{bian2020efficient,qian2017generalsubset}, chance constraints~\citep{neumann2020optimising}, or partition matroid constraints~\citep{do2020maximizing}. Furthermore, it has been proved that EAs can regain the good approximation guarantee efficiently even when the objective functions or constraints change dynamically~\cite{bian2021dynamic,aaai2021dynamic,qian2022result,aaai2019dynamic}.

Note that a subset $X \subseteq V=\{v_1,v_2,\ldots,v_n\}$ can be naturally represented by a Boolean vector $\bm{x} \in \{0,1\}^n$, where the $i$-th bit $x_i=1$ if the $i$-th item $v_i \in X$, otherwise $x_i=0$. Thus, a set function $f: 2^{V} \rightarrow \mathbb{R}$ is actually a pseudo-Boolean function $f: \{0,1\}^n \rightarrow \mathbb{R}$. Submodularity has been extended from pseudo-Boolean functions to functions $f:\mathbb{Z}_{+}^V \rightarrow \mathbb{R}$ over the integer lattice $\mathbb{Z}_{+}^V$~\citep{soma2016maximizing,ward2014maximizing}, where $\mathbb{Z}_{+}$ denotes the set of non-negative integers. Consider the problem corresponding to Eq.~(\refeq{eq-subset}), which now changes to
\begin{align}\label{eq-multiset}
\arg \max\nolimits_{\bm{x} \in \mathbb{Z}_{+}^V}f(\bm{x}) \quad \text{s.t.} \quad |\bm{x}| \leq k.
\end{align}
An integer vector $\bm{x} \in \mathbb{Z}_{+}^V$ can represent a multiset, where the item $v_i$ appears $x_i$ times. In this case, when the objective function is monotone and DR-submodular, and $|\bm{x}|$ represents the $\ell_1$-norm of $\bm{x}$, Qian et al.~\cite{qian2018multiset} proved that the GSEMO can achieve the best-known polynomial-time approximation ratio of $1-1/e$~\citep{soma2014optimal}. The GSEMO employs the bit-wise mutation operator, which flips the value on each dimension of a vector independently with probability $1/n$, to reproduce offspring solutions. Note that for a Boolean vector, the behavior of flipping changes 0 to 1 or 1 to 0, while for an integer vector, it changes the current value to a different integer selected uniformly at random. An integer vector $\bm{x} \in \mathbb{Z}_{+}^V$ can also represent $k$ subsets, where the $i$-th subset is $\{v_j \mid x_j=i\}$. In this case, when the objective function is monotone and $k$-submodular, and $|\bm{x}|$ represents the $\ell_0$-norm of $\bm{x}$, Qian et al.~\cite{qian2017constrained} proved that the GSEMO combined with randomized local search can achieve the asymptotically optimal polynomial-time approximation ratio of $1/2$~\citep{ohsaka2015monotone}.

In many practical applications such as job scheduling~\citep{stadje1995selecting} and recommender systems~\citep{ashkan2015optimal}, it is often desired to select a sequence instead of a subset. That is, the items cannot be treated independently, and the objective function $f$ depends on the order of items. Thus, submodularity has also been extended to functions $f:\mathcal{S} \rightarrow \mathbb{R}$ over sequences, where $\mathcal{S}=\{(s_1,s_2,\ldots,s_l) \mid s_i \in V, l \in \mathbb{Z}_{+}\}$ denotes the space of sequences. A natural question is then whether EAs can achieve good polynomial-time approximation guarantees for submodular optimization over sequences, which has not been touched before, and is the main focus of this paper.

For submodular optimization over sequences, existing studies have only considered the problem corresponding to Eq.~(\refeq{eq-subset}), which is
\begin{align}\label{eq-sequence}
\mathop{\arg\max}\nolimits_{s \in \mathcal{S}} f(s) \quad \text{s.t.}\quad |s|\leq k,
\end{align}
i.e., to select a sequence of at most $k$ items that will maximize some given objective function $f$. Here $|s|$ represents the length of a sequence $s$. Note that the search space is exponentially larger by considering sequences instead of subsets, and thus Eq.~(\refeq{eq-sequence}) can be harder than Eq.~(\refeq{eq-subset}). Alaei et al.~\cite{alaei2021maximizing} first proved that for \emph{subsequence monotone submodular} objective functions, the greedy algorithm, which iteratively appends one item with the largest marginal gain on $f$ to the end of the current sequence, can achieve a $(1-1/e)$-approximation guarantee. When the objective function $f$ is relaxed to be \emph{prefix$+$suffix monotone} and \emph{prefix submodular}, Streeter and Golovin~\cite{streeter2008online} proved that the greedy algorithm can still achieve the $(1-1/e)$-approximation guarantee. When $f$ is further relaxed to be \emph{prefix monotone submodular}, Zhang et al.~\cite{zhang2016string} proved that the greedy algorithm achieves a $(1/\sigma)(1-e^{-\sigma})$-approximation guarantee, where $\sigma$ is the curvature characterizing the degree of submodularity. Recently, Bernardini et al.~\cite{bernardini2020through} proved that when $f$ is \emph{weakly monotone} and \emph{strongly submodular}, the greedy algorithm fails to achieve a constant approximation guarantee, and proposed the generalized greedy algorithm, which can achieve the approximation guarantee of $1-1/e$. Different from the greedy algorithm which iteratively appends an item to the end of the current sequence, the generalized greedy algorithm can insert an item into any position of the current sequence in each iteration. Note that weak monotonicity is weaker than prefix monotonicity while strong submodularity is stronger than subsequence submodularity. Tschiatschek et al.~\cite{tschiatschek2017selecting} considered another class of objective functions, so-called \emph{DAG monotone submodular} functions, which must satisfy the subsequence monotone property but does not necessarily satisfy the prefix submodular property. They proved that the greedy algorithm also fails to achieve a constant approximation guarantee, and proposed a new algorithm \textsc{OMegA} with an approximation guarantee of $1-e^{-1/(2\Delta)}$, where $\Delta \geq 1$. The relationship among the above five function classes over sequences is shown in Figure~\ref{fig_relation-mon-sub}, which will be explained in detail in Section~\ref{sec-problemclass}.

\begin{figure*}[ht!]\centering
\begin{minipage}[c]{0.45\linewidth}\centering
        \includegraphics[width=1\linewidth]{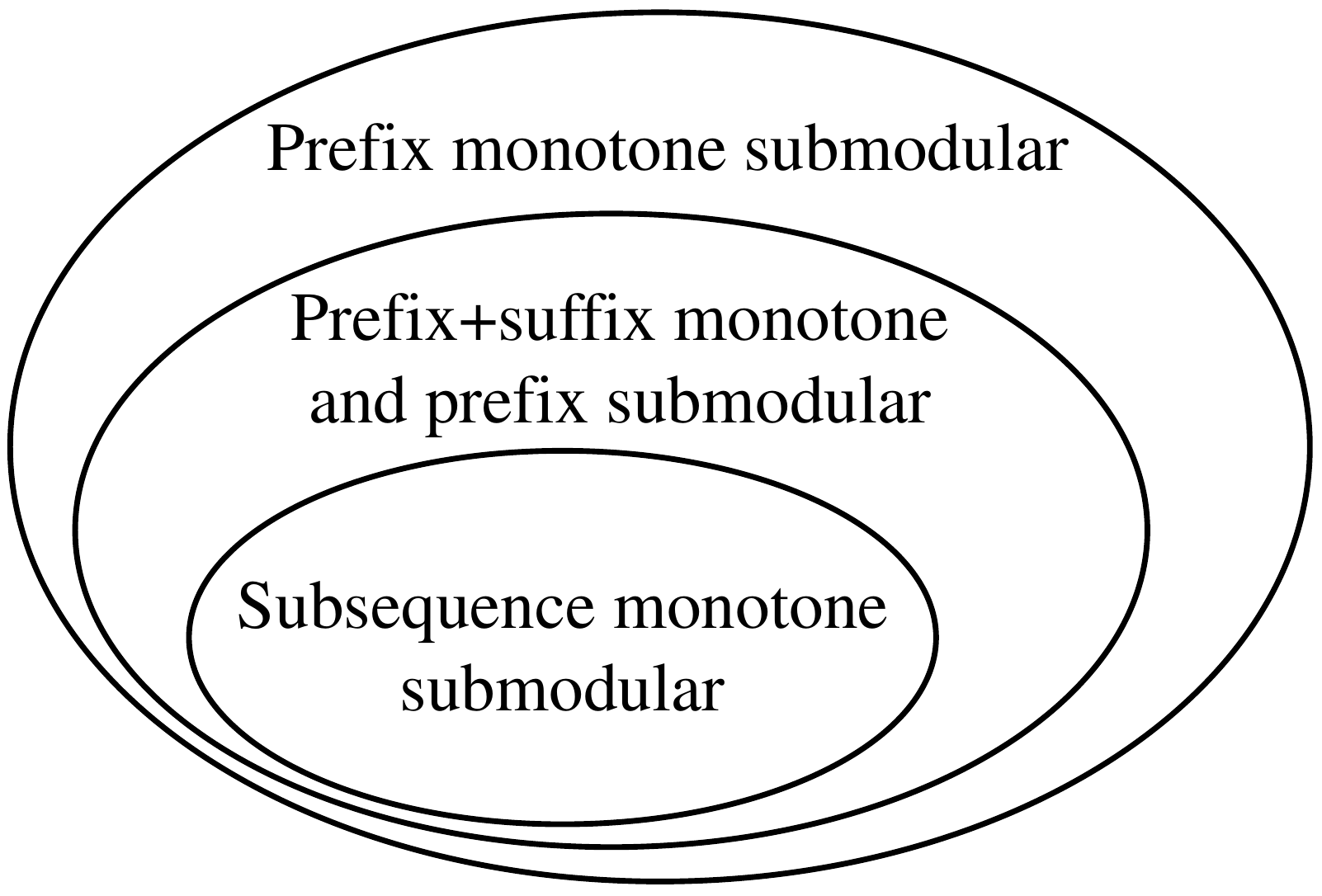}
\end{minipage}\hspace{2.5em}
\begin{minipage}[c]{0.35\linewidth}\centering
        \includegraphics[width=1\linewidth]{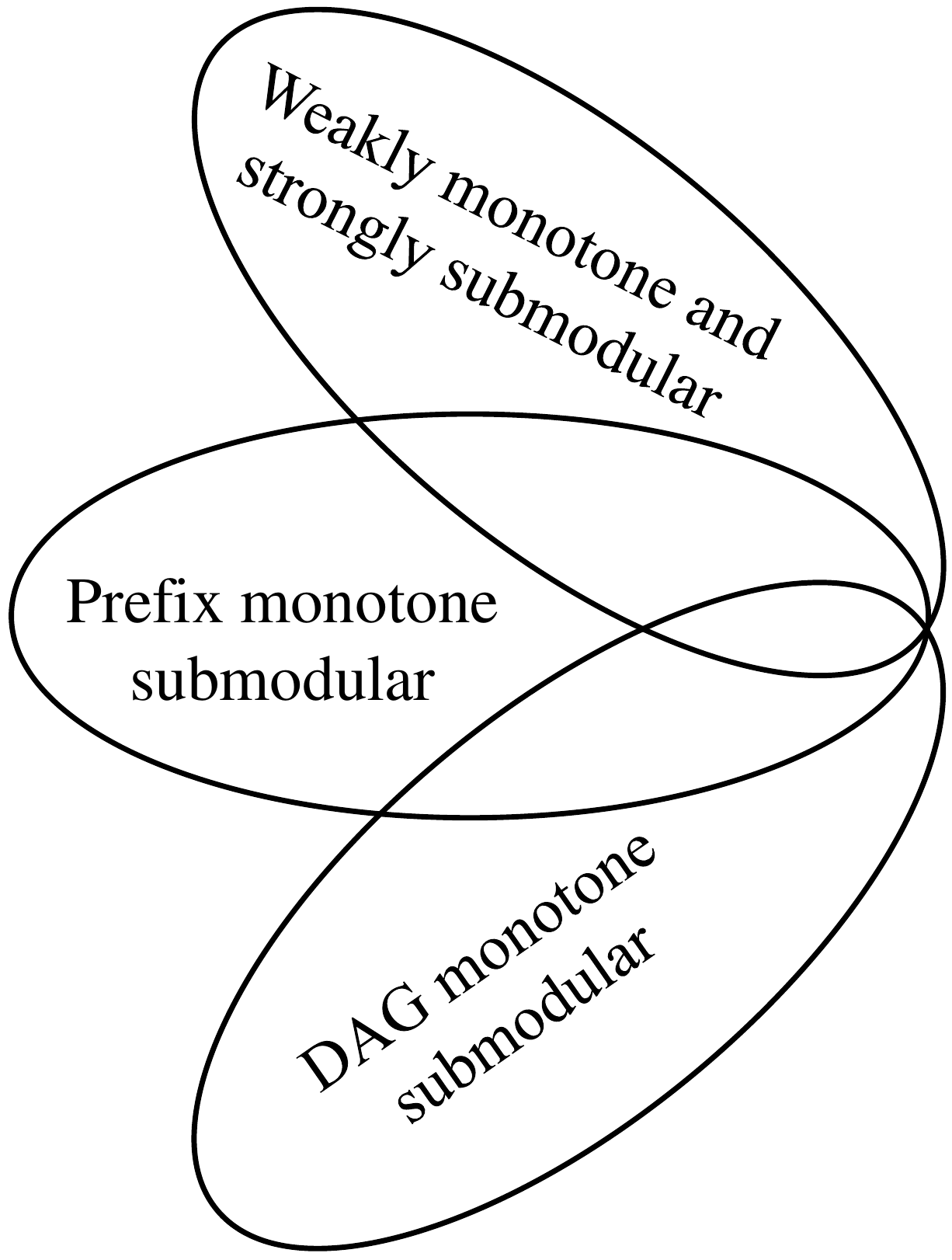}
\end{minipage}
\caption{The relationship among the monotone submodular function classes over sequences~\citep{alaei2021maximizing,bernardini2020through,streeter2008online,tschiatschek2017selecting,zhang2016string} studied before.}\label{fig_relation-mon-sub}
\end{figure*}

In this paper, to theoretically examine the performance of EAs for submodular optimization over sequences, we analyze the approximation guarantee of the GSEMO for solving the problem of Eq.~(\refeq{eq-sequence}), by considering each class of previously studied monotone submodular objective functions. The GSEMO solves the reformulated bi-objective optimization problem that maximizes the given objective $f(s)$ and minimizes the sequence length $|s|$ simultaneously, and outputs the best sequence satisfying the length constraint from the final population. To make the GSEMO able to optimize functions over sequences, the mutation operator is changed accordingly. Inspired by~\citep{durrett2011computational,scharnow2005analysis}, it selects a number $r$ randomly from the Poisson distribution with $\lambda=1$, and then inserts or deletes one item randomly for $r$ times. This operator can be viewed as a natural extension of bit-wise mutation over Boolean vectors. We prove that the GSEMO using polynomial expected running time can always reach or improve the best known approximation guarantee. The concrete theoretical results are:\vspace{-1em}
\begin{itemize}[leftmargin=1.8em]
\item[(1)] When $f$ is prefix monotone submodular, the GSEMO using at most $2ek^2(k+1)n$ expected running time can achieve an approximation guarantee of $(1/\sigma)(1-e^{-\sigma})$ (\textbf{Theorem~\ref{theo-prefix}}), which reaches that of the greedy algorithm~\citep{zhang2016string}.
    \begin{itemize}[leftmargin=*]
    \item When $f$ is specialized to be prefix$+$suffix monotone and prefix submodular, the GSEMO using at most $2ek^2(k+1)n$ expected running time can achieve an approximation guarantee of $1-1/e$ (\textbf{Theorem~\ref{theo-prefix-suffix}}), which reaches that of the greedy algorithm~\citep{streeter2008online}.
    \item When $f$ is further specialized to be subsequence monotone submodular, the GSEMO using at most $2ek^2(k+1)n$ expected running time can achieve an approximation guarantee of $1-1/e$ (\textbf{Theorem~\ref{theo-subsequence}}), which reaches that of the greedy algorithm~\citep{alaei2021maximizing}.
    \end{itemize}
\item[(2)] When $f$ is weakly monotone and strongly submodular, the GSEMO using at most $2ek^2(k+1)n$ expected running time can achieve an approximation guarantee of $1-1/e$ (\textbf{Theorem~\ref{theo-weak}}), which reaches that of the generalized greedy algorithm~\citep{bernardini2020through}.
\item[(3)] When $f$ is DAG monotone submodular, the GSEMO using at most $4ek^2n^2$ expected running time can achieve an approximation guarantee of $1-e^{-1/2}$ (\textbf{Theorem~\ref{theo-dag}}), which is at least as good as that of the \textsc{OMegA} algorithm, i.e., $1-e^{-1/(2\Delta)}$~\citep{tschiatschek2017selecting}, where $\Delta \geq 1$.
\end{itemize}
Note that for the last two cases, the greedy algorithm fails to achieve a constant approximation guarantee~\citep{bernardini2020through,tschiatschek2017selecting}. Thus, these ``one-for-all" theoretical results of the GSEMO have shown the good general approximation ability of EAs for monotone submodular optimization over sequences. The analysis also discloses the importance of the mutation operator, which can simulate various greedy operators and thus leads to the universality of EAs.

We also examine the performance of the GSEMO by experiments. For each of the above three studied problem classes, we consider two applications, and compare the GSEMO with the greedy algorithm, the generalized greedy algorithm, and the \textsc{OMegA} algorithm. The experimental results show that the objective value achieved by the GSEMO is always at least as large as that achieved by the best previous algorithm, and the GSEMO is actually significantly better in most cases by the \textit{sign-test}~\citep{demsar:06} with confidence level $0.05$. Furthermore, the GSEMO can bring a performance improvement even when the previous algorithm has been nearly optimal. The results also show that, compared with the running time bound (i.e., the worst-case running time) derived in the theoretical analyses, the GSEMO can be much more efficient in practice.

This paper extends our preliminary work~\citep{qian2018sequence}. In the theoretical sections, we add the analysis of the GSEMO for solving the problem class of maximizing weakly monotone and strongly submodular functions (i.e., Section~\ref{sec-theory2}). In the experimental section, for the problem class of maximizing prefix monotone submodular functions, we only considered the application of accomplishing tasks (i.e., Section~\ref{subsec-exp-1-1}), and now add the application of maximizing information gain (i.e., Section~\ref{subsec-exp-1-2}); for the problem class of maximizing weakly monotone and strongly submodular functions, we add the two applications of search-and-tracking and recommender systems (i.e., Section~\ref{subsec-exp-2}); for the problem class of maximizing DAG monotone submodular functions, we only considered the synthetic data set, and now add the real-world data set in Section~\ref{subsec-exp-3}. We also extend the discussion about the experimental results.

The rest of this paper is organized as follows. Section~\ref{sec-prob} introduces the studied problem classes, and Section~\ref{sec-moea} introduces how to apply the GSEMO to solve them. Sections~\ref{sec-theory1} to~\ref{sec-theory3} give the theoretical analyses of the GSEMO for solving the problem classes of maximizing prefix monotone submodular functions, weakly monotone and strongly submodular functions, and DAG monotone submodular functions, respectively. Section~\ref{sec-exp} presents the empirical study. Section~\ref{sec-conclusion} concludes the paper.

\section{Maximizing Monotone Submodular Functions over Sequences}\label{sec-prob}

Let $\mathbb{R}$, $\mathbb{R}_{+}$ and $\mathbb{Z}_{+}$ denote the set of reals, non-negative reals and non-negative integers, respectively. Given a finite set $V=\{v_1,v_2,\ldots,v_n\}$ of items, we study the functions $f:\mathcal{S} \rightarrow \mathbb{R}$ defined on sequences of items from $V$. A sequence is represented by $s \in \mathcal{S}=\{(s_1,s_2,\ldots,s_l) \mid s_i \in V, l \in \mathbb{Z}_{+}\}$, where $l$ is the length of the sequence. When $l=0$, it represents the empty sequence $\emptyset$. For two sequences $s, t \in \mathcal{S}$, we use $\sqsubseteq_{\mathrm{subseq}}$, $\sqsubseteq_{\mathrm{prefix}}$ and $\sqsubseteq_{\mathrm{suffix}}$ to denote their relationships. That is,
\begin{itemize}
\item $s \sqsubseteq_{\mathrm{subseq}} t$, if $s$ is a subsequence of $t$;
\item $s \sqsubseteq_{\mathrm{prefix}} t$, if $s$ is a prefix of $t$;
\item $s \sqsubseteq_{\mathrm{suffix}} t$, if $s$ is a suffix of $t$.
\end{itemize}
Note that if $s$ is a prefix or suffix of $t$, it must be a subsequence of $t$. We will use $\oplus$ to denote the concatenation of two sequences, and represent a singleton sequence $(v)$ by $v$ for simplicity.

In the following, we will first introduce the notions of monotonicity and submodularity for functions over sequences, respectively, and then introduce the problem classes studied in this paper.

\subsection{Monotonicity}

Monotonicity intuitively implies that the function value will not decrease as a sequence extends. Due to the various ways of extension, several notions of monotonicity have been introduced.

\begin{definition}[Subsequence Monotonicity~\citep{alaei2021maximizing}]\label{def-subseq-mon}
A sequence function $f:\mathcal{S} \rightarrow \mathbb{R}$ is subsequence monotone if $\;\forall s \sqsubseteq_{\mathrm{subseq}} t \in \mathcal{S}: f(s) \leq f(t)$.
\end{definition}

\begin{definition}[Prefix Monotonicity~\citep{streeter2008online,zhang2016string}]\label{def-pre-mon}
A sequence function $f:\mathcal{S} \rightarrow \mathbb{R}$ is prefix monotone if $\;\forall s \sqsubseteq_{\mathrm{prefix}} t \in \mathcal{S}: f(s) \leq f(t)$.
\end{definition}

\begin{definition}[Suffix Monotonicity~\citep{streeter2008online,zhang2016string}]\label{def-suf-mon}
A sequence function $f:\mathcal{S} \rightarrow \mathbb{R}$ is suffix monotone if $\;\forall s \sqsubseteq_{\mathrm{suffix}} t \in \mathcal{S}: f(s) \leq f(t)$.
\end{definition}

\begin{definition}[Weak Monotonicity~\citep{bernardini2020through}]\label{def-weak-mon}
A sequence function $f:\mathcal{S} \rightarrow \mathbb{R}$ is weakly monotone if $\;\forall s, t \in \mathcal{S}$, there exists $w \in \mathcal{S}$ satisfying that \begin{align}\label{cond-weak-mon}s \sqsubseteq_{\mathrm{subseq}} w, t \sqsubseteq_{\mathrm{subseq}} w, |w| \leq |s|+|t|\; \text{and} \;f(s) \leq f(w).\end{align}
\end{definition}

Because a prefix or suffix is a special case of subsequence, a subsequence monotone function must be prefix monotone and suffix monotone. For a prefix monotone function $f$, Eq.~(\refeq{cond-weak-mon}) holds by letting $w=s \oplus t$, and thus $f$ satisfies the weak monotonicity. For a suffix monotone function $f$, Eq.~(\refeq{cond-weak-mon}) holds by letting $w=t \oplus s$, and thus $f$ also satisfies the weak monotonicity. The relationship among these notions of monotonicity is shown in Figure~\ref{fig_relation-mon-and-sub}(a).

\begin{figure*}[t!]\centering
\begin{minipage}[c]{0.51\linewidth}\centering
        \includegraphics[width=1\linewidth]{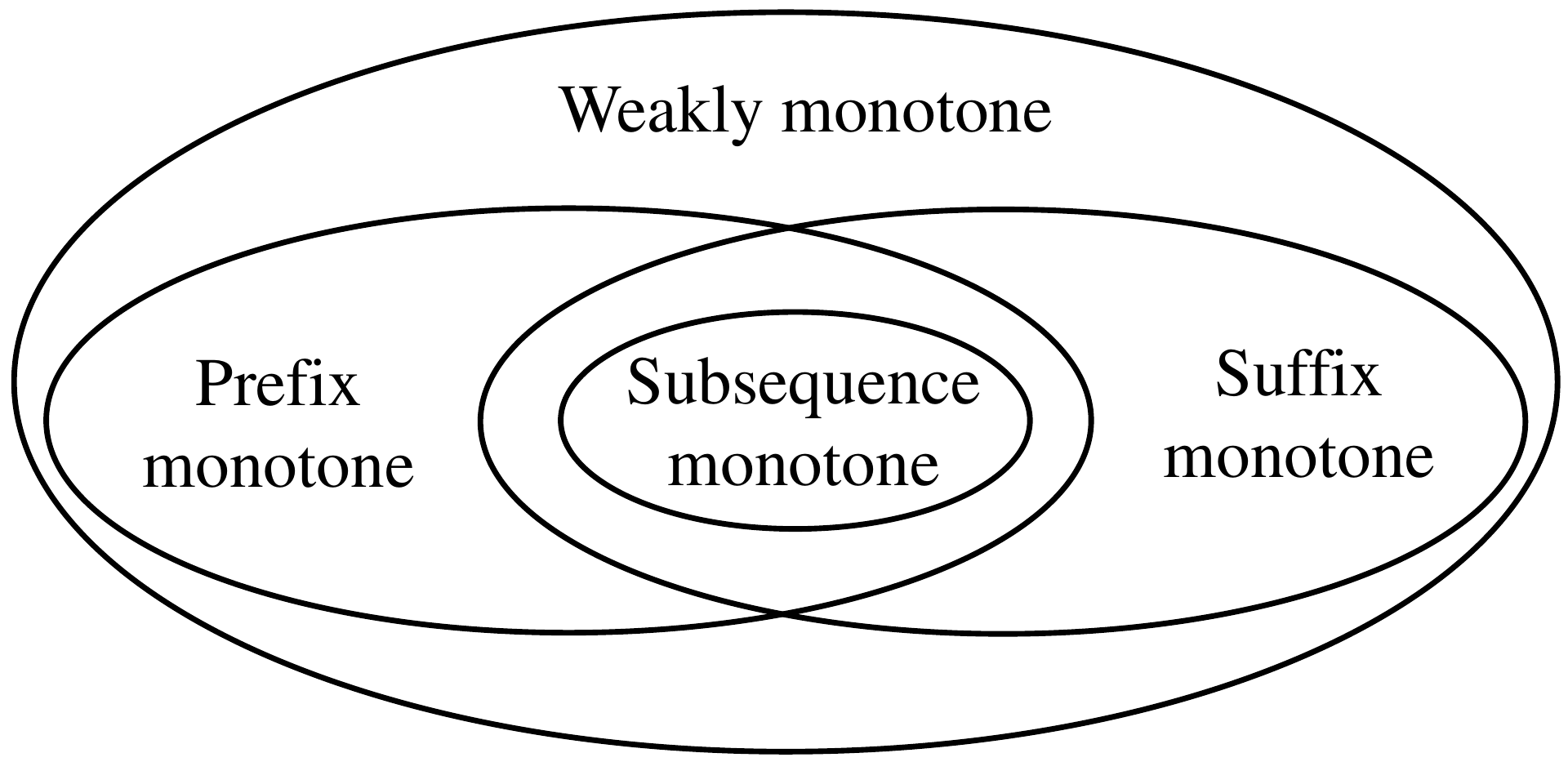}
\end{minipage}\hspace{1em}
\begin{minipage}[c]{0.45\linewidth}\centering
        \includegraphics[width=1\linewidth]{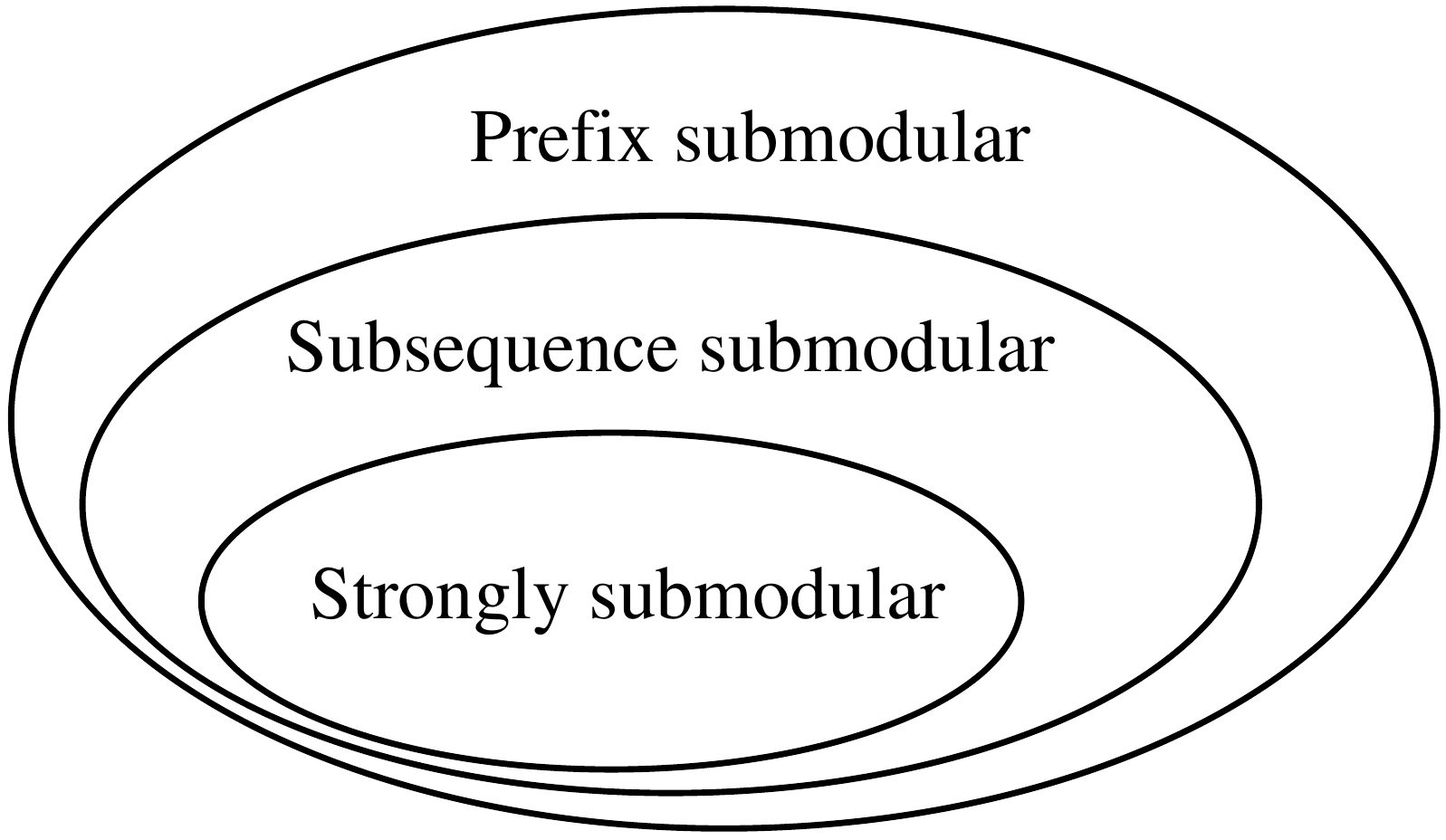}
\end{minipage}\\\vspace{0.5em}
\begin{minipage}[c]{0.51\linewidth}\centering
    \small(a) Monotonicity
\end{minipage}\hspace{1em}
\begin{minipage}[c]{0.45\linewidth}\centering
    \small(b) Submodularity
\end{minipage}
\caption{The relationship among different properties of sequence functions: (a) monotonicity and (b) submodularity.}\label{fig_relation-mon-and-sub}
\end{figure*}

For a function $f$ with any kind of monotonicity in Definitions~\ref{def-subseq-mon}--\ref{def-weak-mon}, the empty sequence $\emptyset$ has the smallest $f$ value. In this paper, we always assume that monotone functions are normalized, i.e., $f(\emptyset)=0$. Thus, it holds that $\forall s \in \mathcal{S}: f(s) \geq 0$.

\subsection{Submodularity}

Submodularity intuitively implies the diminishing returns property, i.e., the benefit of inserting an item into a sequence will not increase as the sequence extends. Due to the different ways of insertion and extension, several notions of submodularity have been introduced.

\begin{definition}[Strong Submodularity~\citep{bernardini2020through}]\label{def-strong-sub}
A sequence function $f:\mathcal{S} \rightarrow \mathbb{R}$ is strongly submodular if
\begin{align}\label{cond-strong-sub}\forall s \sqsubseteq_{\mathrm{subseq}} t\in \mathcal{S}, o \in \mathcal{S}, v \in V: f(s \oplus v \oplus o)-f(s \oplus o) \geq f(t \oplus v \oplus o)-f(t \oplus o).\end{align}
\end{definition}

\begin{definition}[Subsequence Submodularity~\citep{alaei2021maximizing}]\label{def-subseq-sub}
A sequence function $f:\mathcal{S} \rightarrow \mathbb{R}$ is subsequence submodular if
\begin{align}\label{cond-subseq-sub}\forall s \sqsubseteq_{\mathrm{subseq}} t\in \mathcal{S}, v \in V: f(s \oplus v)-f(s) \geq f(t \oplus v)-f(t).\end{align}
\end{definition}

\begin{definition}[Prefix Submodularity~\citep{streeter2008online,zhang2016string}]\label{def-pre-sub}
A sequence function $f:\mathcal{S} \rightarrow \mathbb{R}$ is prefix submodular if
\begin{align}\forall s \sqsubseteq_{\mathrm{prefix}} t\in \mathcal{S}, v \in V: f(s \oplus v)-f(s) \geq f(t \oplus v)-f(t).\end{align}
\end{definition}

By letting $o=\emptyset$, Eq.~(\refeq{cond-strong-sub}) implies Eq.~(\refeq{cond-subseq-sub}), and thus a strongly submodular function must be subsequence submodular. Because a prefix must be a subsequence, a subsequence submodular function must satisfy the prefix submodularity. Their relationship is shown in Figure~\ref{fig_relation-mon-and-sub}(b).

\begin{definition}[Curvature~\citep{zhang2016string}]\label{def-strcur}
The curvature of a sequence function $f: \mathcal{S} \rightarrow \mathbb{R}$ w.r.t. a sequence $s$ and a parameter $m \geq 1$ is
\begin{align}\label{eq:curvature}
\sigma_{s,m}(f)=\max_{t \in \mathcal{S}, 0<|t|\leq m} \left\{1-\frac{f(t \oplus s)-f(s)}{f(t)}\right\}.
\end{align}
\end{definition}

For a monotone submodular function, its curvature characterizes the degree of submodularity. When $f$ is clear, we will use $\sigma_{s,m}$ for short. We then make the following observations:
\begin{remark}\label{remark1}
For any monotone submodular function $f: \mathcal{S} \rightarrow \mathbb{R}_{+}$, it holds that\vspace{-1em}
\begin{itemize}
\item[(1)] $\sigma_{s,m}$ increases with $m$ for any $s$;
\item[(2)] $\sigma_{s,m} \geq 0$ for any $s$ and $m \geq |s|-1$, because by letting $t$ in Eq.~(\refeq{eq:curvature}) be $(s_1,\ldots,s_{|s|-1})$, we have
\begin{align}
f(t \oplus s)-f(t) &=\sum\limits^{|s|}_{i=1} f(t \oplus (s_1,\ldots,s_i)) -f(t \oplus (s_1,\ldots,s_{i-1}))\\
&\leq \sum\limits^{|s|}_{i=1} f((s_1,\ldots,s_i)) -f((s_1,\ldots,s_{i-1}))=f(s),
\end{align}
where the inequality holds by the submodularity (which can be any one in Definitions~\ref{def-strong-sub}--\ref{def-pre-sub}) of~$f$ since for any $1 \leq i \leq |s|$, $(s_1,\ldots,s_{i-1}) \sqsubseteq_{\mathrm{prefix}} t \oplus (s_1,\ldots,s_{i-1})=(s_1,\ldots,s_{|s|-1},s_1,\ldots,s_{i-1})$.
\end{itemize}
\end{remark}

\subsection{Problem Classes}\label{sec-problemclass}

For monotone submodular maximization over sequences, we will study the problem of Eq.~(\refeq{eq-sequence}), which is also the only one studied so far. Given all items $V=\{v_1,v_2,\ldots,v_n\}$, a monotone submodular function $f: \mathcal{S} \rightarrow \mathbb{R}$ and a budget $k \in \mathbb{Z}_+$, the goal is to select a sequence $s$ such that $f$ is maximized with the constraint $|s|\leq k$. Previous studies~\citep{alaei2021maximizing,bernardini2020through,streeter2008online,tschiatschek2017selecting,zhang2016string} considered the problems with different combinations of monotonicity and submodularity, as presented in Definitions~\ref{def:prob-subsequence}--\ref{def:prob-weak} and~\ref{def:prob-dag}.

In~\citep{alaei2021maximizing}, the problem when the objective function $f$ satisfies the subsequence monotonicity in Definition~\ref{def-subseq-mon} and subsequence submodularity in Definition~\ref{def-subseq-sub} was considered, as presented in Definition~\ref{def:prob-subsequence}. It has been proved that the greedy algorithm can achieve a $(1-1/e)$-approximation guarantee~\citep{alaei2021maximizing}, i.e., the output sequence $s$ satisfies $f(s) \geq (1-1/e) \cdot \mathrm{OPT}$, where $\mathrm{OPT}$ denotes the optimal function value. As described in Algorithm~\ref{alg:greedy}, the greedy algorithm iteratively appends one item with the largest improvement on $f$ to the end of the current sequence.

\begin{definition}[Maximizing Subsequence Monotone Submodular Functions~\citep{alaei2021maximizing}]\label{def:prob-subsequence}
Given a subsequence monotone and subsequence submodular function $f: \mathcal{S} \rightarrow \mathbb{R}_{+}$ and a budget $k\in \mathbb{Z}_+$, to find a sequence of at most $k$ items maximizing $f$, i.e.,
\begin{align}\label{eq:prob-subsequence}
\mathop{\arg\max}\nolimits_{s\in \mathcal{S}} f(s) \quad \text{s.t.}\quad |s|\leq k.
\end{align}
\end{definition}

\begin{algorithm}[h!]\caption{Greedy Algorithm~\citep{alaei2021maximizing}}
\label{alg:greedy}
\textbf{Input}: all items $V=\{v_1,v_2,\ldots,v_n\}$, a monotone submodular function $f:\mathcal{S} \rightarrow \mathbb{R}_+$ and a budget $k$\\
\textbf{Output}: a sequence $s \in \mathcal{S}$ with $|s| = k$\\
\textbf{Process}:
\begin{algorithmic}[1]
\STATE Let $s=\emptyset$ and $i=0$;
\STATE \textbf{repeat}
\STATE \quad $v^*=\arg\max_{v \in V} f(s \oplus v)$;
\STATE \quad $s=s \oplus v^*$, and $i=i+1$
\STATE \textbf{until} $i=k$
\STATE \textbf{return} $s$
\end{algorithmic}
\end{algorithm}

In~\citep{streeter2008online}, a more general problem was considered by relaxing the monotonicity and submodularity of~$f$. As presented in Definition~\ref{def:prob-presuf}, $f$ is required to satisfy the prefix monotonicity in Definition~\ref{def-pre-mon}, suffix monotonicity in Definition~\ref{def-suf-mon} and prefix submodularity in Definition~\ref{def-pre-sub}. As shown in Figure~\ref{fig_relation-mon-and-sub}, both prefix and suffix monotonicity are weaker than subsequence monotonicity, and prefix submodularity is weaker than subsequence submodularity. This implies that the problem in Definition~\ref{def:prob-presuf} is more general than that in Definition~\ref{def:prob-subsequence}. It has been proved that the greedy algorithm in Algorithm~\ref{alg:greedy} can still achieve the $(1-1/e)$-approximation guarantee~\citep{streeter2008online}.

\begin{definition}[Maximizing Prefix$+$Suffix Monotone and Prefix Submodular Functions~\citep{streeter2008online}]\label{def:prob-presuf}
Given a prefix monotone, suffix monotone and prefix submodular function $f: \mathcal{S} \rightarrow \mathbb{R}_{+}$ and a budget $k \in \mathbb{Z}_+$, to find a sequence of at most $k$ items maximizing $f$, i.e.,
\begin{align}\label{eq:prob-presuf}
\mathop{\arg\max}\nolimits_{s\in \mathcal{S}} f(s) \quad \text{s.t.}\quad |s|\leq k.
\end{align}
\end{definition}

In~\citep{zhang2016string}, the monotonicity of the objective function $f$ is further relaxed. As presented in Definition~\ref{def:prob-pre}, $f$ is required to satisfy the prefix monotonicity and prefix submodularity, while the suffix monotonicity is not necessarily satisfied. Thus, this problem is more general than that in Definition~\ref{def:prob-presuf}. The left subfigure in Figure~\ref{fig_relation-mon-sub} shows the relationship among the problems in Definitions~\ref{def:prob-subsequence}--\ref{def:prob-pre}. By utilizing the notion of curvature in Definition~\ref{def-strcur}, it has been proved that the greedy algorithm achieves an approximation guarantee of $(1/\sigma_{o,k})(1-e^{-\sigma_{o,k}})$~\citep{zhang2016string}, where $o$ denotes an optimal sequence of Eq.~(\refeq{eq:prob-pre}), i.e., $f(o)=\mathrm{OPT}$.

\begin{definition}[Maximizing Prefix Monotone Submodular Functions~\citep{zhang2016string}]\label{def:prob-pre}
Given a prefix monotone and prefix submodular function $f: \mathcal{S} \rightarrow \mathbb{R}_{+}$ and a budget $k\in \mathbb{Z}_+$, to find a sequence of at most $k$ items maximizing $f$, i.e.,
\begin{align}\label{eq:prob-pre}
\mathop{\arg\max}\nolimits_{s\in \mathcal{S}} f(s) \quad \text{s.t.}\quad |s|\leq k.
\end{align}
\end{definition}

In~\citep{bernardini2020through}, the problem when the objective function $f$ satisfies the weak monotonicity in Definition~\ref{def-weak-mon} and strong submodularity in Definition~\ref{def-strong-sub} was considered. Compared with the problem in Definition~\ref{def:prob-pre}, the monotonicity and submodularity of $f$ are relaxed and strengthened, respectively, because weak monotonicity is weaker than prefix monotonicity while strong submodularity is stronger than prefix submodularity, as shown in Figure~\ref{fig_relation-mon-and-sub}. It has been proved that the greedy algorithm fails to achieve a constant approximation guarantee, while the generalized greedy algorithm can achieve an approximation guarantee of $1-1/e$~\citep{bernardini2020through}. As presented in Algorithm~\ref{alg:general-greedy}, the generalized greedy algorithm iteratively augments the sequence by inserting an item with the largest marginal gain on $f$. Note that the greedy algorithm in Algorithm~\ref{alg:greedy} can only insert a new item into the end of the sequence, while the generalized greedy algorithm allows a new item to be inserted into any position of the sequence.

\begin{definition}[Maximizing Weakly Monotone and Strongly Submodular Functions~\citep{bernardini2020through}]\label{def:prob-weak}
Given a weakly monotone and strongly submodular function $f: \mathcal{S} \rightarrow \mathbb{R}_{+}$ and a budget $k\in \mathbb{Z}_+$, to find a sequence of at most $k$ items maximizing $f$, i.e.,
\begin{align}\label{eq:prob-weak}
\mathop{\arg\max}\nolimits_{s\in \mathcal{S}} f(s) \quad \text{s.t.}\quad |s|\leq k.
\end{align}
\end{definition}

\begin{algorithm}[h!]\caption{Generalized Greedy Algorithm~\citep{bernardini2020through}}
\label{alg:general-greedy}
\textbf{Input}: all items $V=\{v_1,v_2,\ldots,v_n\}$, a weakly monotone and strongly submodular function $f:\mathcal{S} \rightarrow \mathbb{R}_+$ and a budget $k$\\
\textbf{Output}: a sequence $s \in \mathcal{S}$ with $|s| = k$\\
\textbf{Process}:
\begin{algorithmic}[1]
\STATE Let $s=\emptyset$ and $i=0$;
\STATE \textbf{repeat}
\STATE \quad $s'=\arg\max_{s': s \sqsubseteq_{\mathrm{subseq}} s', |s'|=|s|+1} f(s')$;
\STATE \quad $s=s'$, and $i=i+1$
\STATE \textbf{until} $i=k$
\STATE \textbf{return} $s$
\end{algorithmic}
\end{algorithm}

In~\citep{tschiatschek2017selecting}, a special class of objective functions, so-called DAG monotone submodular functions, was considered. As presented in Definition~\ref{def:dagms}, the items in a sequence cannot be repeated here, i.e., $\mathcal{S}=\{(s_1,s_2,\ldots,s_l) \mid s_i \in V, l \in \mathbb{Z}_{+}, \forall i\neq j: s_i \neq s_j\}$. It is assumed that there exists a directed acyclic graph $G=(V,E)$ (if not counting self-cycles) capturing the ordered preferences among items, where an edge $(v_i,v_j) \in E$ means that there is an additional utility when selecting $v_i$ before $v_j$; and the function $f$ value of a sequence $s$ can be determined by the set $E(s)$ of edges induced by $s$ and a corresponding monotone submodular set function $h: 2^{E} \rightarrow \mathbb{R}_+$. Note that for a set function $h: 2^{E} \rightarrow \mathbb{R}_+$, monotonicity implies $\forall X \subseteq Y \subseteq E: h(Y) \geq h(X)$, and submodularity implies $\forall X \subseteq Y \subseteq E, v \notin Y: h(X \cup \{v\})-h(X) \geq h(Y \cup \{v\}) - h(Y)$. A DAG monotone submodular function must satisfy the subsequence monotonicity, because $\forall s \sqsubseteq_{\mathrm{subseq}} t \in \mathcal{S}$, their sets of induced edges must satisfy $E(s) \subseteq E(t)$, and thus $h(E(s))\leq h(E(t))$, implying $f(s) \leq f(t)$. It has been shown that a DAG monotone submodular function does not necessarily satisfy the prefix submodular property~\citep{tschiatschek2017selecting}. The right subfigure in Figure~\ref{fig_relation-mon-sub} shows the relationship among the problems in Definitions~\ref{def:prob-pre},~\ref{def:prob-weak} and~\ref{def:prob-dag}. Note that the intersection of the objective function classes in Definitions~\ref{def:prob-pre} and~\ref{def:prob-weak} satisfies the prefix monotonicity and strong submodularity, and the intersection of that in Definitions~\ref{def:prob-weak} and~\ref{def:prob-dag} must be a subset of the former one, because a DAG monotone submodular function must be subsequence monotone and also prefix monotone.

\begin{definition}[DAG Monotone Submodularity]\label{def:dagms}
Given a directed acyclic graph (DAG) $G=(V,E)$ (if not counting self-cycles) modeling the ordered preferences among items, and $\mathcal{S}=\{(s_1,s_2,\ldots,s_l) \mid s_i \in V, l \in \mathbb{Z}_{+}, \forall i\neq j: s_i \neq s_j\}$, a sequence function $f:\mathcal{S} \rightarrow \mathbb{R}$ is DAG monotone submodular if \begin{align}\forall s \in \mathcal{S}: f(s)=h(E(s)),\end{align} where $h:2^{E} \rightarrow \mathbb{R}_{+}$ is a monotone submodular set function, and $E(s)=\{(s_i,s_j) \mid (s_i,s_j)\in E, i \leq j\}$ is the set of edges induced by the sequence $s$.
\end{definition}

\begin{definition}[Maximizing DAG Monotone Submodular Functions~\citep{tschiatschek2017selecting}]\label{def:prob-dag}
Given a DAG monotone submodular function $f: \mathcal{S} \rightarrow \mathbb{R}_{+}$ and a budget $k\in \mathbb{Z}_+$, to find a sequence of at most $k$ items maximizing $f$, i.e.,
\begin{align}\label{eq:prob-dag}
\mathop{\arg\max}\nolimits_{s\in \mathcal{S}} f(s) \quad \text{s.t.}\quad |s|\leq k.
\end{align}
\end{definition}

For the above problem, the greedy algorithm in Algorithm~\ref{alg:greedy} also fails to achieve a constant approximation guarantee~\citep{tschiatschek2017selecting}. Tschiatschek et al.~\cite{tschiatschek2017selecting} then developed the \textsc{OMegA} algorithm by exploiting the DAG property of the graph $G$, i.e., for each set of items, its optimal ordering can be computed by first computing a topological ordering of $G$ and then sorting the set of items according to that order. Let $V(Q)$ denote the set of items present in a set $Q$ of edges, and let $\textsc{Reorder}(\cdot)$ denote the optimal sequence for a set of items. As presented in Algorithm~\ref{alg:OMegA}, the \textsc{OMegA} algorithm greedily picks an edge instead of an item. It has been proved that the \textsc{OMegA} can achieve an approximation guarantee of $1-e^{-1/(2\Delta)}$~\citep{tschiatschek2017selecting}, where $\Delta=\min\{\Delta_{\text{in}}, \Delta_{\text{out}}\}$, and $\Delta_{\text{in}}$, $\Delta_{\text{out}}$ are the largest indegree and outdegree of the items in the graph $G$, respectively.

\begin{algorithm}[h!]\caption{\textsc{OMegA} Algorithm~\citep{tschiatschek2017selecting}}
\label{alg:OMegA}
\textbf{Input}: a directed acyclic graph $G=(V,E)$, a DAG monotone submodular function $f:\mathcal{S} \rightarrow \mathbb{R}_+$ and a budget $k$\\
\textbf{Output}: a sequence $s \in \mathcal{S}$ with $|s| \leq k$\\
\textbf{Process}:
\begin{algorithmic}[1]
\STATE Let $Q=\emptyset$;
\WHILE {$\exists e \in E\setminus Q$ such that $|V(Q \cup \{e\})| \leq k$}
\STATE \quad $C=\{e \in E\setminus Q \mid |V(Q \cup \{e\})| \leq k\}$;
\STATE \quad $e^*=\arg\max_{e \in C} f(\textsc{Reorder}(V(Q \cup \{e\})))$;
\STATE \quad $Q=Q \cup \{e^*\}$
\ENDWHILE
\STATE $s=\textsc{Reorder}(V(Q))$
\STATE \textbf{return} $s$
\end{algorithmic}
\end{algorithm}

Note that the size of the search space is $\binom{n}{k}$ for selecting a subset with $k$ items from a total set of $n$ items, while by considering sequences instead of subsets, the search space $\mathcal{S}$ becomes much larger. The size is $n^k$ if allowing repeated items in a sequence, and is $k! \cdot \binom{n}{k}$ if not, either of which is exponentially larger w.r.t. $k$.

\section{Multi-objective Evolutionary Algorithms}\label{sec-moea}

To examine the approximation performance of EAs solving the problem classes of monotone submodular maximization over sequences, we consider a simple multi-objective EA, i.e., GSEMO. Compared with the original version~\citep{Laumanns04}, the GSEMO here starts from the empty sequence instead of a randomly chosen one, and updates the mutation operator accordingly to be suitable for sequences.

The original problem Eq.~(\refeq{eq-sequence}) is reformulated as
\begin{equation}
\begin{aligned}\label{def-CO-BO}
\arg\max\nolimits_{s \in \mathcal{S}}& \quad  \big(f_1(s),\;f_2(s)\big),
\end{aligned}
\end{equation}
\begin{equation}
\text{where } \begin{aligned}
f_1(s) = \begin{cases}
	-\infty, & |s| \geq 2k\\
	f(s), &\text{otherwise}
\end{cases},\quad
f_2(s) =-|s|.
\end{aligned}
\end{equation}
That is, the objective function $f$ is to be maximized and meanwhile the sequence length $|s|$ is to be minimized. The goal of setting $f_1$ to $-\infty$ for $|s| \geq 2k$ is to exclude overly infeasible sequences, the length of which is at least $2k$. In fact, only feasible sequences are needed in theoretical analyses. That is, it is sufficient to set $f_1$ to $-\infty$ for $|s| > k$. The reason of setting $f_1$ to $-\infty$ for $|s| \geq 2k$ is to allow infeasible sequences (i.e., sequences with $k<|s|<2k$) with small constraint violation degree to participate in the evolutionary process, which may bring performance improvement in practice. We will compare these two settings in the experiments.

In the bi-objective setting, both the two objective values have to be considered for comparing two sequences $s$ and $s'$, and the domination relationship in Definition~\ref{def_Domination} is often used.
\begin{definition}[Domination]\label{def_Domination}
For two sequences $s$ and $s'$,\vspace{-1em}
\begin{itemize}
  \item $s$ \emph{weakly dominates} $s'$ (i.e., $s$ is \emph{better} than $s'$, denoted as $s \succeq s'$) if $f_1(s) \geq f_1(s')$ and $f_2(s) \geq f_2(s')$;
  \item $s$ \emph{dominates} $s'$ (i.e., $s$ is \emph{strictly better} than $s'$, denoted as $s \succ s'$) if $s\succeq s'$ and either $f_1(s) > f_1(s')$ or $f_2(s) > f_2(s')$.
\end{itemize}\vspace{-1em}
\end{definition}
If neither $s$ weakly dominates $s'$ (i.e., $s \succeq s'$) nor $s'$ weakly dominates $s$ (i.e., $s' \succeq s$), $s$ and $s'$ are \emph{incomparable}.

The procedure of GSEMO is described in Algorithm~\ref{alg:POSeqSel}. Starting from the empty sequence $\emptyset$ (line~1), it iteratively tries to improve the quality of the sequences in the population $P$ (lines~2--8). In each iteration, a new sequence $s'$ is generated by applying the mutation operator to an archived sequence $s$, which is randomly selected from the current $P$ (lines~3--4); if $s'$ is not dominated by any archived sequence (line~5), it will be added into $P$, and meanwhile those archived sequences weakly dominated by $s'$ will be removed (line~6). Note that the domination-based comparison makes the population $P$ always contain incomparable sequences.

\begin{algorithm}[ht!]
\caption{GSEMO Algorithm}
\label{alg:POSeqSel}
\textbf{Input}: all items $V=\{v_1,v_2,\ldots,v_n\}$, a monotone submodular function $f:\mathcal{S} \rightarrow \mathbb{R}$ and a budget $k$\\
\textbf{Output}: a sequence $s \in \mathcal{S}$ with $|s| \leq k$\\
\textbf{Process}:
\begin{algorithmic}[1]
\STATE Let $P=\{\emptyset\}$;
\WHILE{stopping criterion not satisfied}
\STATE Select a sequence $s$ from $P$ uniformly at random;
\STATE Generate a new sequence $s'$ by applying mutation in Definition~\ref{def:mutate} to $s$;
\STATE \textbf{if} $\nexists\, t \in P$ such that $t \succ s'$ \textbf{then}
\STATE \quad $P = (P \setminus  \{t \in P \mid s' \succeq t\})\cup\{s'\}$
\STATE \textbf{end if}
\ENDWHILE
\STATE \textbf{return} $\arg\max_{s\in P: |s| \leq k} f(s)$
\end{algorithmic}
\end{algorithm}

To generate a new sequence $s'$ from $s$ in line~4 of Algorithm~\ref{alg:POSeqSel}, the mutation operator in Definition~\ref{def:mutate} applies the insertion or deletion operator uniformly at random (i.e., each with probability $1/2$) and repeats this process $r$ times independently, where the number $r$ is determined by the Poisson distribution with parameter $\lambda=1$. Note that the way of using a Poisson distributed random variable to determine the number of local operators performed by mutation has been used for permutation and tree spaces~\citep{durrett2011computational,scharnow2005analysis}. The insertion operator in Definition~\ref{def:insert} inserts a randomly chosen item into a randomly chosen position of the sequence. The deletion operator in Definition~\ref{def:delete} deletes a randomly chosen item of the sequence. Note that for the empty sequence, the deletion operator will keep it unchanged; when not allowing repeated items, the insertion operator will keep a sequence of length $n$ (i.e., the maximum length) unchanged.

\begin{definition}[Mutation]\label{def:mutate}
Given a sequence $s \in \mathcal{S}$, apply the insertion in Definition~\ref{def:insert} or deletion in Definition~\ref{def:delete} operator uniformly at random, and repeat this process independently $r$ times, where $r$ is a random number sampled from the Poisson distribution with parameter $\lambda=1$.
\end{definition}

\begin{definition}[Insertion]\label{def:insert}
Given a sequence $s \in \mathcal{S}$, if allowing repeated items, the insertion operator first randomly selects an item $v$ from $V$ and then inserts $v$ into a randomly chosen position of $s$. That is, a new sequence $(s_1,\ldots,s_{i-1},v,s_{i},\ldots,s_{|s|})$ is generated, where $v$ is uniformly chosen from $V$ at random, and $i$ is uniformly chosen from $\{1,2,\ldots,|s|+1\}$ at random. If not allowing repeated items, the item $v$ is randomly selected from $V \setminus V(s)$, where $V(s)$ denotes the set of items appearing in the sequence $s$.
\end{definition}

\begin{definition}[Deletion]\label{def:delete}
Given a sequence $s \in \mathcal{S}$, the deletion operator deletes a randomly chosen item of $s$. That is, a new sequence $(s_1,\ldots,s_{i-1},s_{i+1},\ldots,s_{|s|})$ is generated, where $i$ is uniformly chosen from $\{1,2,\ldots,|s|\}$ at random.
\end{definition}

The GSEMO~\citep{Laumanns04} was originally used for optimizing pseudo-Boolean functions, and employs the bit-wise mutation operator to generate new solutions. The bit-wise mutation operator flips each bit of a Boolean vector with probability $1/n$, which is equivalent to first selecting a number $r$ randomly from the binomial distribution $\mathcal{B}(n,1/n)$ and then flipping $r$ randomly chosen bits of the Boolean vector. Note that the Poisson distribution with parameter $\lambda=1$ is an approximation to the binomial distribution $\mathcal{B}(n,1/n)$. Furthermore, flipping a bit value from 0 to 1 and from 1 to 0 can be viewed as inserting and deleting an item, respectively. Thus, the mutation operator in Definition~\ref{def:mutate} is actually a natural extension of the widely used bit-wise mutation operator over Boolean vectors.

When the GSEMO terminates, it will output the best sequence (i.e., having the largest $f$ value) satisfying the length constraint in the population $P$ (line~9). Thus, we can see that bi-objective optimization here is used as an intermediate step to solve the original single-objective optimization problem, and what we really concern is the quality of the best sequence w.r.t. the original single-objective problem, in the population found by the GSEMO, rather than the quality of the population w.r.t. the transformed bi-objective problem.

The number $T$ of iterations run by the GSEMO could affect the quality of the produced sequence. In the following three sections, we will analyze the approximation guarantee as well as the required number of iterations for the GSEMO solving the problem classes in Definitions~\ref{def:prob-subsequence}--\ref{def:prob-weak} and~\ref{def:prob-dag}. Note that we will use the theoretically derived value of $T$ to run the GSEMO in the experiments.

\section{Theoretical Analysis on Maximizing Prefix Monotone Submodular Functions}\label{sec-theory1}

Let $\mathrm{OPT}$ denote the optimal function value, and let $\mathbb{E}[T]$ denote the expected number of iterations required by the GSEMO to achieve a desired approximation guarantee for the first time. First, we prove in Theorem~\ref{theo-prefix} that for the problem class of maximizing prefix monotone submodular functions in Definition~\ref{def:prob-pre}, the GSEMO with $\mathbb{E}[T] \leq 2ek^2(k+1)n$ can achieve a $(1/\sigma_{o,k-1})(1-e^{-\sigma_{o,k-1}})$-approximation guarantee, which reaches the best known one previously obtained by the greedy algorithm~\citep{zhang2016string}. Note that $o$ denotes an optimal sequence of Eq.~(\refeq{eq:prob-pre}), and $\sigma_{o,k-1} \geq 0$ by Remark~\ref{remark1} since $|o|\leq k$.

The proof of Theorem~\ref{theo-prefix} relies on Lemma~\ref{lem-prob2-onestep}, i.e., for any sequence $s$, there always exists one item whose addition to the end of $s$ can bring an increment on $f$, roughly proportional to the current distance to the optimum.

\begin{lemma}\label{lem-prob2-onestep}
Let $f:\mathcal{S} \rightarrow \mathbb{R}_+$ be a prefix monotone and prefix submodular function. For any sequence $s \in \mathcal{S}$, there exists one item $v \in V$ such that \begin{align}f(s \oplus v)-f(s) \geq (\mathrm{OPT}-\sigma_{o,|s|} \cdot f(s))/k,\end{align}
where $o$ denotes an optimal sequence of Eq.~(\refeq{eq:prob-pre}), i.e., $f(o)=\mathrm{OPT}$, and $\sigma_{o,|s|}$ is the curvature in Definition~\ref{def-strcur}.
\end{lemma}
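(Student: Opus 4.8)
The plan is to sandwich the quantity $f(s \oplus o) - f(s)$, where $o$ is an optimal sequence, between two bounds: a lower bound coming directly from the definition of curvature, and an upper bound obtained by telescoping the insertion of the items of $o$ and invoking prefix submodularity. The averaging over the at most $k$ items of $o$ then isolates a single best item achieving the claimed marginal gain.

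First I would establish the lower bound. Assume $s \neq \emptyset$, so that $0 < |s| \leq |s|$ makes $s$ an admissible choice for the maximizing sequence $t$ in Definition~\ref{def-strcur}. Taking $t = s$ in Eq.~(\refeq{eq:curvature}) yields
\begin{align}
\sigma_{o,|s|} \geq 1 - \frac{f(s \oplus o) - f(o)}{f(s)},
\end{align}
which rearranges to $f(s \oplus o) - f(o) \geq (1 - \sigma_{o,|s|}) \cdot f(s)$, and hence
\begin{align}
f(s \oplus o) - f(s) \geq f(o) - \sigma_{o,|s|} \cdot f(s) = \mathrm{OPT} - \sigma_{o,|s|} \cdot f(s).
\end{align}
For the edge case $s = \emptyset$, normalization gives $f(s) = 0$, so the target right-hand side collapses to $\mathrm{OPT}/k$ and the identity $f(s \oplus o) - f(s) = f(o) = \mathrm{OPT}$ supplies the same lower bound without needing the (then vacuous) curvature; the telescoping step below still applies.

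Next I would establish the upper bound by telescoping. Writing $o = (o_1, \ldots, o_{|o|})$, I decompose
\begin{align}
f(s \oplus o) - f(s) = \sum\nolimits_{i=1}^{|o|} \big( f(s \oplus (o_1,\ldots,o_i)) - f(s \oplus (o_1,\ldots,o_{i-1})) \big).
\end{align}
Since $s \sqsubseteq_{\mathrm{prefix}} s \oplus (o_1,\ldots,o_{i-1})$ for each $i$, prefix submodularity (Definition~\ref{def-pre-sub}) applied with $v = o_i$ bounds each summand by $f(s \oplus o_i) - f(s)$. Letting $v^* = \arg\max_{v \in V}\, (f(s \oplus v) - f(s))$, each term is at most $f(s \oplus v^*) - f(s)$, and using $|o| \leq k$ gives $f(s \oplus o) - f(s) \leq k \cdot (f(s \oplus v^*) - f(s))$.

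Combining the two bounds yields $\mathrm{OPT} - \sigma_{o,|s|} \cdot f(s) \leq k \cdot (f(s \oplus v^*) - f(s))$, which is the claim with $v = v^*$. The step I expect to require the most care is the lower bound: one must recognize that the curvature $\sigma_{o,|s|}$ is defined relative to $o$ with parameter equal to $|s|$, so that $s$ is an eligible witness in the maximum, and one must keep track of the degenerate cases ($s = \emptyset$, where normalization rescues the argument, and any $s$ with $f(s) = 0$, where the curvature term again vanishes). The telescoping step is routine once the correct prefix relation $s \sqsubseteq_{\mathrm{prefix}} s \oplus (o_1,\ldots,o_{i-1})$ is identified.
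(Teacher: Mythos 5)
Your proposal is correct and follows essentially the same argument as the paper's proof: telescoping $f(s \oplus o)-f(s)$ and bounding each summand via prefix submodularity with $s \sqsubseteq_{\mathrm{prefix}} s \oplus (o_1,\ldots,o_{i-1})$, instantiating the curvature definition with $t=s$ to get the lower bound, and handling $s=\emptyset$ through normalization; the paper merely presents the two bounds in the opposite order. The only detail you leave implicit is that relaxing $|o|$ to $k$ in the upper bound requires $f(s \oplus v^*)-f(s) \geq 0$, which holds by prefix monotonicity, exactly as the paper records the analogous fact $f(s \oplus o)-f(s) \geq 0$ when it divides by $|o|$ and then by $k$.
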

\begin{proof}
For any $s \in \mathcal{S}$, we have
\begin{align}\label{eq:mid1}
f(s \oplus o)-f(s)&=\sum\limits^{|o|}_{i=1} (f(s \oplus (o_1,\ldots,o_i))-f(s \oplus (o_1,\ldots,o_{i-1})))\\
&\leq \sum\limits^{|o|}_{i=1} (f(s \oplus o_i)-f(s)),
\end{align}
where the inequality holds by $s \sqsubseteq_{\mathrm{prefix}} s \oplus (o_1,\ldots,o_{i-1})$ and the prefix submodularity of $f$ (i.e., Definition~\ref{def-pre-sub}). Let $V(o)$ denote the set of items appearing in the sequence $o$, and $v^* \in \arg\max_{v \in V(o)} f(s \oplus v)$. Then, we have $f(s \oplus o)-f(s) \leq |o|\cdot (f(s \oplus v^*)-f(s))$, implying
\begin{align}\label{eq:mid2}
f(s \oplus v^*)-f(s) &\geq (f(s \oplus o)-f(s))/|o|\\
&\geq (f(s \oplus o)-f(s))/k,
\end{align}
where the last inequality holds by $|o| \leq k$ and $f(s \oplus o) -f(s) \geq 0$. Note that $f(s \oplus o) -f(s) \geq 0$ holds because $f$ is prefix monotone and $s\sqsubseteq_{\mathrm{prefix}} s \oplus o$.

By the definition of curvature (i.e., Definition~\ref{def-strcur}), we get
\begin{align}
\sigma_{o,|s|}=\max_{t \in \mathcal{S}, 0<|t|\leq |s|} \left\{1-\frac{f(t \oplus o)-f(o)}{f(t)}\right\}\geq 1-\frac{f(s \oplus o) -f(o)}{f(s)},\end{align}
leading to \begin{align}\label{eq:mid6}f(s \oplus o) \geq f(o)+(1-\sigma_{o,|s|})\cdot f(s)=\mathrm{OPT}+(1-\sigma_{o,|s|})\cdot f(s).\end{align} By applying Eq.~(\refeq{eq:mid6}) to Eq.~(\refeq{eq:mid2}), the lemma holds. Note that $\sigma_{o,|s|}$ is not defined for $s=\emptyset$, but the lemma still holds by applying $f(\emptyset)=0$ to Eq.~(\ref{eq:mid2}).
\end{proof}

\begin{theorem}\label{theo-prefix}
For the problem class of maximizing prefix monotone submodular functions in Definition~\ref{def:prob-pre}, the GSEMO with $\mathbb{E}[T] \leq 2ek^2(k+1)n$ finds a sequence $s \in \mathcal{S}$ with $|s| \leq k$ and \begin{align}f(s) \geq  (1/\sigma_{o,k-1})(1-e^{-\sigma_{o,k-1}})\cdot \mathrm{OPT},\end{align}
where $o$ denotes an optimal sequence of Eq.~(\refeq{eq:prob-pre}), i.e., $f(o)=\mathrm{OPT}$, and $\sigma_{o,k-1}$ is the curvature in Definition~\ref{def-strcur}.
\end{theorem}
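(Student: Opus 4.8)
The plan is to track the population with a single integer potential and show it rises to $k$ within the claimed expected time. Write $\sigma := \sigma_{o,k-1}$, and note $\sigma \in [0,1]$ by Remark~\ref{remark1}. For $j \in \{0,1,\ldots,k\}$ I would introduce the threshold $g_j = (1/\sigma)(1-(1-\sigma/k)^j)\cdot\mathrm{OPT}$, so that $g_0 = 0$, the sequence $(g_j)$ is non-decreasing, and it obeys the recurrence $g_{j+1} = (1-\sigma/k)\,g_j + \mathrm{OPT}/k$. Combining Lemma~\ref{lem-prob2-onestep} with the monotonicity of the curvature in its second argument (Remark~\ref{remark1}(1)), every sequence $s$ with $|s|\le k-1$ admits an item $v$ whose appending gives $f(s\oplus v) \ge (1-\sigma/k)f(s) + \mathrm{OPT}/k$, because $\sigma_{o,|s|}\le\sigma$ and $f(s)\ge 0$. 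Hence if $f(s)\ge g_j$ with $|s|\le j\le k-1$, then $f(s\oplus v)\ge g_{j+1}$ while $|s\oplus v|\le j+1$.

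Guided by this, I would define the potential $J_{\max} = \max\{ j\in\{0,\ldots,k\} : \exists\, s\in P,\ |s|\le j \text{ and } f(s)\ge g_j \}$, which is well defined and at least $0$ throughout, since the empty sequence (with $f(\emptyset)=0=g_0$) is never dominated and always stays in $P$. First I would argue that $J_{\max}$ never decreases: the $s$ realizing the current value can be discarded only when a newly accepted $s'$ weakly dominates it, and then $|s'|\le|s|\le J_{\max}$ and $f(s')\ge f(s)\ge g_{J_{\max}}$, so $s'$ still witnesses index $J_{\max}$ --- here the ``$|s|\le j$'' (rather than ``$|s|=j$'') form of the definition is exactly what absorbs a dominating but shorter replacement. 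Second, whenever $J_{\max}=i<k$, a single mutation step increases it: it suffices to select a witness $s$, draw $r=1$ from the Poisson distribution, choose insertion over deletion, and insert the item $v$ from the recurrence at the end of $s$; by the recurrence above the result witnesses index $i+1$. Since $|s|\le i\le k-1$, this event has probability at least $\frac{1}{|P|}\cdot\frac{1}{e}\cdot\frac{1}{2}\cdot\frac{1}{n}\cdot\frac{1}{|s|+1}\ge \frac{1}{(k+1)}\cdot\frac{1}{2ekn}$, where I use $|P|\le k+1$ (in the theoretical analysis it is enough to set $f_1=-\infty$ for $|s|>k$, whence every infeasible sequence is dominated by a feasible one and $P$ holds at most one sequence per length $0,\ldots,k$).

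Finally, I would sum the waiting times: $J_{\max}$ must increase $k$ times to reach $k$, and each increase needs in expectation at most $2ek(k+1)n$ iterations, so $\mathbb{E}[T]\le 2ek^2(k+1)n$. Once $J_{\max}=k$, some $s\in P$ with $|s|\le k$ has $f(s)\ge g_k = (1/\sigma)(1-(1-\sigma/k)^k)\mathrm{OPT}\ge (1/\sigma)(1-e^{-\sigma})\mathrm{OPT}$, using $(1-\sigma/k)^k\le e^{-\sigma}$; as the algorithm returns the best feasible sequence in $P$, the claimed guarantee follows. I expect the only delicate point to be the monotonicity of $J_{\max}$: one must phrase the potential so that a shorter, higher-value dominating solution cannot destroy progress, and must separately justify replacing the step-dependent curvature $\sigma_{o,|s|}$ by the single value $\sigma_{o,k-1}$ via Remark~\ref{remark1}. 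The probability bookkeeping and the estimate $(1-\sigma/k)^k\le e^{-\sigma}$ are then routine.
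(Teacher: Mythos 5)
Your overall strategy --- the potential $J_{\max}$ over the population, the recurrence $g_{j+1}=(1-\sigma/k)g_j+\mathrm{OPT}/k$ obtained from Lemma~\ref{lem-prob2-onestep} plus Remark~\ref{remark1}, the monotonicity argument for $J_{\max}$, and the single-insertion mutation event --- is exactly the paper's argument. However, there is a genuine gap in the probability bookkeeping: the population-size bound $|P|\leq k+1$. The theorem is about the GSEMO as defined via Eq.~(\refeq{def-CO-BO}), which sets $f_1(s)=-\infty$ only for $|s|\geq 2k$. Under that reformulation, an infeasible sequence with $k<|s|<2k$ keeps its true value $f_1(s)=f(s)$, which can strictly exceed the $f$-value of every feasible sequence in $P$; such a sequence is therefore \emph{not} dominated by any feasible one and survives in the population. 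Your parenthetical justification (``it is enough to set $f_1=-\infty$ for $|s|>k$'') silently replaces the algorithm of the theorem by the variant the paper calls GSEMO$_k$; a proof for that variant does not prove the stated theorem. For the actual GSEMO one only has $P_{\max}\leq 2k$ (one sequence per length $0,\ldots,2k-1$), which is the bound the paper uses.

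This is not merely cosmetic, because your coarse estimate $|s|+1\leq k$, combined with the corrected bound $P_{\max}\leq 2k$, gives a per-step success probability of only $1/(4ek^2n)$ and hence a total of $k\cdot 4ek^2n = 4ek^3n$ expected iterations, which exceeds the claimed $2ek^2(k+1)n=2ek^3n+2ek^2n$ whenever $k\geq 2$. The repair is the paper's finer accounting: when $J_{\max}=i$, the witness satisfies $|s|\leq i$, so the success probability is at least $1/\bigl(2e(i+1)n\cdot 2k\bigr)$, and summing the waiting times gives
\begin{align}
\sum_{i=0}^{k-1} 4ek(i+1)n \;=\; 4ekn\cdot\frac{k(k+1)}{2} \;=\; 2ek^2(k+1)n,
\end{align}
recovering the stated bound. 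Two smaller points: Remark~\ref{remark1} only yields $\sigma_{o,k-1}\geq 0$, not $\sigma_{o,k-1}\in[0,1]$ (the upper bound $1$ holds only under additional assumptions such as suffix monotonicity, and fortunately is not needed); and when the mutated sequence $s'$ is rejected because some $t\in P$ dominates it, you should note that this $t$ itself has $|t|\leq |s'|\leq i+1$ and $f(t)\geq f(s')\geq g_{i+1}$, so it serves as the witness for $J_{\max}\geq i+1$ --- otherwise the claim that the successful mutation event raises the potential is incomplete.
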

\begin{proof}
Let $J_{\max}$ denote the maximum value of $j \in \{0,1,\ldots,k\}$ such that in the population $P$ of GSEMO (i.e., Algorithm~\ref{alg:POSeqSel}), there exists a sequence $s$ with $|s| \leq j$ and $f(s) \geq \frac{1}{\sigma_{o,k-1}}(1-(1-\frac{\sigma_{o,k-1}}{k})^j) \cdot \mathrm{OPT}$. That is,
\begin{align}
&J_{\max}=\max\left\{j \in \{0,1,\ldots,k\} \mid\right.\\
&\qquad \qquad \qquad \left.\exists s \in P, \; |s| \leq j \wedge f(s) \geq \frac{1}{\sigma_{o,k-1}}\left(1-\left(1-\frac{\sigma_{o,k-1}}{k}\right)^j\right) \cdot \mathrm{OPT}\right\}.
\end{align}
We then only need to analyze the expected number of iterations until $J_{\max}=k$, because it implies that there exists one sequence $s$ in $P$ satisfying that $|s| \leq k$ and \begin{align}f(s) \geq \frac{1}{\sigma_{o,k-1}}\left(1-\left(1-\frac{\sigma_{o,k-1}}{k}\right)^k\right) \cdot \mathrm{OPT} \geq \frac{1}{\sigma_{o,k-1}}(1-e^{-\sigma_{o,k-1}}) \cdot \mathrm{OPT}.\end{align}

The initial value of $J_{\max}$ is 0, since the GSEMO starts from the empty sequence $\emptyset$. Assume that currently $J_{\max}=i <k$. Let $s$ be a corresponding sequence with the value $i$, i.e., $|s|\leq i$ and
\begin{align}\label{eq:inductive}
f(s)\geq \frac{1}{\sigma_{o,k-1}}\left(1-\left(1-\frac{\sigma_{o,k-1}}{k}\right)^i\right) \cdot \mathrm{OPT}.
\end{align}
It is easy to see that $J_{\max}$ cannot decrease, because deleting $s$ from $P$ (line~6 of Algorithm~\ref{alg:POSeqSel}) implies that $s$ is weakly dominated by the newly generated sequence $s'$, which must satisfy that $|s'| \leq |s|$ and $f(s') \geq f(s)$. By Lemma~\ref{lem-prob2-onestep}, we know that appending a specific item to the end of $s$ can generate a new sequence $s'$ with
\begin{align}\label{eq:mid7}
f(s')-f(s) \geq \frac{1}{k} \cdot (\mathrm{OPT}-\sigma_{o,|s|}\cdot f(s)),
\end{align}
implying
\begin{align}
f(s') &\geq \frac{1}{k}\cdot \mathrm{OPT}+\left(1-\frac{\sigma_{o,|s|}}{k}\right)\cdot f(s)\\
& \geq \frac{1}{k}\cdot \mathrm{OPT}+\left(1-\frac{\sigma_{o,k-1}}{k}\right)\cdot f(s)\\
& \geq \frac{1}{k}\cdot \mathrm{OPT}+\left(1-\frac{\sigma_{o,k-1}}{k}\right)\cdot \frac{1}{\sigma_{o,k-1}}\left(1-\left(1-\frac{\sigma_{o,k-1}}{k}\right)^i\right) \cdot \mathrm{OPT}\\
& = \frac{1}{\sigma_{o,k-1}}\left(1-\left(1-\frac{\sigma_{o,k-1}}{k}\right)^{i+1}\right)\cdot \mathrm{OPT},
\end{align}
where the second inequality holds because $\sigma_{o,|s|} \leq \sigma_{o,k-1}$, led by $\sigma_{o,m}$ increasing with $m$ in Remark~\ref{remark1} and $|s|\leq i<k$, and the third inequality holds by applying Eq.~(\refeq{eq:inductive}). Because $|s'|=|s|+1 \leq i+1$, $s'$ will be included into the population $P$; otherwise, $s'$ must be dominated by one sequence in $P$ (line~5 of Algorithm~\ref{alg:POSeqSel}), and this implies that $J_{\max}$ has already been larger than $i$, contradicting the assumption $J_{\max}=i$. After including $s'$ into $P$, $J_{\max} \geq i+1$. Let $P_{\max}$ denote the largest size of the population $P$ during the run of GSEMO. Thus, $J_{\max}$ can increase by at least 1 in one iteration with probability at least
\begin{align}\label{eq:mid13}
\frac{1}{P_{\max}} \cdot \frac{1}{e} \cdot \frac{1}{2} \cdot \frac{1}{n} \cdot \frac{1}{|s|+1} \geq \frac{1}{2e(i+1)nP_{\max}},\end{align}
where $1/P_{\max}$ is a lower bound on the probability of selecting $s$ in line~3 of Algorithm~\ref{alg:POSeqSel} due to uniform selection, $1/e$ is the probability of the number $r=1$ in Definition~\ref{def:mutate} (i.e., insertion or deletion is performed only once in mutation) according to the Poisson distribution with parameter $\lambda=1$, $1/2$ is the probability of performing insertion in mutation, and $(1/n)\cdot (1/(|s|+1))$ is the probability of selecting a specific item from $V$ and adding it to the end of $s$ in insertion (i.e., Definition~\ref{def:insert}). Then, it needs at most $2e(i+1)nP_{\max}$ iterations in expectation to increase $J_{\max}$. We thus get that the expected number of iterations until $J_{\max}=k$ is at most \begin{align}\sum\limits^{k-1}_{i=0} 2e(i+1)nP_{\max}=ek(k+1)nP_{\max}.\end{align}

By the procedure of GSEMO, we know that the sequences maintained in the population $P$ must be incomparable. Thus, each value of one objective can correspond to at most one sequence in $P$. Because the sequences with $|s| \geq 2k$ have $-\infty$ value on the first objective, they must be excluded from $P$. Thus, $P_{\max} \leq 2k$, implying that the expected number $\mathbb{E}[T]$ of iterations for finding a desired sequence is at most $2ek^2(k+1)n$. The theorem holds.
\end{proof}

Because the problem class of maximizing prefix$+$suffix monotone and prefix submodular functions in Definition~\ref{def:prob-presuf} is a subclass of that of maximizing prefix monotone submodular functions in Definition~\ref{def:prob-pre}, as shown in Figure~\ref{fig_relation-mon-sub}, we can directly apply Theorem~\ref{theo-prefix} to derive that the GSEMO with $\mathbb{E}[T] \leq 2ek^2(k+1)n$ achieves the approximation guarantee of $(1/\sigma_{o,k-1})(1-e^{-\sigma_{o,k-1}})$. According to the definition of curvature, i.e., Definition~\ref{def-strcur}, we have
\begin{align}
\sigma_{o,k-1}=\max_{t \in \mathcal{S}, 0<|t|\leq k-1} \left\{1-\frac{f(t \oplus o)-f(o)}{f(t)}\right\}.
\end{align}
As $f$ now satisfies the suffix monotonicity, $f(t \oplus o)-f(o) \geq 0$, and thus $\sigma_{o,k-1} \leq 1$. Because $(1/x)(1-e^{-x})$ is decreasing with $x$ when $x >0$, we have $(1/\sigma_{o,k-1})(1-e^{-\sigma_{o,k-1}}) \geq 1-1/e$. Thus, Theorem~\ref{theo-prefix-suffix} holds. Note that the obtained $(1-1/e)$-approximation guarantee reaches the best known one, which was previously obtained by the greedy algorithm~\citep{streeter2008online}.

\begin{theorem}\label{theo-prefix-suffix}
For the problem class of maximizing prefix$+$suffix monotone and prefix submodular functions in Definition~\ref{def:prob-presuf}, the GSEMO with $\mathbb{E}[T] \leq 2ek^2(k+1)n$ finds a sequence $s \in \mathcal{S}$ with $|s| \leq k$ and \begin{align}f(s) \geq  (1-1/e)\cdot \mathrm{OPT}.\end{align}
\end{theorem}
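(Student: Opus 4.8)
The plan is to obtain Theorem~\ref{theo-prefix-suffix} as an immediate corollary of Theorem~\ref{theo-prefix}. The function class in Definition~\ref{def:prob-presuf} (prefix monotone, suffix monotone, and prefix submodular) is a subclass of the prefix monotone submodular class in Definition~\ref{def:prob-pre}, since a prefix$+$suffix monotone function is in particular prefix monotone, and prefix submodularity is shared by both. Hence Theorem~\ref{theo-prefix} applies verbatim: the GSEMO with $\mathbb{E}[T] \leq 2ek^2(k+1)n$ finds a feasible sequence $s$ with $|s|\leq k$ and $f(s) \geq (1/\sigma_{o,k-1})(1-e^{-\sigma_{o,k-1}})\cdot \mathrm{OPT}$. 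The only remaining work is to show that, under the extra assumption of suffix monotonicity, this guarantee is at least $1-1/e$, which amounts to a bound on the curvature together with an elementary monotonicity fact.

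First I would bound the curvature. By Definition~\ref{def-strcur},
\[
\sigma_{o,k-1} = \max_{t \in \mathcal{S},\, 0 < |t| \leq k-1} \left\{1 - \frac{f(t \oplus o)-f(o)}{f(t)}\right\}.
\]
For every admissible $t$, the optimal sequence $o$ is a suffix of $t \oplus o$, i.e. $o \sqsubseteq_{\mathrm{suffix}} t \oplus o$, so suffix monotonicity (which is exactly the property absent in the prefix-only setting of Theorem~\ref{theo-prefix}) gives $f(t \oplus o)-f(o) \geq 0$. Since $f(t) \geq 0$ by the normalization $f(\emptyset)=0$ together with monotonicity, each term inside the maximum is at most $1$, and therefore $\sigma_{o,k-1} \leq 1$. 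By Remark~\ref{remark1} we also have $\sigma_{o,k-1} \geq 0$, so the curvature lies in $[0,1]$.

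Then I would invoke the monotonicity of $g(x) = (1/x)(1-e^{-x})$, which is decreasing on $x>0$ (equivalently $e^{-x}(x+1) < 1$ for $x>0$). Combined with $0 \leq \sigma_{o,k-1} \leq 1$, this yields $(1/\sigma_{o,k-1})(1-e^{-\sigma_{o,k-1}}) \geq g(1) = 1-1/e$, and substituting back gives $f(s) \geq (1-1/e)\cdot \mathrm{OPT}$, as claimed. One should also check the limiting case $\sigma_{o,k-1} \to 0$, where $g(x) \to 1$ by Taylor expansion, so the bound degrades gracefully to $1 \geq 1-1/e$ and the statement remains valid.

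I do not expect a genuine obstacle here: the substantive work — the potential-based $J_{\max}$ argument establishing the per-step progress of Eq.~(\refeq{eq:mid7}) and the running-time bound obtained from $P_{\max}\leq 2k$ — is already carried out in the proof of Theorem~\ref{theo-prefix}, and nothing in that argument needs to be revisited. The two points requiring care are purely local: the sign of $f(t\oplus o)-f(o)$, which is precisely where suffix monotonicity enters and upgrades the curvature bound from $\sigma_{o,k-1}\geq 0$ to $0\leq\sigma_{o,k-1}\leq 1$, and the decreasing behavior of $g$, which converts the curvature-dependent ratio into the clean constant $1-1/e$.
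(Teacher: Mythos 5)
Your proposal is correct and follows essentially the same route as the paper: derive the $(1/\sigma_{o,k-1})(1-e^{-\sigma_{o,k-1}})$ guarantee from Theorem~\ref{theo-prefix} via the subclass relationship, use suffix monotonicity to show $f(t\oplus o)-f(o)\geq 0$ and hence $\sigma_{o,k-1}\leq 1$, and conclude with the fact that $(1/x)(1-e^{-x})$ is decreasing for $x>0$. Your extra remarks (the lower bound $\sigma_{o,k-1}\geq 0$ from Remark~\ref{remark1} and the limiting case $\sigma_{o,k-1}\to 0$) are harmless refinements of the same argument.
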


Because the problem class of maximizing subsequence monotone submodular functions in Definition~\ref{def:prob-subsequence} is a subclass of that of maximizing prefix$+$suffix monotone and prefix submodular functions in Definition~\ref{def:prob-presuf}, as shown in Figure~\ref{fig_relation-mon-sub}, we can directly apply Theorem~\ref{theo-prefix-suffix} to derive Theorem~\ref{theo-subsequence}. Note that the $(1-1/e)$-approximation guarantee has already reached the best known one, previously obtained by the greedy algorithm~\citep{alaei2021maximizing}.

\begin{theorem}\label{theo-subsequence}
For the problem class of maximizing subsequence monotone submodular functions in Definition~\ref{def:prob-subsequence}, the GSEMO with $\mathbb{E}[T] \leq 2ek^2(k+1)n$ finds a sequence $s \in \mathcal{S}$ with $|s| \leq k$ and \begin{align}f(s) \geq  (1-1/e)\cdot \mathrm{OPT}.\end{align}
\end{theorem}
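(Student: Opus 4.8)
The plan is to exploit the nesting of function classes established in Section~\ref{sec-problemclass}, so that Theorem~\ref{theo-subsequence} becomes an immediate corollary of Theorem~\ref{theo-prefix-suffix} rather than requiring any fresh running-time or curvature analysis. The whole argument reduces to a class-containment check followed by a direct invocation of the already-proved guarantee.

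First I would verify the containment of the relevant properties. By Definition~\ref{def-subseq-mon}, a subsequence monotone function satisfies $f(s) \leq f(t)$ whenever $s \sqsubseteq_{\mathrm{subseq}} t$. Since a prefix and a suffix are each special cases of a subsequence, subsequence monotonicity immediately implies both prefix monotonicity (Definition~\ref{def-pre-mon}) and suffix monotonicity (Definition~\ref{def-suf-mon}). Comparing Definition~\ref{def-subseq-sub} with Definition~\ref{def-pre-sub}, subsequence submodularity requires the diminishing-returns inequality for every pair $s \sqsubseteq_{\mathrm{subseq}} t$, whereas prefix submodularity only demands it for pairs $s \sqsubseteq_{\mathrm{prefix}} t$; as prefix pairs form a subset of subsequence pairs, subsequence submodularity implies prefix submodularity. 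Hence any $f$ that is subsequence monotone and subsequence submodular is simultaneously prefix monotone, suffix monotone, and prefix submodular.

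Next I would conclude that every instance of the problem in Definition~\ref{def:prob-subsequence} is also a valid instance of the problem in Definition~\ref{def:prob-presuf}, sharing the same ground set $V$, budget $k$, and objective $f$. Because the GSEMO of Algorithm~\ref{alg:POSeqSel} and its mutation operator (Definition~\ref{def:mutate}) are defined independently of which sub-property $f$ enjoys, the guarantee of Theorem~\ref{theo-prefix-suffix}---namely that $\mathbb{E}[T] \leq 2ek^2(k+1)n$ suffices to find a sequence $s$ with $|s| \leq k$ and $f(s) \geq (1-1/e)\cdot\mathrm{OPT}$---transfers verbatim to the narrower class. This yields the claim. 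There is essentially no technical obstacle here: the only substantive point is the observation that the prefix and suffix relations are special cases of the subsequence relation, which is immediate from the definitions, so specializing the guarantee of Theorem~\ref{theo-prefix-suffix} to the smaller class in Definition~\ref{def:prob-subsequence} costs nothing and requires no re-derivation of the $J_{\max}$ progress argument.
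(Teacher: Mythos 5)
Your proposal is correct and follows exactly the paper's own route: the paper likewise derives Theorem~\ref{theo-subsequence} as an immediate corollary of Theorem~\ref{theo-prefix-suffix}, using the containment that subsequence monotone submodular functions are prefix monotone, suffix monotone, and prefix submodular. Your explicit verification of the class containment is a slightly more detailed spelling-out of what the paper delegates to Figure~\ref{fig_relation-mon-sub} and Section~\ref{sec-problemclass}, but the argument is the same.
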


\section{Theoretical Analysis on Maximizing Weakly Monotone and Strongly Submodular Functions}\label{sec-theory2}

Next, we prove in Theorem~\ref{theo-weak} that for the problem class of maximizing weakly monotone and strongly submodular functions in Definition~\ref{def:prob-weak}, the GSEMO with $\mathbb{E}[T] \leq 2ek^2(k+1)n$ can achieve a $(1-1/e)$-approximation guarantee, which reaches the best known one previously obtained by the generalized greedy algorithm~\citep{bernardini2020through}. The proof relies on Lemma~\ref{lem-prob1-onestep}, i.e., for any sequence $s$, there always exists one item whose insertion into a specific position of $s$ can bring an increment on $f$, proportional to the current distance to the optimum.

\begin{lemma}\label{lem-prob1-onestep}
Let $f:\mathcal{S} \rightarrow \mathbb{R}_+$ be a weakly monotone and strongly submodular function. For any sequence $s \in \mathcal{S}$, inserting a specific item $v \in V$ into a specific position of $s$ can generate a sequence $s'$ with
\begin{align}f(s')-f(s) \geq (\mathrm{OPT}-f(s))/k.\end{align}
\end{lemma}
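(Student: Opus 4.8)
The plan is to mirror the telescoping argument of Lemma~\ref{lem-prob2-onestep}, but to replace the single optimal sequence $o$ by a ``completion'' $w$ furnished by weak monotonicity, and to replace prefix submodularity by strong submodularity so that insertions at arbitrary interior positions can be charged. First I would invoke weak monotonicity (Definition~\ref{def-weak-mon}) with $o$ in the role of the first argument and $s$ in the role of the second: this yields a sequence $w$ with $o \sqsubseteq_{\mathrm{subseq}} w$, $s \sqsubseteq_{\mathrm{subseq}} w$, $|w| \leq |o|+|s|$, and $f(w) \geq f(o) = \mathrm{OPT}$. Since $|o| \leq k$, the number $m := |w|-|s|$ of items of $w$ lying outside a fixed embedding of $s$ satisfies $m \leq k$, and $f(w)-f(s) \geq \mathrm{OPT}-f(s)$.

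Next I would decompose the transformation of $s$ into $w$ into $m$ single insertions performed left to right, in the order the inserted items $u_1,\ldots,u_m$ occur in $w$. Writing $s^{(0)}=s$ and letting $s^{(j)}$ be obtained from $s^{(j-1)}$ by inserting $u_j$ at its position in $w$, the sequence $s^{(j-1)}$ equals $L_j \oplus R_j$, where $L_j$ collects all items of $w$ to the left of $u_j$ (all already present at step $j$) and $R_j$ collects the original items of $s$ lying to the right of $u_j$; thus $s^{(j)} = L_j \oplus u_j \oplus R_j$ and $f(w)-f(s) = \sum_{j=1}^{m}\big(f(L_j \oplus u_j \oplus R_j)-f(L_j \oplus R_j)\big)$. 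The crucial step is to charge each term to an insertion into $s$ itself: letting $s_L^{(j)}$ denote the original items of $s$ to the left of $u_j$, we have $s = s_L^{(j)} \oplus R_j$ and $s_L^{(j)} \sqsubseteq_{\mathrm{subseq}} L_j$, so strong submodularity (Definition~\ref{def-strong-sub}) applied with the common trailing block $R_j$ gives $f(s_L^{(j)} \oplus u_j \oplus R_j)-f(s) \geq f(L_j \oplus u_j \oplus R_j)-f(L_j \oplus R_j)$.

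Summing these inequalities yields $\mathrm{OPT}-f(s) \leq f(w)-f(s) \leq \sum_{j=1}^{m}\big(f(s_L^{(j)} \oplus u_j \oplus R_j)-f(s)\big)$, a sum of $m \leq k$ single-insertion gains into $s$. Taking $v$ together with its insertion position to be the one achieving the largest of these gains, and using that the maximum of $m$ numbers is at least their average, the resulting $s'$ satisfies $f(s')-f(s) \geq (\mathrm{OPT}-f(s))/m \geq (\mathrm{OPT}-f(s))/k$, where the last step uses $m \leq k$ together with $\mathrm{OPT} \geq f(s)$ (which holds for every feasible $s$, the only regime in which the lemma is invoked in Theorem~\ref{theo-weak}). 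I expect the main obstacle to be the position-bookkeeping in the second paragraph: one must order and locate the insertions so that at every step the part of the context lying to the right of the inserted item is \emph{exactly} the original suffix $R_j$ of $s$, since it is precisely this matching of the trailing blocks that makes strong submodularity --- which is stated with a common suffix $o$ --- directly applicable. Verifying that the left-to-right insertion order produces this suffix invariant, and that $s_L^{(j)} \sqsubseteq_{\mathrm{subseq}} L_j$ at each step, is the technical heart of the argument.
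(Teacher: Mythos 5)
Your proposal follows essentially the same route as the paper's proof: a completion $w$ of $s$ and $o$ obtained from weak monotonicity, a left-to-right telescoping in which the extra items of $w$ are inserted in order so that the context to the right of each inserted item is exactly a suffix of $s$, an application of strong submodularity with that matched trailing block, and an averaging step over at most $k$ insertion gains. Your intermediate sequences $s^{(j)}=L_j\oplus u_j\oplus R_j$ are precisely the paper's $w^{(l+1)}=w^{(l+1)}_{\mathrm{prefix}}\oplus w^*_{j_{l+1}}\oplus w^{(l+1)}_{\mathrm{suffix}}$, and your charging inequality is exactly Eq.~(\refeq{eq:mid9}).

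The one substantive difference is the choice of completion, and it costs you generality. Invoking weak monotonicity with $o$ as the first argument yields only $f(w)\geq \mathrm{OPT}$, so your final step, $(\mathrm{OPT}-f(s))/m\geq(\mathrm{OPT}-f(s))/k$, needs the sign condition $\mathrm{OPT}\geq f(s)$; as you note, this proves the lemma only for such $s$ (which does suffice for Theorem~\ref{theo-weak}, where the lemma is applied to population members of length less than $k$), whereas the lemma is stated for every $s\in\mathcal{S}$. The paper avoids the side condition by taking $w^*$ to be a maximizer of $f$ over all valid completions (Eq.~(\refeq{eq:mid11})): weak monotonicity applied to the pair $(s,o)$ then forces $f(w^*)\geq f(s)$, while applied to $(o,s)$ it forces $f(w^*)\geq \mathrm{OPT}$. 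Having both bounds, one first divides the nonnegative quantity $f(w^*)-f(s)$ by $|w^*|-|s|\leq k$, and only afterwards lower-bounds it by $\mathrm{OPT}-f(s)$, so no sign assumption is ever needed. If you replace your $w$ by this maximizing completion (or by the better of the two completions obtained by applying weak monotonicity in both orders), the rest of your argument goes through verbatim and yields the lemma as stated.
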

\begin{proof}
Let $o$ be an optimal sequence of Eq.~(\refeq{eq:prob-weak}), i.e., $f(o)=\mathrm{OPT}$. Given any $s \in \mathcal{S}$, let $w^*$ denote a sequence such that
\begin{align}\label{eq:mid11}
w^* \in \arg\max\nolimits_{w \in \mathcal{S}} f(w) \quad s.t. \quad s \sqsubseteq_{\mathrm{subseq}} w, o \sqsubseteq_{\mathrm{subseq}} w \; \text{and}\; |w| \leq |s|+|o|.
\end{align}
Because $s \sqsubseteq_{\mathrm{subseq}} w^*=(w^*_1,w^*_2,\ldots,w^*_{|w^*|})$, we can divide the indices $\{1,2,\ldots,|w^*|\}$ into two parts: $\{i_1,i_2,\ldots,i_{|s|}\}$ and $\{j_1,j_2,\ldots,j_{|w^*|-|s|}\}$, where $i_1<i_2<\cdots<i_{|s|}$ are the indices satisfying that $\forall l \in \{1,2,\ldots,|s|\}: w^*_{i_l}=s_l$, i.e., the subsequence $(w^*_{i_1},w^*_{i_2},\ldots,w^*_{i_{|s|}})$ of $w^*$ is equal to $s$, and $j_1<j_2<\cdots<j_{|w^*|-|s|}$ are the remaining indices. For any $l \in \{0,1,\ldots,|w^*|-|s|\}$, let $w^{(l)}$ denote the subsequence of $w^*$ by deleting the items $w^*_{j_{|w^*|-|s|}},w^*_{j_{|w^*|-|s|-1}},\ldots,w^*_{j_{l+1}}$. It is easy to see that $w^{(|w^*|-|s|)}=w^{*}$ and $w^{(0)}=s$. Thus, we have
\begin{align}\label{eq:mid8}
f(w^*)-f(s)&=\sum\limits^{|w^*|-|s|-1}_{l=0} f(w^{(l+1)})-f(w^{(l)}).
\end{align}
Compared with $w^{(l)}$, $w^{(l+1)}$ has one more item, i.e., $w^*_{j_{l+1}}$. For $w^{(l+1)}$, we use $w^{(l+1)}_{\mathrm{prefix}}$ and $w^{(l+1)}_{\mathrm{suffix}}$ to denote the prefix before $w^*_{j_{l+1}}$ and the suffix after $w^*_{j_{l+1}}$, respectively. That is, $w^{(l+1)}= w^{(l+1)}_{\mathrm{prefix}} \oplus w^*_{j_{l+1}} \oplus w^{(l+1)}_{\mathrm{suffix}}$. For $w^{(l)}$, it holds that $w^{(l)}=w^{(l+1)}_{\mathrm{prefix}} \oplus w^{(l+1)}_{\mathrm{suffix}}$. We generate two sequences by deleting the common items $w^*_{j_{1}},w^*_{j_{2}},\ldots,w^*_{j_{l}}$, which must appear in $w^{(l+1)}_{\mathrm{prefix}}$, from $w^{(l)}$ and $w^{(l+1)}$. The first generated sequence is just $s$, and the second one is a combination of $s$ and $w^*_{j_{l+1}}$, denoted as $s^{(l+1)}$. For $s^{(l+1)}$, we similarly use $s^{(l+1)}_{\mathrm{prefix}}$ and $s^{(l+1)}_{\mathrm{suffix}}$ to denote the prefix before $w^*_{j_{l+1}}$ and the suffix after $w^*_{j_{l+1}}$, respectively. That is, $s^{(l+1)}= s^{(l+1)}_{\mathrm{prefix}} \oplus w^*_{j_{l+1}} \oplus s^{(l+1)}_{\mathrm{suffix}}$. We also have $s^{(l+1)}_{\mathrm{prefix}} \oplus s^{(l+1)}_{\mathrm{suffix}}=s$. According to the way of generating $s^{(l+1)}$, it holds that $s^{(l+1)}_{\mathrm{prefix}} \sqsubseteq_{\mathrm{subseq}} w^{(l+1)}_{\mathrm{prefix}}$ and $s^{(l+1)}_{\mathrm{suffix}} = w^{(l+1)}_{\mathrm{suffix}}$. Thus, by the strong submodularity of $f$ (i.e., Definition~\ref{def-strong-sub}), we have $\forall l \in \{0,1,\ldots,|w^*|-|s|-1\}$,
\begin{align}\label{eq:mid9}
f(w^{(l+1)})-f(w^{(l)})&= f(w^{(l+1)}_{\mathrm{prefix}} \oplus w^*_{j_{l+1}} \oplus w^{(l+1)}_{\mathrm{suffix}})-f(w^{(l+1)}_{\mathrm{prefix}} \oplus w^{(l+1)}_{\mathrm{suffix}})
\\
&=f(w^{(l+1)}_{\mathrm{prefix}} \oplus w^*_{j_{l+1}} \oplus s^{(l+1)}_{\mathrm{suffix}})-f(w^{(l+1)}_{\mathrm{prefix}} \oplus s^{(l+1)}_{\mathrm{suffix}})\\
&\leq f(s^{(l+1)}_{\mathrm{prefix}} \oplus w^*_{j_{l+1}} \oplus s^{(l+1)}_{\mathrm{suffix}})-f(s^{(l+1)}_{\mathrm{prefix}} \oplus s^{(l+1)}_{\mathrm{suffix}})\\
&=f(s^{(l+1)})-f(s).
\end{align}
Applying Eq.~(\refeq{eq:mid9}) to Eq.~(\refeq{eq:mid8}) leads to
\begin{align}\label{eq:mid10}
f(w^*)-f(s)\leq \sum\limits^{|w^*|-|s|-1}_{l=0} (f(s^{(l+1)})-f(s)).
\end{align}
Let $s'=\arg\max_{s': s \sqsubseteq_{\mathrm{subseq}} s', |s'|=|s|+1} f(s')$. Because $s \sqsubseteq_{\mathrm{subseq}} s^{(l+1)}$ and $|s^{(l+1)}|=|s|+1$, we have $f(s^{(l+1)}) \leq f(s')$, implying
\begin{align}
f(w^*)-f(s)\leq (|w^*|-|s|) \cdot (f(s')-f(s)).
\end{align}
By the definition of $w^*$, i.e., Eq.~(\refeq{eq:mid11}), we have $|w^*| \leq |s|+|o| \leq |s|+k$. Furthermore, by Eq.~(\refeq{eq:mid11}) and the weak monotonicity of $f$ (i.e., Definition~\ref{def-weak-mon}), we have $f(w^*)\geq f(s)$. Thus,
\begin{align}
f(s')-f(s) \geq \frac{1}{k}\cdot (f(w^*)-f(s)).
\end{align}
Again, by Eq.~(\refeq{eq:mid11}) and the weak monotonicity of $f$, we have $f(w^*)\geq f(o)=\mathrm{OPT}$. Thus,
\begin{align}
f(s')-f(s) \geq \frac{1}{k}\cdot (\mathrm{OPT}-f(s)),
\end{align}
implying that the lemma holds.
\end{proof}

The proof of Theorem~\ref{theo-weak} is similar to that of Theorem~\ref{theo-prefix}. The main difference is that a different inductive inequality on $f$ is used in the definition of the quantity $J_{\max}$. For concise illustration, we will mainly show the difference in the proof of Theorem~\ref{theo-weak}.

\begin{theorem}\label{theo-weak}
For the problem class of maximizing weakly monotone and strongly submodular functions in Definition~\ref{def:prob-weak}, the GSEMO with $\mathbb{E}[T] \leq 2ek^2(k+1)n$ finds a sequence $s \in \mathcal{S}$ with $|s| \leq k$ and \begin{align}f(s) \geq  (1-1/e)\cdot \mathrm{OPT}.\end{align}
\end{theorem}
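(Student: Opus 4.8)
The plan is to mirror the proof of Theorem~\ref{theo-prefix} closely, replacing its curvature-weighted inductive bound with a cleaner one adapted to Lemma~\ref{lem-prob1-onestep}. Specifically, I would define $J_{\max}$ as the largest $j \in \{0,1,\ldots,k\}$ for which the population $P$ contains a sequence $s$ with $|s| \leq j$ and
\begin{align}
f(s) \geq \left(1-\left(1-\frac{1}{k}\right)^j\right) \cdot \mathrm{OPT}.
\end{align}
Reaching $J_{\max}=k$ immediately yields a sequence with $|s| \leq k$ and $f(s) \geq (1-(1-1/k)^k)\mathrm{OPT} \geq (1-1/e)\mathrm{OPT}$, so it suffices to bound the expected number of iterations until $J_{\max}=k$. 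The initial value is $0$ since the GSEMO starts from $\emptyset$.

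Next I would argue that $J_{\max}$ is non-decreasing (a witness is deleted from $P$ only when weakly dominated by a sequence of no greater length and no smaller $f$ value, which preserves the defining bound) and then perform the induction step. Suppose $J_{\max}=i<k$ with witness $s$ satisfying $f(s) \geq (1-(1-1/k)^i)\mathrm{OPT}$. Applying Lemma~\ref{lem-prob1-onestep}, inserting a specific item into a specific position of $s$ produces a sequence $s'$ with $f(s') \geq f(s)+(\mathrm{OPT}-f(s))/k$, and a short calculation gives
\begin{align}
f(s') \geq \frac{1}{k}\mathrm{OPT}+\left(1-\frac{1}{k}\right)f(s) \geq \left(1-\left(1-\frac{1}{k}\right)^{i+1}\right)\mathrm{OPT}.
\end{align}
Since $|s'|=|s|+1 \leq i+1$, the sequence $s'$ must enter $P$ (otherwise it is dominated by some sequence already in $P$, contradicting $J_{\max}=i$), so $J_{\max}$ advances to at least $i+1$.

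For the probability and running-time bound I expect the analysis to be essentially identical to that of Theorem~\ref{theo-prefix}: selecting $s$ (probability $\geq 1/P_{\max}$), drawing $r=1$ from the Poisson distribution (probability $1/e$), choosing insertion (probability $1/2$), and picking the required item and position (probability $(1/n)\cdot(1/(|s|+1))$) together yield a lower bound of $\frac{1}{2e(i+1)nP_{\max}}$ on the probability of increasing $J_{\max}$ in one iteration. Summing the expected waiting times $\sum_{i=0}^{k-1} 2e(i+1)nP_{\max}=ek(k+1)nP_{\max}$ and using the incomparability-based bound $P_{\max} \leq 2k$ then gives $\mathbb{E}[T] \leq 2ek^2(k+1)n$. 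This is a faithful adaptation rather than a proof with a genuine obstacle; the one point requiring care is that Lemma~\ref{lem-prob1-onestep} guarantees the improvement only for insertion at one \emph{specific} position among the $|s|+1$ possibilities (unlike the fixed end-of-sequence insertion used in Theorem~\ref{theo-prefix}), so I would verify that this single favorable position still accounts for exactly the factor $1/(|s|+1)$, leaving the probability estimate and hence the final time bound unchanged.
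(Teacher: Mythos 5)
Your proposal is correct and matches the paper's own proof essentially step for step: the same definition of $J_{\max}$ with the inductive bound $f(s) \geq (1-(1-1/k)^j)\cdot \mathrm{OPT}$, the same application of Lemma~\ref{lem-prob1-onestep}, and the same probability estimate $1/(2e(i+1)nP_{\max})$ with $P_{\max} \leq 2k$ yielding $\mathbb{E}[T] \leq 2ek^2(k+1)n$. The point you flag at the end is exactly the one the paper also addresses, namely that inserting a specific item into a specific position has the same probability $(1/n)\cdot(1/(|s|+1))$ as appending it to the end, so the waiting-time analysis carries over unchanged.
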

\begin{proof}
The proof is similar to that of Theorem~\ref{theo-prefix}. We use a different $J_{\max}$, which is defined as
\begin{align}
J_{\max}=\max\{j \in \{0,1,\ldots,k\} \mid \exists s \in P,  |s| \leq j \wedge f(s) \geq (1-(1-1/k)^j) \cdot \mathrm{OPT}\}.
\end{align}
It is easy to verify that $J_{\max}=k$ implies that the desired approximation guarantee is reached, since there must exist one sequence $s$ in the population $P$ satisfying that $|s| \leq k$ and $f(s) \geq (1-(1-\frac{1}{k})^k) \cdot \mathrm{OPT} \geq (1-1/e) \cdot \mathrm{OPT}$. Assume that currently $J_{\max}=i<k$ and $s$ is a corresponding sequence, i.e., $|s| \leq i$ and
\begin{align}\label{eq:mid4}
f(s) \geq (1-(1-1/k)^i) \cdot \mathrm{OPT}.
\end{align}
By Lemma~\ref{lem-prob1-onestep}, we know that there exists one specific item, the insertion of which into a specific position of $s$ can generate a new sequence $s'$, satisfying \begin{align}\label{eq:mid12}f(s')-f(s) \geq (\mathrm{OPT}-f(s))/k.\end{align} By applying Eq.~(\refeq{eq:mid4}) to Eq.~(\refeq{eq:mid12}), we have
\begin{align}
f(s') \geq (1-(1-1/k)^{i+1})\cdot \mathrm{OPT}.
\end{align}
Note that for the insertion operator in Definition~\ref{def:insert}, the probability of inserting a specific item into a specific position of $s$ is the same as that of appending a specific item to the end of $s$, which is $(1/n)\cdot (1/(|s|+1))$. Thus, by using the same analysis as Eq.~(\refeq{eq:mid13}), we get that the probability of increasing $J_{\max}$ in one iteration of the GSEMO is at least $1/(2e(i+1)nP_{\max}) \geq 1/(4e(i+1)kn)$, where the inequality holds by $P_{\max}\leq 2k$. This implies that the expected number of iterations until $J_{\max}=k$ is at most $\sum^{k-1}_{i=0}4e(i+1)kn=2ek^2(k+1)n$. Thus, the theorem holds.
\end{proof}

\section{Theoretical Analysis on Maximizing DAG Monotone Submodular Functions}\label{sec-theory3}

Finally, we examine the problem class of maximizing DAG monotone submodular functions in Definition~\ref{def:prob-dag}. For a DAG monotone submodular objective function $f$, we know from Definition~\ref{def:dagms} that there exists a DAG $G=(V,E)$ (not counting self-cycles) and a monotone submodular set function $h: 2^{E} \rightarrow \mathbb{R}_+$ such that for any $s \in \mathcal{S}$, $f(s)=h(E(s))$, where $E(s)=\{(s_i,s_j) \mid (s_i,s_j)\in E, i \leq j\}$ is the set of edges induced by $s$ on $G$. In this case, the prefix submodular property is not necessarily satisfied by $f$, and the greedy algorithm fails to achieve a constant approximation guarantee. Tschiatschek et al.~\cite{tschiatschek2017selecting} then developed the \textsc{OMegA} algorithm by exploiting the DAG property of the graph $G$, which obtains the approximation guarantee of $1-e^{-1/(2\Delta)}$, where $\Delta=\min\{\Delta_{\text{in}}, \Delta_{\text{out}}\} \geq 1$, and $\Delta_{\text{in}}$, $\Delta_{\text{out}}$ are the largest indegree and outdegree of the items in $G$, respectively.

Let $V(s)$ denote the set of items present in a sequence $s \in \mathcal{S}$. In the implementation of GSEMO, we also utilize the DAG property of the graph $G$: when computing $f(s)$, we directly use the $f$ value of the optimal ordering for $V(s)$, i.e., $f(s)=f(\textsc{Reorder}(V(s)))=h(E(\textsc{Reorder}(V(s))))$; when the algorithm terminates, we output $\textsc{Reorder}(V(s))$ instead of $s$. Note that $\textsc{Reorder}(\cdot)$ denotes the optimal sequence for a set of items, which can be decided by first computing a topological ordering of the graph $G$ and then sorting the set of items according to that order. We prove in Theorem~\ref{theo-dag} that the GSEMO with $\mathbb{E}[T] \leq 4ek^2n^2$ can achieve an approximation guarantee of $1-e^{-1/2}$, which is better than the best known one, i.e., $1-e^{-1/(2\Delta)}$~\citep{tschiatschek2017selecting}, since $\Delta \geq 1$ and is usually much larger than 1. The proof relies on Lemma~\ref{lem-prob3-onestep}, that for any $s \in \mathcal{S}$, there always exist one or two items, the insertion of which into $s$ can bring an improvement on $f$, proportional to the current distance to the optimum.

\begin{lemma}\label{lem-prob3-onestep}
Let $f:\mathcal{S} \rightarrow \mathbb{R}_+$ be a DAG monotone submodular function. For any sequence $s \in \mathcal{S}$, there exists one item $v \in V \setminus V(s)$ or two items $u,v \in V \setminus V(s)$ such that inserting $v$ or $u,v$ into any positions of $s$ generates a sequence $s'$ with
\begin{align}\label{eq:lemma3}
f(s')-f(s) \geq (\mathrm{OPT}-f(s))/k.
\end{align}
\end{lemma}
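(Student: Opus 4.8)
The plan is to reduce the claim to a statement about the monotone submodular set function $h$ on edges and then extract a good single- or double-item insertion by telescoping over the vertices of an optimal solution. First I would exploit the $\textsc{Reorder}$ evaluation described before Lemma~\ref{lem-prob3-onestep}: since $f(s)=h(E(\textsc{Reorder}(V(s))))$, the value of $f$ depends only on the vertex set $V(s)$, and for any vertex set $T$ the induced edge set $E(\textsc{Reorder}(T))$ equals exactly the set of edges of $G$ with both endpoints in $T$, because a topological ordering places the tail of every such edge before its head. Writing $S=V(s)$ and letting $o$ be an optimal sequence with $O=V(o)$, $|O|\le k$, and $h(E(\textsc{Reorder}(O)))=\mathrm{OPT}$, the lemma becomes the assertion that adding one or two vertices of $O\setminus S$ to $S$ raises the $h$-value of the induced edge set by at least $(\mathrm{OPT}-f(s))/k$.

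Next I would use monotonicity of $h$ to lower-bound the total available gain, and telescoping to extract a single productive step. Enumerating $O\setminus S=\{u_1,\dots,u_m\}$ with $m\le k$ and setting $S_l=S\cup\{u_1,\dots,u_l\}$, the induced edge set $E(\textsc{Reorder}(S_m))$ contains every edge within $O$, so $h(E(\textsc{Reorder}(S_m)))\ge \mathrm{OPT}$ by monotonicity of $h$. Telescoping then gives
\begin{align}
\mathrm{OPT}-f(s)\le h(E(\textsc{Reorder}(S_m)))-h(E(\textsc{Reorder}(S)))=\sum_{l=1}^{m}\big(h(E(\textsc{Reorder}(S_l)))-h(E(\textsc{Reorder}(S_{l-1})))\big),
\end{align}
so at least one index $l^*$ contributes a marginal of at least $(\mathrm{OPT}-f(s))/m\ge(\mathrm{OPT}-f(s))/k$. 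The marginal at step $l^*$ is generated by the new edges joining $u_{l^*}$ to the vertices already present in $S_{l^*-1}$.

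The crux, and the place where the dichotomy between inserting one or two items enters, is to convert this telescoped marginal, whose base is the large set $E(\textsc{Reorder}(S_{l^*-1}))$ and whose new edges may run from $u_{l^*}$ to optimal vertices not present in $s$, into the gain of a move that actually modifies $s$. By submodularity of $h$ I can upper-bound the step-$l^*$ marginal by the marginal of the same new edges over the smaller base $E(\textsc{Reorder}(S))$; the difficulty is that these edges need not all be realizable by inserting items into $s$, since an edge between two vertices of $O\setminus S$ is induced only when both endpoints are inserted. The plan is therefore to charge the gain to moves that insert the at most two endpoints of a single edge of $E(\textsc{Reorder}(O))$ that are missing from $S$: by monotonicity of $h$, inserting those one or two endpoints induces a superset of $\{e\}$ and hence gains at least $h(E(\textsc{Reorder}(S))\cup\{e\})-h(E(\textsc{Reorder}(S)))$, which is a legitimate one- or two-item insertion into $s$.

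The main obstacle I expect is reconciling the two competing requirements: the denominator is only kept at $k$ through the \emph{vertex}-level telescoping, whereas realizability forces the charged moves to be \emph{edge}-level (one or two endpoints each). Bridging these — via a submodular (subadditive) decomposition of the step-$l^*$ marginal into per-edge marginals combined with a counting that assigns the edges of $E(\textsc{Reorder}(O))$ so that at most $k$ charged moves suffice and each touches at most two new items — is where the DAG/induced-edge structure of Definition~\ref{def:dagms} together with the budget $|O|\le k$ must be used in tandem; a naive edge-by-edge charge would degrade the denominator to the number of optimal edges (up to $\binom{k}{2}$) and must be avoided. Once the lemma is established, the $1-e^{-1/2}$ guarantee follows by an induction paralleling Theorem~\ref{theo-weak}, accounting for the fact that each improving step may lengthen the sequence by two.
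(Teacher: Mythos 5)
Your reduction to vertex sets and your diagnosis of the difficulty are both accurate, but the proposal stops exactly at the step that constitutes the lemma's actual content, and the bridge you sketch does not close the gap. Vertex-level telescoping over $O\setminus S$ cannot work as stated: the step-$l^*$ marginal is generated by edges joining $u_{l^*}$ to $S_{l^*-1}$, which may contain other vertices of $O\setminus S$ not present in $s$, so that gain is not realizable by inserting $u_{l^*}$ alone; and your fallback --- splitting the step marginal into per-edge charges by subadditivity --- compounds the loss, since you divide by $k$ once for the vertex telescoping and then again by the number of edges in the step, while telescoping directly over all edges induced by $O$ degrades the denominator to order $k^2$, as you yourself note. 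You assert that the DAG structure and the budget $|O|\le k$ ``must be used in tandem'' to repair this, but you never say how; that unstated repair is precisely the key idea of the lemma, so the proof is incomplete.

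The paper's resolution is to change the telescoping set altogether, rather than to patch the vertex-level argument. Define an auxiliary function on \emph{edge} sets, $g(X)=h(E(\textsc{Reorder}(V(X))))$; by Lemma~1 of~\citep{tschiatschek2017selecting}, $g$ is monotone and submodular. The crucial observation is that $g$ depends only on the vertices covered by $X$, because $\textsc{Reorder}$ fills in every induced edge. Hence one does not need all edges induced by $V(o)$: it suffices to take $X^*\in\arg\min_{X\subseteq E,\,V(X)=V(o)}|X|$, a \emph{minimum} edge set covering $V(o)$. Since each vertex can be covered by a single incident edge (e.g., its self-loop), $|X^*|\le |V(o)|\le k$, and yet $g(X^*)=f(\textsc{Reorder}(V(o)))=\mathrm{OPT}$. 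Telescoping $g$ over the at most $k$ edges of $X^*\setminus E(s)$, using monotonicity and submodularity of $g$, produces a single edge $e^*$ with $g(E(s)\cup\{e^*\})-g(E(s))\ge(\mathrm{OPT}-f(s))/k$; since $e^*$ has at most two endpoints outside $V(s)$, this gain is exactly realized by inserting one or two items into $s$. This one construction simultaneously secures the denominator $k$, the one-or-two-item realizability, and the full value $\mathrm{OPT}$ --- the three requirements your proposal correctly identified but could not reconcile.
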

\begin{proof}
The proof relies on an auxiliary set function $g: 2^E \rightarrow \mathbb{R}$. Let $V(X)$ denote the set of items covered by an edge set $X \subseteq E$. We define $g$ as for any $X \subseteq E$, $g(X)=h(E(\textsc{Reorder}(V(X))))$. Note that $g$ is monotone and submodular, as proved in Lemma~1 of~\citep{tschiatschek2017selecting}.

Let $o$ be an optimal sequence of Eq.~(\refeq{eq:prob-dag}), and let $X^*\in\arg\min_{X \subseteq E, V(X)=V(o)} |X|$, i.e., $X^*$ is the smallest edge set which covers the item set $V(o)$. It is easy to see that $|X^*| \leq k$, since $|o| \leq k$ and one edge can cover at least one item. For any $s \in \mathcal{S}$, we have
\begin{align}
g(X^*)-g(E(s)) &\leq g(E(s) \cup X^*)-g(E(s)) \\
&\leq \sum\limits_{e \in X^* \setminus E(s)} \big(g(E(s) \cup \{e\}) -g(E(s))\big),
\end{align}
where the first inequality holds by the monotonicity of $g$, i.e, $\forall X \subseteq Y: g(X) \leq g(Y)$, and the second inequality holds by the submodularity of $g$, i.e, $\forall X \subseteq Y: g(Y)-g(X) \leq \sum\nolimits_{e \in Y \setminus X} g(X \cup \{e\})-g(X)$. Let $e^*\in\arg\max_{e \in X^* \setminus E(s)} g(E(s) \cup \{e\})$. Since $|X^* \setminus E(s)| \leq |X^*| \leq k$, we have
\begin{align}\label{eq:mid15}g(E(s) \cup \{e^*\})-g(E(s)) \geq (g(X^*)-g(E(s)))/k.\end{align}
By the definitions of $f$ and $g$, we easily verify that for any $X \subseteq E$ and $s \in \mathcal{S}$, if $V(X)=V(s)$, $g(X)=h(E(\textsc{Reorder}(V(X))))=h(E(\textsc{Reorder}(V(s))))=f(s)$. Thus,
\begin{align}\label{eq:mid14}g(E(s))=f(s) \;\wedge\; g(X^*)=f(o)=\mathrm{OPT},\end{align}
since $V(E(s))=V(s)$ and $V(X^*)=V(o)$. Let $s'$ be any sequence with $V(s')=V(E(s) \cup \{e^*\})$. Then, we have \begin{align}\label{eq:mid16}f(s')=g(E(s) \cup e^*).\end{align}
Applying Eqs.~(\refeq{eq:mid14}) and~(\refeq{eq:mid16}) to Eq.~(\refeq{eq:mid15}) leads to
\begin{align}
f(s')-f(s) \geq (\mathrm{OPT}-f(s))/k.
\end{align}
Because $V(s')=V(E(s) \cup \{e^*\})=V(s) \cup V(\{e^*\})$ and $e^*$ introduces one or two new items, the lemma holds.
\end{proof}

\begin{theorem}\label{theo-dag}
For the problem class of maximizing DAG monotone submodular functions in Definition~\ref{def:prob-dag}, the GSEMO with $\mathbb{E}[T] \leq 4ek^2n^2$ finds a sequence $s \in \mathcal{S}$ with $|s| \leq k$ and \begin{align}f(s) \geq  (1-e^{-1/2})\cdot \mathrm{OPT}.\end{align}
\end{theorem}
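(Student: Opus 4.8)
The plan is to follow the template of the proofs of Theorem~\ref{theo-prefix} and Theorem~\ref{theo-weak}, using Lemma~\ref{lem-prob3-onestep} as the per-iteration engine, but with one structural twist: a single greedy improvement step here may consume \emph{two} units of length rather than one, and this is exactly what degrades the guarantee from $1-1/e$ to $1-e^{-1/2}$. I would first record the implementation fact that, since $f(s)=h(E(\textsc{Reorder}(V(s))))$, the objective depends only on the item set $V(s)$; hence positions are irrelevant and I may reason purely about which items are present. I would then introduce a progress quantity $J_{\max}$ as the largest $j\in\{0,1,\ldots,k\}$ for which $P$ contains a sequence $s$ with $|s|\le j$ and
\begin{align}
f(s)\ \ge\ \Bigl(1-\bigl(1-\tfrac1k\bigr)^{\lceil j/2\rceil}\Bigr)\cdot\mathrm{OPT},
\end{align}
so that $J_{\max}=k$ already gives $f(s)\ge\bigl(1-(1-1/k)^{\lceil k/2\rceil}\bigr)\cdot\mathrm{OPT}\ge(1-e^{-1/2})\cdot\mathrm{OPT}$, where the last step uses $(1-1/k)^{\lceil k/2\rceil}\le(1-1/k)^{k/2}\le e^{-1/2}$. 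As in the earlier theorems, $J_{\max}$ starts at $0$ (empty sequence) and cannot decrease, because any sequence deleted from $P$ is weakly dominated by a replacement of no larger length and no smaller $f$ value.

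For the inductive step, suppose $J_{\max}=i<k$ with witness $s$. Lemma~\ref{lem-prob3-onestep} furnishes an edge $e^{*}\in X^{*}\setminus E(s)$ whose one or two new endpoints, when inserted into $s$, yield a sequence $s'$ with
\begin{align}
f(s')\ \ge\ f(s)+\frac{\mathrm{OPT}-f(s)}{k}\ \ge\ \Bigl(1-\bigl(1-\tfrac1k\bigr)^{\lceil i/2\rceil+1}\Bigr)\cdot\mathrm{OPT}.
\end{align}
If $e^{*}$ contributes a single new item, then $|s'|\le i+1$ and $\lceil (i+1)/2\rceil\le\lceil i/2\rceil+1$, so $s'$ certifies $J_{\max}\ge i+1$; if $e^{*}$ contributes two new items, then $|s'|\le i+2$ and $\lceil (i+2)/2\rceil=\lceil i/2\rceil+1$, so $s'$ certifies $J_{\max}\ge i+2$. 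In either case $s'$ is either accepted into $P$ or dominated by a sequence already witnessing a larger value, so $J_{\max}$ strictly increases. This is precisely where the factor of two in the exponent enters: reaching the $f$-bound indexed by $\lceil k/2\rceil$ may require filling roughly $k$ length units through $\lceil k/2\rceil$ two-item edges, giving the $e^{-1/2}$.

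For the running time I would bound the probability that one GSEMO iteration realizes the above improvement. Selecting the witness costs $1/P_{\max}$, and the antichain property keeps $P_{\max}\le 2k$ as before. The bottleneck is the two-item case, which needs the Poisson count $r=2$ (probability $1/(2e)$), both operations to be insertions, and the two prescribed items to be drawn from $V\setminus V(s)$; since positions are immaterial under \textsc{Reorder}, each prescribed item is hit with probability at least $1/n$. Hence a single iteration makes the required progress with probability at least of order $1/(e\,n^{2}P_{\max})$, the extra factor $n$ relative to Theorems~\ref{theo-prefix} and~\ref{theo-weak} being exactly the price of inserting two items at once. Multiplying the resulting expected waiting time by the $O(k)$ increments of $J_{\max}$ and using $P_{\max}\le 2k$ gives $\mathbb{E}[T]=O(k^{2}n^{2})$, and a careful count yields the stated bound $4ek^{2}n^{2}$.

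The step I expect to be delicate is the parity boundary, namely a two-item insertion at $i=k-1$, which would push the length to $k+1$ and leave the feasible region, so that $J_{\max}$ could stall at $k-1$. The intended resolution is that every improving item lies in $V(o)$: the edge $e^{*}$ comes from $X^{*}$ with $V(X^{*})=V(o)$, so along the relevant chain the distinct items never exceed $|V(o)|\le k$. Consequently, once a witness already contains $k-1$ of the optimal items, only one optimal item remains and the covering edge must share its other endpoint with the current set, forcing a single-item step that lands at length exactly $k$. The technical point to pin down is the compatibility of this ``items stay inside $V(o)$'' invariant with the domination-based population, i.e.\ ensuring that the witness whose extension one analyzes at the final step can indeed be taken to consist of optimal items; once this is secured, the remainder is routine and mirrors the proof of Theorem~\ref{theo-prefix}.
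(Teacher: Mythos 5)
Your proposal follows the paper's proof skeleton closely: the same potential $J_{\max}$ with the halved exponent $\lceil j/2\rceil$, Lemma~\ref{lem-prob3-onestep} as the per-iteration engine, the same two-item mutation-probability computation, and $P_{\max}\le 2k$. The genuine gap is exactly at the parity boundary you flag, and your proposed resolution does not work. You want the witness at level $k-1$ to consist only of items of $V(o)$, so that the improving edge $e^*\in X^*\setminus E(s)$ necessarily introduces at most one new item. But the GSEMO gives no control over which sequence certifies a given level: when a sequence $s'$ certifying level $i+2$ is produced, it may be rejected because some $t\in P$ with $|t|\le |s'|$ and $f(t)\ge f(s')$ already dominates it, and that $t$ — the witness the induction must continue from — can contain arbitrary items. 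The induction only guarantees that \emph{some} witness exists at each level, never that it stays inside $V(o)$, so at $i=k-1$ with $|s|=k-1$ the guaranteed improving edge may introduce two new items, yielding $|s'|=k+1$, which certifies nothing (the definition of $J_{\max}$ requires $|s'|\le j\le k$), and the induction stalls. The paper sidesteps this entirely by running the induction only while $i<k-1$ and stopping at $J_{\max}\ge k-1$, accepting the bound $f(s)\ge \bigl(1-(1-1/k)^{\lceil (k-1)/2\rceil}\bigr)\cdot\mathrm{OPT}\ge \bigl(1-e^{-\frac{k-1}{2k}}\bigr)\cdot\mathrm{OPT}$, which it presents as (nearly, for large $k$) the claimed $(1-e^{-1/2})\cdot\mathrm{OPT}$; no statement about the composition of the witness is ever needed.

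A secondary problem is your increment accounting, which as written cannot deliver the stated constant $4ek^2n^2$. In the one-item case you claim only $J_{\max}\ge i+1$, but in fact $s'$ certifies level $i+2$ there as well: $|s'|\le i+1\le i+2$ and $f(s')\ge \bigl(1-(1-1/k)^{\lceil i/2\rceil+1}\bigr)\cdot\mathrm{OPT}$ with $\lceil i/2\rceil+1=\lceil (i+2)/2\rceil$. The paper uses precisely this observation: \emph{every} successful step advances $J_{\max}$ by at least $2$, and since the one-item insertion probability exceeds the two-item one, each step succeeds with probability at least $1/(4en^2P_{\max})\ge 1/(8ekn^2)$; at most $\lceil (k-1)/2\rceil\le k/2$ steps are needed, giving $8ekn^2\cdot\lceil (k-1)/2\rceil\le 4ek^2n^2$. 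Under your accounting (increments of $1$ in the one-item case, target $J_{\max}=k$), the worst case requires $\lceil k/2\rceil$ two-item steps possibly mixed with one-item steps, and the count comes out to $8ekn^2\cdot\lceil k/2\rceil$, which is $4ek(k+1)n^2$ for odd $k$ — strictly above the claimed bound — so the ``careful count'' you defer to does not close this along your route.
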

\begin{proof}
The proof is similar to that of Theorem~\ref{theo-prefix}. We use a different $J_{\max}$, which is defined as
\begin{align}
&J_{\max}=\max\{j \in \{0,1,\ldots,k\} \mid \exists s \in P, |s| \leq j \wedge f(s) \geq (1-(1-1/k)^{\lceil j/2 \rceil}) \cdot \mathrm{OPT}\}.
\end{align}
We analyze the expected number of iterations until $J_{\max} \geq k-1$, which implies that there exists one sequence $s$ in $P$ satisfying that $|s| \leq k$ and \begin{align}f(s) \geq \left(1-\left(1-\frac{1}{k}\right)^{\lceil (k-1)/2 \rceil}\right) \cdot \mathrm{OPT} \geq \left(1-e^{-\frac{k-1}{2k}}\right) \cdot \mathrm{OPT},\end{align} which is nearly $(1-e^{-1/2}) \cdot \mathrm{OPT}$ for sufficiently large $k$.

As the analysis in the proof of Theorem~\ref{theo-prefix}, $J_{\max}$ is initially $0$ and never decreases. Assume that currently $J_{\max}=i<k-1$ and $s$ is a corresponding sequence, i.e., $|s| \leq i$ and
\begin{align}\label{eq:mid5}
&f(s)\geq (1-(1-1/k)^{\lceil i/2 \rceil}) \cdot \mathrm{OPT}.
\end{align}
By Lemma~\ref{lem-prob3-onestep}, we know that there exist one or two items, the insertion of which into $s$ can generate a new sequence $s'$, which satisfies that \begin{align}\label{eq:mid17}f(s')-f(s) \geq \frac{1}{k} \cdot (\mathrm{OPT}-f(s)).\end{align} By applying Eq.~(\refeq{eq:mid5}) to Eq.~(\refeq{eq:mid17}), we get
\begin{align}
f(s') \geq (1-(1-1/k)^{\lceil (i+2)/2 \rceil}) \cdot \mathrm{OPT}.
\end{align}
Because $|s'|\leq |s|+2 \leq i+2$, $s'$ will be included into the population $P$, making $J_{\max} \geq i+2$. The probability of inserting one specific item into $s$ is obviously larger than that of inserting two specific items into $s$, which is at least
\begin{align}\label{eq:mid18}\frac{1}{P_{\max}} \cdot \frac{1}{2e} \cdot \frac{1}{2^2} \cdot \frac{2}{(n-|s|)(n-|s|-1)} \geq \frac{1}{4en^2P_{\max}},\end{align} where $1/P_{\max}$ is a lower bound on the probability of selecting $s$ in line~3 of Algorithm~\ref{alg:POSeqSel}, $1/(2e)$ is the probability of the number $r=2$ in Definition~\ref{def:mutate} (i.e, insertion or deletion is implemented twice in mutation), $1/2^2$ is the probability of performing insertion twice in mutation, and $2/((n-|s|)(n-|s|-1))=1/\binom{n-|s|}{2}$ is the probability of selecting the two specific items for insertion. Note that the items in a sequence cannot be repeated here, and thus the insertion operator in Definition~\ref{def:insert} selects an item from $V/V(s)$. Combining $P_{\max} \leq 2k$ with Eq.~(\refeq{eq:mid18}), $J_{\max}$ can increase by at least $2$ in one iteration with probability at least $1/(8ekn^2)$. Thus, the expected number of iterations until $J_{\max}\geq k-1$ is at most \begin{align}8ekn^2 \cdot \lceil (k-1)/2\rceil \leq 4ek^2n^2,\end{align} implying that the theorem holds.
\end{proof}

Note that it was proved in~\citep{tschiatschek2017selecting} that the \textsc{OMegA} algorithm (i.e., Algorithm~\ref{alg:OMegA}) achieves an approximation guarantee of $1-e^{-1/(2\Delta)}$. However, by using Lemma~\ref{lem-prob3-onestep}, we can actually prove that the \textsc{OMegA} algorithm also achieves the $(1-e^{-1/2})$-approximation guarantee. According to the procedure of Algorithm~\ref{alg:OMegA}, the \textsc{OMegA} algorithm iteratively selects an edge with the largest marginal improvement, such that the number of covered items is not larger than $k$. As an edge can cover at most two new items, the inductive inequality Eq.~(\refeq{eq:lemma3}) in Lemma~\ref{lem-prob3-onestep} can be applied at least $\lfloor k/2 \rfloor$ times, leading to an approximation guarantee of $1-(1-1/k)^{\lfloor k/2 \rfloor}=1-(1-1/k)^{\lceil (k-1)/2 \rceil}\geq 1-e^{-\frac{k-1}{2k}}$, which is nearly $1-e^{-1/2}$ for sufficiently large $k$.

Next we briefly explain why the analysis in~\citep{tschiatschek2017selecting} leads to only an approximation guarantee of $1-e^{-1/(2\Delta)}$. In the proof of Lemma~\ref{lem-prob3-onestep}, we use the edge set $X^*\in\arg\min_{X \subseteq E, V(X)=V(o)} |X|$, satisfying that $g(X^*)=f(o)=\mathrm{OPT}$ and $|X^*| \leq k$, where $o$ is an optimal sequence. But in the analysis of~\citep{tschiatschek2017selecting} (i.e., their proof of Theorem~2), the edge set $S^{\mathrm{opt}} \in \arg\max_{X \subseteq E, |X|\leq \Delta \cdot k}g(X)$ is used, where $\Delta=\min\{\Delta_{\text{in}}, \Delta_{\text{out}}\}$, and $\Delta_{\text{in}}$, $\Delta_{\text{out}}$ are the largest indegree and outdegree of the items in the graph $G$, respectively. As a subgraph of $G$ with at most $k$ items contains at most $\Delta \cdot k$ edges, it holds that $g(S^{\mathrm{opt}}) \geq f(o)=\mathrm{OPT}$. Furthermore, $|S^{\mathrm{opt}}|$ is upper bounded by $\Delta \cdot k$. Based on these two points, they derived the inductive inequality
\begin{align}\label{eq:lemma3-v}
f(s')-f(s) \geq (\mathrm{OPT}-f(s))/(\Delta \cdot k),
\end{align}
which can also be verified by following our proof of Lemma~\ref{lem-prob3-onestep} and replacing $X^*$ with $S^{\mathrm{opt}}$. By applying Eq.~(\refeq{eq:lemma3-v}) $\lfloor k/2 \rfloor$ times, they derived an approximation guarantee of $1-(1-1/(\Delta \cdot k))^{\lfloor k/2 \rfloor}\geq 1-e^{-\frac{k-1}{2\Delta \cdot k}}$, which is nearly $1-e^{-1/(2\Delta)}$ for sufficiently large $k$. Thus, our analysis for the GSEMO leads to a tighter inductive inequality (i.e., Eq.~(\refeq{eq:lemma3})) than Eq.~(\refeq{eq:lemma3-v}), which can be used to improve the analysis of the existing \textsc{OMegA} algorithm~\citep{tschiatschek2017selecting}.

\section{Experiments}\label{sec-exp}

In the above three sections, we have theoretically shown that the GSEMO can always reach or improve the best-known polynomial-time approximation guarantee for the previously studied problem classes of maximizing monotone submodular functions over sequences, including maximizing prefix monotone submodular functions, maximizing weakly monotone and strongly submodular functions, and maximizing DAG monotone submodular functions. In this section, we will investigate the empirical performance of the GSEMO by experiments on two applications of each of these problem classes. Note that the upper bound on the expectation of the number $T$ of iterations (i.e., $\mathbb{E}[T]$) of the GSEMO for achieving a good approximation has been derived in theoretical analysis. To guarantee the good approximation with high probability, $T$ may need to be set as at least multiple times of the derived upper bound on $\mathbb{E}[T]$, but we only use this upper bound as the budget, to make the running time acceptable. In fact, our experiments will show that such a budget has been sufficient to guarantee a good approximation of the GSEMO in practice. In the bi-objective problem reformulation Eq.~(\refeq{def-CO-BO}), $f_1(s)$ has been set to $-\infty$ for $|s|\geq 2k$, with the goal of allowing infeasible sequences (i.e., sequences with $k<|s|<2k$) with small constraint violation degree to participate in the evolutionary process. In the experiments, we will also implement the bi-objective problem reformulation where $f_1(s)=-\infty$ for $|s|> k$ (i.e., infeasible sequences are always discarded), and the algorithm is denoted as GSEMO$_k$ accordingly. The comparison between the GSEMO and GSEMO$_k$ will be discussed in Section~\ref{subsec-discussion}.

\subsection{Maximizing Prefix Monotone Submodular Functions}

For the problem class of maximizing prefix monotone submodular functions in Definition~\ref{def:prob-pre}, we consider the two applications of accomplishing tasks and maximizing information gain, introduced in~\citep{zhang2016string}. We compare the GSEMO with the previous best algorithm, i.e., the greedy algorithm in Algorithm~\ref{alg:greedy}. The number $T$ of iterations of the GSEMO is set to $2ek^2(k+1)n$ as suggested by Theorem~\ref{theo-prefix}. For each application, we compare their performance by varying the budget $k$ or the number $n$ of items. For the former, $n$ is fixed to 500 and $k$ is set as $\{10,12,\ldots,30\}$. For the latter, $k$ is fixed to 20 and $n$ is set as $\{100,200,\ldots,1000\}$. For each combination of $n$ and $k$, we randomly generate 50 problem instances, and report the number of instances where the GSEMO wins, ties or loses against the greedy algorithm, followed by the sign-test~\citep{demsar:06} with confidence level $0.05$.

\subsubsection{Accomplishing Tasks}\label{subsec-exp-1-1}

The application of selecting a sequence of actions to maximize the expected fraction of accomplished tasks~\citep{zhang2016string} is first considered. Given $m$ tasks, $n$ actions and a sequence $s=(s_1,s_2,\ldots,s_l)$ of actions, the objective function is \begin{align}f(s)=\frac{1}{m}\sum\limits^m_{i=1} \left(1-\prod\limits^l_{j=1} \left(1-p^j_i(s_j)\right)\right),\end{align} where $p^j_i(s_j)$ is the probability of accomplishing task $i$ by performing action $s_j$ at stage $j$. Here one item corresponds to one action.

We set $m=50$, and each probability $p^j_i(s_j)$ is randomly sampled from $[0,0.2]$. The results are shown in Tables~\ref{tab-prefix-task-1} and~\ref{tab-prefix-task-2}. We can observe that for each setting of $n$ and $k$, the GSEMO achieves a better average objective value than the greedy algorithm, and is further significantly better by the sign-test with confidence level $0.05$. The results are also consistent with that the optimal objective value increases with $k$ and $n$. Note that we do not report the standard deviation of the objective value, which is not fully meaningful because the objective values are obtained by running the GSEMO on 50 different problem instances.

\begin{table*}[t!]\caption{Comparison between the GSEMO, the GSEMO$_k$ and the greedy algorithm for the application of accomplishing tasks with $n=500$ and $k \in \{10,12,\ldots,30\}$. For each $k$, the average objective value over 50 random problem instances is reported, and the largest one is bolded. The count of win/tie/loss denotes the number of problem instances where the GSEMO has a larger/equal/smaller objective value than the greedy algorithm, and significant cells by the sign-test with confidence level $0.05$ are bolded.}\label{tab-prefix-task-1}
\footnotesize
\begin{center}
\setlength{\tabcolsep}{0.75mm}{
\begin{tabular}{c|ccccccccccc}
\hline
$k$ & $10$ &$12$&$14$ & $16$ & $18$ & $20$ &$22$ &$24$&$26$ & $28$ & $30$\\
\hline
GSEMO & \bf{0.7364} & \bf{0.7981} & \bf{0.8454} & \bf{0.8818} & \bf{0.9095} & \bf{0.9309} & \bf{0.9472} & \bf{0.9596} & \bf{0.9691} & \bf{0.9763}  & \bf{0.9819} \\
GSEMO$_k$ & 0.7363 & \bf{0.7981} & \bf{0.8454} & 0.8817 & \bf{0.9095} & \bf{0.9309} & \bf{0.9472} & \bf{0.9596} & \bf{0.9691} & \bf{0.9763} & \bf{0.9819} \\
Greedy & 0.7362 & 0.7979 & 0.8452 & 0.8815& 0.9093& 0.9307 & 0.9470 & 0.9595 & 0.9690& 0.9762 & 0.9818\\
\hline
win/tie/loss & \bf{26/14/10} & \bf{33/7/10} & \bf{36/5/9} & \bf{39/6/5} & \bf{39/2/9} & \bf{39/4/7} & \bf{41/1/8} & \bf{42/1/7} & \bf{42/2/6} & \bf{44/1/5} & \bf{45/0/5} \\
\hline
\end{tabular}}
\end{center}
\end{table*}

\begin{table*}[t!]\caption{Comparison between the GSEMO, the GSEMO$_k$ and the greedy algorithm for the application of accomplishing tasks with $n\in\{100,200,\ldots,1000\}$ and $k=20$. For each $n$, the average objective value over 50 random problem instances is reported, and the largest one is bolded. The count of win/tie/loss denotes the number of problem instances where the GSEMO has a larger/equal/smaller objective value than the greedy algorithm, and significant cells by the sign-test with confidence level $0.05$ are bolded.}\label{tab-prefix-task-2}
\footnotesize
\begin{center}
\setlength{\tabcolsep}{0.75mm}{
\begin{tabular}{c|cccccccccc}
\hline
$n$ & $100$ & $200$ & $300$ & $400$ & $500$ & $600$ & $700$ & $800$ & $900$ & $1000$\\
\hline
GSEMO & \bf{0.9238} & \bf{0.9269} & \bf{0.9287} & \bf{0.9300} & \bf{0.9309}  & \bf{0.9316} & \bf{0.9322} & \bf{0.9326} & \bf{0.9329} & \bf{0.9333}\\
GSEMO$_k$ & \bf{0.9238} & \bf{0.9269} & 0.9286 & \bf{0.9300} & \bf{0.9309} & 0.9315 & \bf{0.9322} & \bf{0.9326} & \bf{0.9329} & \bf{0.9333}\\
Greedy & 0.9236 & 0.9267 & 0.9285 & 0.9298 & 0.9307 & 0.9313 & 0.9320 & 0.9324 & 0.9328 & 0.9331\\
\hline
win/tie/loss & \bf{40/2/8} & \bf{39/4/7} & \bf{45/1/4} & \bf{42/1/7} & \bf{39/4/7} & \bf{43/0/7} & \bf{43/0/7} & \bf{45/1/4} & \bf{41/1/8} & \bf{42/1/7}\\
\hline
\end{tabular}}
\end{center}
\end{table*}

\subsubsection{Maximizing Information Gain}\label{subsec-exp-1-2}

Next, we consider the task of maximizing information gain~\citep{zhang2016string}, widely used in Bayesian parameter estimation. Consider a parameter vector $\bm{\theta} \in \mathbb{R}^d$, which has a Gaussian prior distribution $\bm{\theta} \sim \mathcal{N}(\bm{\mu},\mathbf{P}_0)$. At each stage $i$, one can choose a measurement matrix $\mathbf{A}_i$ to obtain a noisy observation $\bm{y}_i$, where $\bm{y}_i=\mathbf{A}_i\bm{\theta}+\bm{\omega}_i$, the Gaussian noise $\bm{\omega}_i \sim \mathcal{N}(\bm{0},\sigma^2_i \mathbf{I})$, and $\mathbf{I}$ denotes the identity matrix. After choosing $l$ measurement matrices $\mathbf{A}_1,\mathbf{A}_2,\ldots,\mathbf{A}_l$, the covariance $\mathbf{P}_l$ of the posterior distribution of $\bm{\theta}$ satisfies
\begin{align}
\mathbf{P}^{-1}_l=\mathbf{P}^{-1}_0+\sum^{l}_{i=1}\mathbf{A}^{\mathrm{T}}_i (\sigma^2_i \mathbf{I})^{-1} \mathbf{A}_i,
\end{align}
and the entropy of the posterior distribution of $\bm{\theta}$ is
\begin{align}H_l=\frac{1}{2}\cdot \log \mathrm{det}(\mathbf{P}_l)+\log (2\pi e),\end{align} where $\mathrm{det}(\cdot)$ denotes the determinant of a matrix. The goal is to choose a sequence of measurement matrices minimizing the entropy of the posterior distribution of $\bm{\theta}$, or equivalently maximizing the information gain, i.e.,
\begin{align}\label{eq:mid20}
f((\mathbf{A}_1,\mathbf{A}_2,\ldots,\mathbf{A}_l))=H_0-H_l=\frac{1}{2}\cdot (\log \mathrm{det}(\mathbf{P}_0)-\log \mathrm{det}(\mathbf{P}_l)).
\end{align}
Here one item corresponds to one measurement matrix.

We set $d=2$, and select a measurement matrix from the set of diagonal positive-semidefinite $2 \times 2$ matrices with unit Frobenius norm: $\{\mathrm{Diag}(\sqrt{a},\sqrt{1-a})\mid a \in [0,1]\}$. In particular, we generate $1000$ measurement matrices by setting $a$ as $\{\frac{1}{1000},\frac{2}{1000},\ldots,1\}$, and order them as the absolute value of $\sqrt{a}-\sqrt{1-a}$ decreases. For each $n\in \{100,200,\ldots,1000\}$, we use the first $n$ measurement matrices in this order. For each setting of $n$ and $k$, 50 problem instances are generated by setting $\mathbf{P}_0$ and $\sigma_i$ randomly. $\mathbf{P}_0$ is set to a diagonal matrix with each entry on the diagonal randomly drawn from $[0,1)$. $\sigma_i$ is set to a random value in $[i-1,i)$. Note that the monotone non-decreasing property of $\sigma_i$ w.r.t. $i$ is to ensure the prefix submodularity of the objective $f$ in Eq.~(\refeq{eq:mid20}).

The results are shown in Tables~\ref{tab-prefix-gain-1} and~\ref{tab-prefix-gain-2}. We can observe that the average objective values obtained by the GSEMO and the greedy algorithm are very close, and even cannot be distinguished when they are rounded to four decimal points. But the number of win/tie/loss, which is based on the comparison between accurate objective values, still shows some advantages of the GSEMO. For each setting of $n$ and $k$, the GSEMO is never worse on all 50 problem instances, and can have several wins. By the sign-test with confidence level 0.05, the GSEMO can still be significantly better in 5 cases of Table~\ref{tab-prefix-gain-1} and 4 cases of Table~\ref{tab-prefix-gain-2}.

\begin{table*}[ht!]\caption{Comparison between the GSEMO, the GSEMO$_k$ and the greedy algorithm for the application of maximizing information gain with $n=500$ and $k \in \{10,12,\ldots,30\}$. For each $k$, the average objective value over 50 random problem instances is reported, and the largest one is bolded. The count of win/tie/loss denotes the number of problem instances where the GSEMO has a larger/equal/smaller objective value than the greedy algorithm, and significant cells by the sign-test with confidence level $0.05$ are bolded.}\label{tab-prefix-gain-1}
\footnotesize
\begin{center}
\setlength{\tabcolsep}{0.75mm}{
\begin{tabular}{c|ccccccccccc}
\hline
$k$ & $10$ &$12$&$14$ & $16$ & $18$ & $20$ &$22$ &$24$&$26$ & $28$ & $30$\\
\hline
GSEMO&\bf{1.8164}&\bf{1.8185}&\bf{1.8200}&\bf{1.8212}&\bf{1.8221}&\bf{1.8228}&\bf{1.8233}&\bf{1.8238}&\bf{1.8242} &\bf{1.8246} &\bf{1.8249}\\
GSEMO$_k$&\bf{1.8164} &\bf{1.8185} &\bf{1.8200} &\bf{1.8212} &\bf{1.8221} &\bf{1.8228} &\bf{1.8233} &\bf{1.8238} &\bf{1.8242} & \bf{1.8246} &\bf{1.8249}\\
Greedy&\bf{1.8164}&\bf{1.8185}&\bf{1.8200}&\bf{1.8212}&\bf{1.8221}&\bf{1.8228}&\bf{1.8233}&\bf{1.8238}&\bf{1.8242} &\bf{1.8246} &\bf{1.8249}\\
\hline
win/tie/loss&\bf{8/42/0}&5/45/0&5/45/0&\bf{7/43/0}&\bf{7/43/0}&\bf{7/43/0}&5/45/0&5/45/0&5/45/0&4/46/0&\bf{6/44/0}\\
\hline
\end{tabular}}
\end{center}
\end{table*}

\begin{table*}[ht!]\caption{Comparison between the GSEMO, the GSEMO$_k$ and the greedy algorithm for the application of maximizing information gain with $n\in\{100,200,\ldots,1000\}$ and $k=20$. For each $n$, the average objective value over 50 random problem instances is reported, and the largest one is bolded. The count of win/tie/loss denotes the number of problem instances where the GSEMO has a larger/equal/smaller objective value than the greedy algorithm, and significant cells by the sign-test with confidence level $0.05$ are bolded.}\label{tab-prefix-gain-2}
\footnotesize
\begin{center}
\setlength{\tabcolsep}{0.75mm}{
\begin{tabular}{c|cccccccccc}
\hline
$n$ & $100$ & $200$ & $300$ & $400$ & $500$ & $600$ & $700$ & $800$ & $900$ & $1000$\\
\hline
GSEMO&\bf{1.6814}&\bf{1.7387}&\bf{1.7766}&\bf{1.8034}&\bf{1.8228}&\bf{1.8365}&\bf{1.8457}&\bf{1.8512}&\bf{1.8537} &\bf{1.8542}\\
GSEMO$_k$&\bf{1.6814} &\bf{1.7387} &\bf{1.7766} &\bf{1.8034} &\bf{1.8228} &\bf{1.8365} &\bf{1.8457} &\bf{1.8512} &\bf{1.8537} & \bf{1.8542}\\
Greedy&\bf{1.6814}&\bf{1.7387}&\bf{1.7766}&\bf{1.8034}&\bf{1.8228}&\bf{1.8365}&\bf{1.8457}&\bf{1.8512}&\bf{1.8537} &\bf{1.8542}\\
\hline
win/tie/loss&5/45/0&3/47/0&4/46/0&4/46/0&\bf{7/43/0}&5/45/0&\bf{7/43/0}&\bf{6/44/0}&5/45/0&\bf{6/44/0}\\
\hline
\end{tabular}}
\end{center}
\end{table*}

\subsection{Maximizing Weakly Monotone and Strongly Submodular Functions}\label{subsec-exp-2}

For the problem class of maximizing weakly monotone and strongly submodular functions in Definition~\ref{def:prob-weak}, we consider the two applications of search-and-tracking and recommender systems, introduced in~\citep{bernardini2020through}. We compare the GSEMO with the previous best algorithm, i.e., the generalized greedy algorithm in Algorithm~\ref{alg:general-greedy}, as well as the standard greedy algorithm in Algorithm~\ref{alg:greedy}. The number $T$ of iterations of the GSEMO is set to $2ek^2(k+1)n$ as suggested by Theorem~\ref{theo-weak}.

\subsubsection{Search-and-Tracking}

The task of search-and-tracking~\citep{bernardini2016leveraging,piacentini2019autonomous} is to use a UAV to detect a moving target by executing a sequence of candidate flight search patterns. In particular, it is to select a subset from the set $V$ of all candidate patterns and perform the selected patterns in some specific order, such that the detection time is minimized. Here one item corresponds to one search pattern.

Let $\Gamma$ denote the set of all possible paths that the target travels along. Each search pattern $\sigma \in V$ is associated with a subset $\Gamma_{\sigma}$ of paths (i.e., $\Gamma_{\sigma} \subseteq \Gamma$), a time stamp $t_{\sigma}$ and a detection probability~$p_{\sigma}$. That is, if the UAV executes a pattern $\sigma$ at time $t_{\sigma}$ and the target takes a path in $\Gamma_{\sigma}$, the UAV can detect the target with probability $p_{\sigma}$; otherwise, the target cannot be detected. Given a sequence $s=(s_1,s_2,\ldots,s_l)$ of patterns and the path $\gamma$ that the target takes, we use a random variable $\omega_i$ to denote whether the target is detected by executing pattern $s_i$ at time $t_{s_i}$. If the target is detected, $\omega_i=1$, otherwise $\omega_i=0$. Then, the probability of observing $\omega_1,\omega_2,\ldots, \omega_l$ is
\begin{align}
P(\omega_1,\omega_2,\ldots, \omega_l \mid s,\gamma)=\prod_{i:\gamma \notin \Gamma_{s_i},\omega_i=1} (1-\omega_i) \cdot  \prod_{i:\gamma \in \Gamma_{s_i},\omega_i=1} p_{s_i} \cdot \prod_{i:\gamma \in \Gamma_{s_i},\omega_i=0} (1-p_{s_i}).
\end{align}
Assume that $\gamma$ satisfies the uniform distribution over $\Gamma$, implying
\begin{align}P(\omega_1,\omega_2,\ldots, \omega_l \mid s)=\sum_{\gamma \in \Gamma} \frac{1}{|\Gamma|}\cdot P(\omega_1,\omega_2,\ldots, \omega_l \mid s,\gamma).\end{align} The first detection time given $s=(s_1,s_2,\ldots,s_l)$ is defined as
\begin{align}
\tau_s=\begin{cases}
t_{s_k} & \text{if $\omega_k=1$ and $\forall i <k: \omega_i=0$}\\
K & \text{if $\forall i \in \{1,2,\ldots,l\}: \omega_i=0$}
\end{cases},
\end{align}
which is a random variable. Note that $K \geq \max_{\sigma \in V} t_{\sigma}$ is a penalty for failing to detect the target. Thus, the goal of search-and-tracking is to select a sequence $s=(s_1,s_2,\ldots,s_l)$ of search patterns minimizing the expected first detection time
\begin{equation}
\begin{aligned}
\mathbb{E}[\tau_s]&=\sum^{l}_{k=1} t_{s_k} \cdot P(\omega_1=0,\ldots,\omega_{k-1}=0,\omega_k=1 \mid s) + K \cdot P(\omega_1=0,\ldots,\omega_{l}=0 \mid s)\\
&= \sum^{l}_{k=1} t_{s_k} \cdot \left(P(\omega_1=0,\ldots,\omega_{k-1}=0 \mid s)-P(\omega_1=0,\ldots,\omega_k=0 \mid s)\right) \\
&\quad + K \cdot P(\omega_1=0,\ldots,\omega_{l}=0 \mid s)\\
&=K-\sum^{l}_{k=1} (K-t_{s_k}) \cdot \left(P(\omega_1=0,\ldots,\omega_{k-1}=0 \mid s)-P(\omega_1=0,\ldots,\omega_k=0 \mid s)\right),
\end{aligned}
\end{equation}
which is equivalent to maximizing
\begin{align}
f(s)=\sum^{l}_{k=1} (K-t_{s_k}) \cdot \left(P(\omega_1=0,\ldots,\omega_{k-1}=0 \mid s)-P(\omega_1=0,\ldots,\omega_k=0 \mid s)\right).
\end{align}

We use the same experimental setting as~\citep{bernardini2020through}. There are 40 paths in total, 20 search patterns and the budget is 10. That is, $|\Gamma|=40$, $n=|V|=20$ and $k=10$. The penalty $K$ is set to $\max_{\sigma \in V} t_{\sigma}$. Denote $V$ as $\{\sigma_1,\sigma_2,\ldots,\sigma_n\}$. For any $i$, the detection probability $p_{\sigma_i}$ of $\sigma_i$ is set as $\min\{\max\{mi/(n-1)+1/2-m/2,0.001\},0.999\}$. $m$ is set to $\{-1,-0.8,\ldots,1\}$. For each $m$, we generate 1000 random problem instances: for any $i$, the subset $\Gamma_{\sigma_i}$ of paths is randomly sampled from $\Gamma$; the time stamp $t_{\sigma_1}=1+r$ and $\forall i >1: t_{\sigma_i}=t_{\sigma_{i-1}}+r$, where $r$ is independently randomly sampled from $[0,n)$.

The results are shown in Table~\ref{tab-weak-st-1}. We can observe that the GSEMO always achieves the best average objective value, and is always significantly better than the generalized greedy algorithm by the sign-test with confidence level 0.05. The greedy algorithm performs the worst, which is expected because it fails to achieve a constant approximation guarantee~\citep{bernardini2020through}. We can also observe that the gap between the generalized greedy algorithm and the greedy algorithm when $m>0$ is larger than that when $m<0$. This is consistent with the empirical observation in~\citep{bernardini2020through}. By the setting of detection probability $p_{\sigma_i}$, we know that $m>0$ implies the monotone increasing property of $p_{\sigma_i}$ w.r.t. $i$. Thus, the greedy algorithm will select the search patterns with a greater time stamp (implying a larger detection probability) first and cannot exploit those early patterns, because it can only add a search pattern to the end of the current sequence. When $m<0$, the detection probability $p_{\sigma_i}$ is monotone decreasing with $i$, and thus the greedy algorithm can exploit all patterns and performs relatively better. It can also be clearly observed that the objective value has the trend of decreasing with $m$. This is because as $m$ increases, the detection probability of the first half of search patterns in $V$ will decrease, while that of the second half will increase. Thus, the expected detection time $\mathbb{E}[\tau_s]$ will increase, implying that the objective $f(s)=K-\mathbb{E}[\tau_s]$ will decrease.

\begin{table*}[ht!]\centering\caption{Comparison between the GSEMO, the GSEMO$_k$, the generalized greedy algorithm (denoted as G-Greedy) and the greedy algorithm for the application of search-and-tracking with $n=20$, $k=10$ and $m\in \{-1,-0.8,\ldots,1\}$. For each $m$, the average objective value over 1000 random problem instances is reported, and the largest one is bolded. The count of win/tie/loss denotes the number of problem instances where the GSEMO has a larger/equal/smaller objective value than the generalized greedy algorithm or the greedy algorithm, and significant cells by the sign-test with confidence level $0.05$ are bolded. OPT denotes the average optimal objective value.}\label{tab-weak-st-1}\vspace{0.5em}
\scriptsize
\newsavebox{\tablebox}
\begin{lrbox}{\tablebox}
\setlength{\tabcolsep}{0.5mm}{
\begin{tabular}{c|ccccccccccc}
\hline
$m$&$-1$&$-0.8$&$-0.6$&$-0.4$&$-0.2$&$0$&$0.2$&$0.4$&$0.6$&$0.8$&$1$\\
\hline
OPT & 167.933 & 165.275 &164.389 &161.643 &157.801 &152.727 &148.101 &138.372 &125.257 &103.913 &84.119 \\
\hline
GSEMO& \bf{167.933} & \bf{165.275} & \bf{164.389}  & \bf{161.643} & \bf{157.801} & \bf{152.727} & \bf{148.101} & \bf{138.372} & \bf{125.257} & \bf{103.913} & \bf{84.119}\\
GSEMO$_k$& \bf{167.933} & \bf{165.275} & 164.388 & 161.642 & 157.800 & \bf{152.727} & 148.100 & \bf{138.372} & \bf{125.257} & \bf{103.913} & \bf{84.119}\\
G-Greedy ($\bullet$)& 167.926 & 165.264 & 164.376 & 161.631 & 157.788 & 152.716 & 148.088 & 138.360 & 125.236 & 103.893 & 84.108\\
Greedy ($\circ$)& 150.361 & 142.676 & 138.445 & 134.491 & 127.931 & 121.020 & 115.887 & 105.177 & 90.598 & 69.076 & 53.040\\
\hline
win/tie/loss ($\bullet$)& \bf{28/972/0} & \bf{36/963/1} & \bf{41/959/0} & \bf{39/961/0} & \bf{39/961/0} & \bf{39/956/5} & \bf{44/956/0} & \bf{57/943/0} & \bf{88/912/0} & \bf{92/908/0} & \bf{65/935/0}\\
win/tie/loss ($\circ$)& \bf{881/119/0} & \bf{896/104/0} & \bf{892/108/0} & \bf{903/97/0} & \bf{892/108/0} & \bf{901/98/1} & \bf{902/98/0} & \bf{903/97/0} & \bf{910/90/0} & \bf{935/65/0} & \bf{968/32/0}\\
\hline
\end{tabular}}
\end{lrbox}
\scalebox{0.9}{\usebox{\tablebox}}
\end{table*}

Table~\ref{tab-weak-st-2} shows the results for a larger problem scale, where $n=50$ and $k=20$. The setting is the same as above except $|\Gamma|=80$. We can have a similar observation as that for $n=20$ and $k=10$.

\begin{table*}[h!]\centering\caption{Comparison between the GSEMO, the GSEMO$_k$, the generalized greedy algorithm (denoted as G-Greedy) and the greedy algorithm for the application of search-and-tracking with $n=50$, $k=20$ and $m\in \{-1,-0.8,\ldots,1\}$. For each $m$, the average objective value over 1000 random problem instances is reported, and the largest one is bolded. The count of win/tie/loss denotes the number of problem instances where the GSEMO has a larger/equal/smaller objective value than the generalized greedy algorithm or the greedy algorithm, and significant cells by the sign-test with confidence level $0.05$ are bolded.}\label{tab-weak-st-2}\vspace{0.5em}
\scriptsize
\begin{lrbox}{\tablebox}
\setlength{\tabcolsep}{0.5mm}{
\begin{tabular}{c|ccccccccccc}
\hline
$m$&$-1$&$-0.8$&$-0.6$&$-0.4$&$-0.2$&$0$&$0.2$&$0.4$&$0.6$&$0.8$&$1$\\
\hline
GSEMO& \bf{1035.408} & \bf{1033.593} & \bf{1028.054} & \bf{1026.896} & \bf{1027.760} & \bf{1036.752} & \bf{1021.015} & \bf{1004.615} & \bf{966.179} & \bf{895.946} & \bf{802.434} \\
GSEMO$_k$& \bf{1035.408} & \bf{1033.593} & 1028.053 & \bf{1026.896} & \bf{1027.760} & 1036.751 & \bf{1021.015} & \bf{1004.615} & \bf{966.179} & \bf{895.946} & \bf{802.434} \\
G-Greedy ($\bullet$)& 1035.397 & 1033.576 & 1028.039 & 1026.882 & 1027.738 & 1036.730 & 1020.986 & 1004.594 & 966.156 & 895.924 & 802.414\\
Greedy ($\circ$)& 982.784 & 950.218 & 917.594 & 889.992 & 852.750 & 836.899 & 761.605 & 643.897 & 494.501 & 425.303  & 375.139\\
\hline
win/tie/loss ($\bullet$)& \bf{104/892/4} & \bf{151/849/0} & \bf{146/854/0} & \bf{172/828/0} & \bf{179/821/0} & \bf{149/851/0} & \bf{137/863/0} & \bf{101/898/1} & \bf{68/932/0} & \bf{39/961/0} & \bf{34/965/1}\\
win/tie/loss ($\circ$)& \bf{967/33/0} & \bf{970/30/0} & \bf{975/25/0} & \bf{970/30/0} & \bf{974/26/0} & \bf{964/36/0} & \bf{973/27/0} & \bf{996/4/0} & \bf{1000/0/0} & \bf{1000/0/0} & \bf{1000/0/0}\\
\hline
\end{tabular}}
\end{lrbox}
\scalebox{0.88}{\usebox{\tablebox}}
\end{table*}

\subsubsection{Recommender Systems}

The application of recommender systems~\citep{bernardini2020through} is to select a sequence of movies to maximize the satisfaction of a user. Given a set $V$ of movies, let the function $g: V \rightarrow [0,1]$ represent the satisfaction probability of a user to a movie. Let $\mathcal{T}$ denote the set of topics, and let $p_{\sigma}(t)$ denote to which extent a movie $\sigma$ covers a topic $t$. Given a sequence $s=(s_1,s_2,\ldots,s_l)$ of movies, a user will select a topic~$t$ from $\mathcal{T}$ uniformly at random, and the corresponding movie in this sequence will be selected as follows: $s_1$ will be selected with probability $p_{s_1}(t)$; if $s_1$ is not selected (occurring with probability $1-p_{s_1}(t)$), $s_2$ will be selected with probability $p_{s_2}(t)$; this process will be continued until a movie is successfully selected. Thus, the satisfaction of the user on the selected topic $t$ is
\begin{align}
\sum^l_{i=1} g(s_i) \cdot \prod^{i-1}_{j=1}(1-p_{s_j}(t))\cdot p_{s_i}(t),
\end{align}
and the objective function
\begin{align}
f(s)=\sum_{t\in \mathcal{T}}\frac{1}{|\mathcal{T}|}\cdot \sum^l_{i=1} g(s_i) \cdot \prod^{i-1}_{j=1}(1-p_{s_j}(t))\cdot p_{s_i}(t).
\end{align}
Here one item corresponds to one movie.

We first fix the number $n=|V|$ of movies to be 500 and vary the budget $k$ in $\{10,12,\ldots,30\}$. The number $|\mathcal{T}|$ of topics is set to 50. For each $k$, we randomly generate 50 problem instances. That is, for each movie in $V$, the satisfaction probability of the user is randomly selected from $[0,1)$, and the extent of the movie covering each topic is also independently randomly sampled from $[0,1)$. Table~\ref{tab-weak-rs-1} shows the results. Though the average objective values obtained by the GSEMO and the generalized greedy algorithm cannot be distinguished when they are rounded to four decimal points, the number of win/tie/loss still shows that for each $k$, the GSEMO is never worse on all 50 problem instances, and is even significantly better by the sign-test with confidence level 0.05. As expected, the greedy algorithm performs the worst. It seems strange that the objective value obtained by the greedy algorithm keeps unchanged when $k$ increases. In fact, it increases, but the increment is very small and can be neglected when rounded to four decimal points.

\begin{table*}[h!]\caption{Comparison between the GSEMO, the GSEMO$_k$, the generalized greedy algorithm (denoted as G-Greedy) and the greedy algorithm for the application of recommender systems with $n=500$ and $k \in \{10,12,\ldots,30\}$. For each $k$, the average objective value over 50 random problem instances is reported, and the largest one is bolded. The count of win/tie/loss denotes the number of problem instances where the GSEMO has a larger/equal/smaller objective value than the generalized greedy algorithm or the greedy algorithm, and significant cells by the sign-test with confidence level $0.05$ are bolded.}\label{tab-weak-rs-1}
\scriptsize
\begin{center}
\setlength{\tabcolsep}{0.75mm}{
\begin{tabular}{c|ccccccccccc}
\hline
$k$ & $10$ &$12$&$14$ & $16$ & $18$ & $20$ &$22$ &$24$&$26$ & $28$ & $30$\\
\hline
GSEMO& \bf{0.9892} & \bf{0.9896} & \bf{0.9898} & \bf{0.9899} & \bf{0.9899} & \bf{0.9899} & \bf{0.9899} & \bf{0.9900} & \bf{0.9900} & \bf{0.9900} & \bf{0.9900} \\
GSEMO$_k$& \bf{0.9892} & \bf{0.9896} & \bf{0.9898} & \bf{0.9899} & \bf{0.9899} & \bf{0.9899} & \bf{0.9899} & \bf{0.9900} & \bf{0.9900} & \bf{0.9900} & \bf{0.9900} \\
G-Greedy ($\bullet$)& 0.9891 & \bf{0.9896} & \bf{0.9898} & \bf{0.9899} & \bf{0.9899} & \bf{0.9899} & \bf{0.9899} & \bf{0.9900} & \bf{0.9900} & \bf{0.9900} & \bf{0.9900} \\
Greedy ($\circ$)& 0.9631 & 0.9631 & 0.9631 & 0.9631 & 0.9631 & 0.9631 & 0.9631 & 0.9631 & 0.9631 & 0.9631 & 0.9631\\
\hline
win/tie/loss ($\bullet$)&\bf{47/3/0}&\bf{46/4/0}&\bf{46/4/0}&\bf{46/4/0}&\bf{44/6/0}&\bf{37/13/0}&\bf{31/19/0}&\bf{27/23/0} &\bf{21/29/0} &\bf{18/32/0}&\bf{16/34/0}\\
win/tie/loss ($\circ$)&\bf{50/0/0}&\bf{50/0/0}&\bf{50/0/0}&\bf{50/0/0}&\bf{50/0/0}&\bf{50/0/0}&\bf{50/0/0}&\bf{50/0/0} &\bf{50/0/0} &\bf{50/0/0} & \bf{50/0/0}\\
\hline
\end{tabular}}
\end{center}
\end{table*}

We next fix $k=20$ and vary $n \in \{100,200,\ldots,1000\}$. The setting is the same as above. The results in Table~\ref{tab-weak-rs-2} are similar to that we have observed in Table~\ref{tab-weak-rs-1}. That is, the GSEMO is always significantly better than the generalized greedy algorithm by the sign-test with confidence level $0.05$, and the greedy algorithm is the worst.

\begin{table*}[h!]\caption{Comparison between the GSEMO, the GSEMO$_k$, the generalized greedy algorithm (denoted as G-Greedy) and the greedy algorithm for the application of recommender systems with $n\in\{100,200,\ldots,1000\}$ and $k=20$. For each $n$, the average objective value over 50 random problem instances is reported, and the largest one is bolded. The count of win/tie/loss denotes the number of problem instances where the GSEMO has a larger/equal/smaller objective value than the generalized greedy algorithm or the greedy algorithm, and significant cells by the sign-test with confidence level $0.05$ are bolded.}\label{tab-weak-rs-2}
\scriptsize
\begin{center}
\setlength{\tabcolsep}{0.75mm}{
\begin{tabular}{c|cccccccccc}
\hline
$n$ & $100$ & $200$ & $300$ & $400$ & $500$ & $600$ & $700$ & $800$ & $900$ & $1000$\\
\hline
GSEMO& \bf{0.9830} & \bf{0.9881} & \bf{0.9894} & \bf{0.9899} & \bf{0.9899} & \bf{0.9899} & \bf{0.9900}  & \bf{0.9900}  & \bf{0.9900}  & \bf{0.9900}\\
GSEMO$_k$& \bf{0.9830} & \bf{0.9881} & \bf{0.9894} & \bf{0.9899} & \bf{0.9899} & \bf{0.9899} & \bf{0.9900} & \bf{0.9900} & \bf{0.9900} & \bf{0.9900}\\
G-Greedy ($\bullet$)& \bf{0.9830} & \bf{0.9881} & 0.9893 & \bf{0.9899} & \bf{0.9899} & \bf{0.9899} & \bf{0.9900}  & \bf{0.9900}  & \bf{0.9900}  & \bf{0.9900}\\
Greedy ($\circ$)& 0.9431 & 0.9543& 0.9605 & 0.9636 & 0.9631& 0.9614 & 0.9664 & 0.9663 & 0.9675 & 0.9681\\
\hline
win/tie/loss ($\bullet$)&\bf{47/3/0}&\bf{44/6/0}&\bf{45/5/0}&\bf{41/9/0}&\bf{37/13/0}&\bf{37/13/0}&\bf{22/28/0}&\bf{19/31/0} &\bf{12/38/0} &\bf{8/42/0}\\
win/tie/loss ($\circ$)&\bf{50/0/0}&\bf{50/0/0}&\bf{50/0/0}&\bf{50/0/0}&\bf{50/0/0}&\bf{50/0/0}&\bf{50/0/0}&\bf{50/0/0} &\bf{50/0/0} &\bf{50/0/0}\\
\hline
\end{tabular}}
\end{center}
\end{table*}

\subsection{Maximizing DAG Monotone Submodular Functions}\label{subsec-exp-3}

For the problem class of maximizing DAG monotone submodular functions in Definition~\ref{def:prob-dag}, we consider the two applications where each edge $(v_i,v_j)$ on the directed acyclic graph $G=(V,E)$ has a weight $w_{i,j}$, and the function $h: 2^{E} \rightarrow \mathbb{R}$ defined over the set $E$ of edges satisfies that \begin{align}\label{eq-modular}\forall X \subseteq E: h(X)=\sum_{(v_i,v_j) \in X} w_{i,j}\end{align} and \begin{align}\label{eq-submodular}\forall X \subseteq E: h(X)=\sum_{v_j \in V(X)} \left(1-\prod_{(v_i,v_j) \in X} \left(1-w_{i,j}\right)\right),\end{align} respectively. Note that $V(X)$ denotes the item set covered by the edge set $X \subseteq E$. These two $h$ functions are modular and submodular, respectively. For each application, we use a synthetic data set and a real-world data set to compare the GSEMO with the previous best algorithm, i.e., the \textsc{OMegA} algorithm in Algorithm~\ref{alg:OMegA}, as well as the standard greedy algorithm in Algorithm~\ref{alg:greedy}. The number $T$ of iterations of the GSEMO is set to $4ek^2n^2$ as suggested by Theorem~\ref{theo-dag}.

\subsubsection{Modular $h$}

We first consider the case where $h$ satisfies Eq.~(\refeq{eq-modular}), which is modular. For the synthetic data set, we use the same setting as in~\citep{tschiatschek2017selecting}. The graph $G=(V,E)$ is constructed as follows: for each item $v_i \in V$, randomly select a subset of size $\min\{d,n-i\}$ from $\{v_{i+1},\ldots,v_n\}$ and set an edge from $v_i$ to each item in the selected subset and also to itself (i.e., a self-cycle). Each weight $w_{i,j}$ is randomly sampled from $[0,1]$. We set $n=|V|=30$, $k=5$, and use $d \in \{1,2,\ldots,10\}$. For each $d$, we randomly generate 50 problem instances, and report the number of instances where the GSEMO wins, ties or loses against the \textsc{OMegA} algorithm or the greedy algorithm.

The results are shown in Table~\ref{tab-dag-modular-1}. We can observe that the GSEMO is better than the \textsc{OMegA}, and is further significantly better by the sign-test with confidence level $0.05$. The greedy algorithm performs the worst, which is consistent with that it fails to achieve any constant approximation guarantee for DAG monotone submodular function maximization~\citep{tschiatschek2017selecting}. We can also observe that the objective value increases with $d$. This is expected because a larger $d$ implies a denser graph $G$, and thus a sequence will have a larger edge set, resulting in a larger objective value due to the monotonicity of $h$.

\begin{table*}[h!]\caption{Comparison between the GSEMO, the GSEMO$_k$, the \textsc{OMegA} and the greedy algorithm for the application of modular $h$ on the synthetic data set where $n=30$ and $k=5$. For each $d\in \{1,2,\ldots,10\}$, the average objective value over 50 random problem instances is reported, and the largest one is bolded. The count of win/tie/loss denotes the number of problem instances where the GSEMO has a larger/equal/smaller objective value than the \textsc{OMegA} or the greedy algorithm, and significant cells by the sign-test with confidence level $0.05$ are bolded. OPT denotes the average optimal objective value.}\label{tab-dag-modular-1}
\scriptsize
\begin{center}
\setlength{\tabcolsep}{0.75mm}{
\begin{tabular}{c|cccccccccc}
\hline
$d$ & $1$ &$2$ &$3$ & $4$ & $5$ & $6$ &$7$ & $8$ & $9$ & $10$\\
\hline
OPT &6.1201	&7.2197	&8.0528	&8.5084	&9.2098	&9.5183	&9.9011	&10.0722	&10.1404	&10.5540\\
\hline
GSEMO& \bf{6.1201} & \bf{7.1984} & \bf{8.0428} & \bf{8.4717} & \bf{9.1984} & \bf{9.5120} & \bf{9.8789} & \bf{10.0651} & \bf{10.1227} & \bf{10.5461} \\
GSEMO$_k$& 6.0972 & 7.1389 & 7.9646 & 8.3636 & 9.0862 & 9.3991 & 9.8078 & 9.9829 & 10.0287 & 10.4476 \\
\textsc{OMegA} ($\bullet$) & 5.9891 & 6.9784 & 7.7666 & 8.1206 & 8.7551 & 8.9992 & 9.2882 & 9.6966 & 9.6631 & 10.0402\\
Greedy ($\circ$)& 5.1318  & 5.7404  & 6.3720  & 6.6858 & 6.6792 & 7.2544 & 7.3824 & 7.6038 & 7.9989 & 7.6820\\
\hline
win/tie/loss ($\bullet$)&\bf{24/26/0}&\bf{35/15/0}&\bf{32/18/0}&\bf{33/17/0}&\bf{31/19/0} &\bf{34/16/0}&\bf{36/14/0}&\bf{31/19/0} &\bf{36/14/0} & \bf{39/10/1}\\
win/tie/loss ($\circ$)&\bf{50/0/0}&\bf{50/0/0}&\bf{49/1/0}&\bf{50/0/0}&\bf{50/0/0}&\bf{50/0/0}&\bf{49/1/0}&\bf{50/0/0} &\bf{49/1/0} &\bf{50/0/0}\\
\hline
\end{tabular}}
\end{center}
\end{table*}

Next we use the real-world data set \textit{Movielens 1M}\footnote{\url{https://grouplens.org/datasets/movielens/1m/}} to construct a directed acyclic graph $G=(V,E)$ if not counting self-cycles. This data set contains 1,000,209 ratings made by 6040 users for 3706 movies. As in~\citep{mitrovic2018submodularity}, to make the data to be representative, we delete all users who have rated fewer than 20 movies or
more than 50 movies, and also delete all movies which have received fewer than 1000 ratings. The data set after preprocessing has 2047 users and 207 movies. We order these 207 movies according to their time stamps recorded in the data set. Each vertex in the graph $G$ corresponds to one movie. Thus, $n=|V|=207$. For any $i<j$, there is an edge between movie $v_i$ and movie $v_j$, and the weight $w_{i,j}$ is set as the conditional probability of rating $v_j$ given that a user has already rated $v_i$. That is, $w_{i,j}=N_{i,j}/(N_i+20)$, where $N_{i,j}$ is the number of users who have rated $v_i$ before $v_j$, $N_i$ is the number of users who have rated $v_i$, and $20$ in the denominator is used to avoid overfitting to rare events. For any $v_i$, there is a self-cycle, and the weight $w_{i,i}$ is set as the probability of rating $v_i$, i.e., $w_{i,i}=N_i/(2047+20)$, where $2047$ is the total number of users. We set the budget $k$ as $\{2,3,\ldots,10\}$. Thus, the task is to recommend a sequence of movies with length at most $k$ based on the watching histories of existing users.

The results are shown in Table~\ref{tab-dag-modular-2}. As the data set is fixed, the \textsc{OMegA} and the greedy algorithm have only one output objective value. But for the GSEMO, which is a randomized algorithm, we repeat its run ten times independently, and report the average objective value and the standard deviation. Note that the standard deviation of the GSEMO is always 0 in this experiment, i.e., the same good solutions are found in the ten independent runs, and thus we neglect it in Table~\ref{tab-dag-modular-2}. We can observe that the GSEMO and the \textsc{OMegA} achieve the same objective value, and are better than the greedy algorithm. The objective value obtained by these three algorithms increases with $k$ as expected.

\begin{table*}[h!]\caption{Comparison between the GSEMO, the GSEMO$_k$, the \textsc{OMegA} and the greedy algorithm for the application of modular $h$ on the real-world data set \textit{Movielens 1M} where $n=207$. For each $k\in \{2,3,\ldots,10\}$, the largest objective value is bolded. OPT denotes the optimal objective value. `--' means that no results were obtained after running several days.}\label{tab-dag-modular-2}
\scriptsize
\begin{center}
\setlength{\tabcolsep}{1.8mm}{
\begin{tabular}{c|ccccccccc}
\hline
$k$ &$2$ &$3$ & $4$ & $5$ & $6$ &$7$ & $8$ & $9$ & $10$\\
\hline
OPT &1.2719 &2.3223 &3.6766 &5.3801 & -- & -- & -- & -- & --\\
\hline
GSEMO& \bf{1.2719} & \bf{2.3223} & \bf{3.6766} & \bf{5.3801} & \bf{7.3126} & \bf{9.4379} & \bf{11.8124} & \bf{14.1536} & \bf{16.5076}\\
GSEMO$_k$& \bf{1.2719} & \bf{2.3223} & \bf{3.6766} & \bf{5.3801} & \bf{7.3126} & \bf{9.4379} & \bf{11.8124} & \bf{14.1536} & \bf{16.5076}\\
\textsc{OMegA} & \bf{1.2719} & \bf{2.3223} & \bf{3.6766} & \bf{5.3801} & \bf{7.3126} & \bf{9.4379} & \bf{11.8124} & \bf{14.1536} & \bf{16.5076}\\
Greedy & 1.2678 & 2.1084 & 3.0038 & 4.1056 & 5.0370 & 6.0003 & 6.9073 & 7.9001 & 8.7954\\
\hline
\end{tabular}}
\end{center}
\end{table*}

\subsubsection{Submodular $h$}

We also compare their performance where $h$ satisfies Eq.~(\refeq{eq-submodular}), which is submodular. Table~\ref{tab-dag-submodular-1} shows the results on the synthetic data set. Note that compared with the modular case, there is one difference in the construction of the graph. That is, each weight $w_{i,i}$ is randomly sampled from $[0,0.1]$ instead of $[0,1]$, which is same as the setting in~\citep{tschiatschek2017selecting}. The results are similar to that we have observed in Table~\ref{tab-dag-modular-1} for the modular case. The GSEMO performs the best, and is significantly better than the \textsc{OMegA}. The greedy algorithm performs the worst.

\begin{table*}[h!]\caption{Comparison between the GSEMO, the GSEMO$_k$, the \textsc{OMegA} and the greedy algorithm for the application of submodular $h$ on the synthetic data set where $n=30$ and $k=5$. For each $d\in \{1,2,\ldots,10\}$, the average objective value over 50 random problem instances is reported, and the largest one is bolded. The count of win/tie/loss denotes the number of problem instances where the GSEMO has a larger/equal/smaller objective value than the \textsc{OMegA} or the greedy algorithm, and significant cells by the sign-test with confidence level $0.05$ are bolded. OPT denotes the average optimal objective value.}\label{tab-dag-submodular-1}
\scriptsize
\begin{center}
\setlength{\tabcolsep}{0.75mm}{
\begin{tabular}{c|cccccccccc}
\hline
$d$ & $1$ &$2$ &$3$ & $4$ & $5$ & $6$ &$7$ & $8$ & $9$ & $10$\\
\hline
OPT &2.9200 &3.5572 &3.7238 &3.8358 &3.8918 &3.9283 &3.9648 &3.9643 &3.9821 &4.0030\\
\hline
GSEMO& \bf{2.9200} & \bf{3.5293} & \bf{3.7066} & \bf{3.8202} & \bf{3.8809} & \bf{3.9214} & \bf{3.9435} & \bf{3.9574} & \bf{3.9730} & \bf{3.9965}\\
GSEMO$_k$& 2.8605 & 3.4670 & 3.6849 & 3.7976 & 3.8515 & 3.9021 & 3.9390 & 3.9330 & 3.9716 & 3.9895\\
\textsc{OMegA} ($\bullet$)& 2.6880 & 3.2915 & 3.5747 & 3.6902 & 3.7791  & 3.8499  & 3.8793 & 3.8788 & 3.9360 & 3.9276 \\
Greedy ($\circ$)& 1.7974 & 2.6336 & 2.8417 & 3.1242 & 3.1819 & 3.2774 & 3.1947 & 3.3560  & 3.4460  & 3.5317\\
\hline
win/tie/loss ($\bullet$)&\bf{35/15/0}&\bf{34/15/1}&\bf{34/14/2}&\bf{30/20/0}&\bf{30/19/1} &\bf{28/21/1}&\bf{29/20/1}&\bf{35/14/1} &\bf{32/18/0} & \bf{30/18/2}\\
win/tie/loss ($\circ$)&\bf{49/1/0}&\bf{48/2/0}&\bf{49/0/1}&\bf{49/0/1}&\bf{50/0/0}&\bf{49/0/1}&\bf{49/1/0}&\bf{50/0/0} &\bf{46/4/0} &\bf{49/1/0} \\
\hline
\end{tabular}}
\end{center}
\end{table*}

Table~\ref{tab-dag-submodular-2} shows the results on the real-world data set \textit{Movielens 1M}. The setting is same as that for the modular case. The standard deviation of the GSEMO is still 0 here, but the GSEMO is better than the \textsc{OMegA} now. Note that the GSEMO does not beat the \textsc{OMegA} in Table~\ref{tab-dag-modular-2} for the modular case, where they achieve the same objective value. The reason may be because the problem with modular $h$ is easier than that with submodular $h$, and thus the \textsc{OMegA} has already performed quite well. This will be verified in the next subsection, where we will show that the \textsc{OMegA} has achieved the optimal objective value for the modular case.

\begin{table*}[h!]\caption{Comparison between the GSEMO, the GSEMO$_k$, the \textsc{OMegA} and the greedy algorithm for the application of submodular $h$ on the real-world data set \textit{Movielens 1M} where $n=207$. For each $k\in \{2,3,\ldots,10\}$, the largest objective value is bolded. OPT denotes the optimal objective value. `--' means that no results were obtained after running several days.}\label{tab-dag-submodular-2}
\scriptsize
\begin{center}
\setlength{\tabcolsep}{1.8mm}{
\begin{tabular}{c|ccccccccc}
\hline
$k$ &$2$ &$3$ & $4$ & $5$ & $6$ &$7$ & $8$ & $9$ & $10$\\
\hline
OPT &1.1346 &1.8174 &2.6083 & 3.4869 & -- & -- & -- & -- & -- \\
\hline
GSEMO & \bf{1.1346} & \bf{1.8174} & \bf{2.6083} & \bf{3.4869} & \bf{4.3799}  & \bf{5.2848}  & \bf{6.1630}  & \bf{7.0603}  & \bf{7.9613} \\
GSEMO$_k$ & \bf{1.1346} & \bf{1.8174} & \bf{2.6083} & \bf{3.4869} & \bf{4.3799} & \bf{5.2848} & \bf{6.1630} & \bf{7.0603} & \bf{7.9613} \\
\textsc{OMegA} & \bf{1.1346}& 1.7718 & 2.5154 & 3.3350 & 4.1930 & 5.0884 & 6.0177 & 6.9551 & 7.8887\\
Greedy & \bf{1.1346} & 1.7718 & 2.4142 & 3.1357 & 3.7745 & 4.4225 & 5.0599 & 5.7298 & 6.4370 \\
\hline
\end{tabular}}
\end{center}
\end{table*}

\subsection{Discussion}\label{subsec-discussion}

The above experimental results have shown that the objective function value achieved by the GSEMO is at least as good as that achieved by the previous best algorithm for each considered problem class of maximizing monotone submodular functions over sequences, i.e., the greedy algorithm for maximizing prefix monotone submodular functions, the generalized greedy algorithm for maximizing weakly monotone and strongly submodular functions, and the \textsc{OMegA} algorithm for maximizing DAG monotone submodular functions.

However, we also note that the improvement can be quite different, depending on concrete applications and data sets. For example, for maximizing DAG monotone submodular functions on the real-world data set \textit{Movielens 1M}, when $h$ is modular, the GSEMO and the \textsc{OMegA} achieve the same objective values as shown in Table~\ref{tab-dag-modular-2}; while $h$ is submodular, the GSEMO achieves much larger objective values than the \textsc{OMegA} as shown in Table~\ref{tab-dag-submodular-2}. The reason may be because the previous algorithm has performed very well in some situations, and thus the GSEMO can bring very small or even no improvement. To validate this explanation, we compute the optimum (denoted as OPT) using exhaustive enumeration. Note that due to the computation time limit, we can only compute OPT in Tables~\ref{tab-weak-st-1}, and~\ref{tab-dag-modular-1} to~\ref{tab-dag-submodular-2}, and for Tables~\ref{tab-dag-modular-2} and~\ref{tab-dag-submodular-2}, OPT can be computed only for $k=2,3,4,5$.

From Tables~\ref{tab-dag-modular-2} and~\ref{tab-dag-submodular-2}, we can observe that when the GSEMO and the \textsc{OMegA} achieve the same objective value, the \textsc{OMegA} has already achieved OPT. In Table~\ref{tab-weak-st-1}, the GSEMO is slightly better than the generalized greedy algorithm, and we can observe that the objective value achieved by the generalized greedy algorithm has already been very close to OPT. In Tables~\ref{tab-dag-modular-1},~\ref{tab-dag-submodular-1} and~\ref{tab-dag-submodular-2} (except $k=2$), the GSEMO is much better than the \textsc{OMegA}, and we can observe that the objective value achieved by the \textsc{OMegA} has a relatively large gap to OPT. For Tables~\ref{tab-weak-st-1},~\ref{tab-dag-modular-1} and~\ref{tab-dag-submodular-1}, we also compute the approximation ratios of each algorithm to OPT, which are shown in Tables~\ref{tab-ratio-1} to~\ref{tab-ratio-3}, respectively. These results disclose that the GSEMO can make a large improvement when the performance of the previous algorithm is not close to OPT (e.g., in Tables~\ref{tab-ratio-2} and~\ref{tab-ratio-3}), and can still bring an improvement even when the previous algorithm has been nearly optimal (e.g., in Table~\ref{tab-ratio-1}).

\begin{table*}[h!]\caption{The approximation rato of the GSEMO, the generalized greedy algorithm (denoted as G-Greedy) and the greedy algorithm to OPT for the application of search-and-tracking where $n=20$, $k=10$ and $m\in \{-1,-0.8,\ldots,1\}$.}\label{tab-ratio-1}
\scriptsize
\begin{center}
\setlength{\tabcolsep}{1.5mm}{
\begin{tabular}{c|ccccccccccc}
\hline
$m$&$-1$&$-0.8$&$-0.6$&$-0.4$&$-0.2$&$0$&$0.2$&$0.4$&$0.6$&$0.8$&$1$\\
\hline
GSEMO/OPT &1 &1 &1 &1 &1 &1 &1 &1 &1 &1 &1\\
G-Greedy/OPT &0.9999 &0.9999 &0.9999 &0.9999 &0.9999 &0.9999 &0.9999 &0.9999 &0.9998 &0.9998 &0.9999\\
Greedy/OPT &0.8914 &0.8586 &0.8364 &0.8251 &0.8019 &0.7826 &0.7703 &0.7429 &0.7015 &0.6507 &0.6293\\
\hline
\end{tabular}}
\end{center}\vspace{-1.5em}
\end{table*}

\begin{table*}[h!]\caption{The approximation rato of the GSEMO, the \textsc{OMegA} and the greedy algorithm to OPT for the application of modular $h$ on the synthetic data set where $n=30$, $k=5$ and $d\in \{1,2,\ldots,10\}$.}\label{tab-ratio-2}
\scriptsize
\begin{center}
\setlength{\tabcolsep}{1.8mm}{
\begin{tabular}{c|cccccccccc}
\hline
$d$ & $1$ &$2$ &$3$ & $4$ & $5$ & $6$ &$7$ & $8$ & $9$ & $10$\\
\hline
GSEMO/OPT & 1 & 0.9972 & 0.9988 & 0.9959  & 0.9987 & 0.9993 & 0.9977 & 0.9993 & 0.9981 & 0.9992   \\
\textsc{OMegA}/OPT & 0.9789  & 0.9671 & 0.9642 & 0.9539  & 0.9496 & 0.9455 & 0.9379 & 0.9620 & 0.9531 & 0.9514  \\
Greedy/OPT & 0.8408  & 0.7969 & 0.7928 & 0.7858 & 0.7253 & 0.7636  & 0.7473 & 0.7562 & 0.7905 & 0.7292  \\
\hline
\end{tabular}}
\end{center}\vspace{-1.5em}
\end{table*}

\begin{table*}[h!]\caption{The approximation rato of the GSEMO, the \textsc{OMegA} and the greedy algorithm to OPT for the application of submodular $h$ on the synthetic data set where $n=30$, $k=5$ and $d\in \{1,2,\ldots,10\}$.}\label{tab-ratio-3}
\scriptsize
\begin{center}
\setlength{\tabcolsep}{1.8mm}{
\begin{tabular}{c|cccccccccc}
\hline
$d$ & $1$ &$2$ &$3$ & $4$ & $5$ & $6$ &$7$ & $8$ & $9$ & $10$\\
\hline
GSEMO/OPT & 1 & 0.9921 & 0.9952  & 0.9959 & 0.9972 & 0.9982 & 0.9946 & 0.9983 & 0.9977 & 0.9984\\
\textsc{OMegA}/OPT & 0.9226 & 0.9249  & 0.9596 & 0.9621 & 0.9708 & 0.9800 & 0.9784  & 0.9785 & 0.9884  & 0.9811  \\
Greedy/OPT & 0.6136  & 0.7383 & 0.7645 & 0.8149 & 0.8173 & 0.8335  & 0.8054 & 0.8468 & 0.8650 & 0.8822 \\
\hline
\end{tabular}}
\end{center}\vspace{-1em}
\end{table*}

Next, we examine the effectiveness of setting $f_1$ to $-\infty$ for $|s| \geq 2k$ in the bi-objective problem reformulation Eq.~(\refeq{def-CO-BO}). From the theoretical analyses in Sections~\ref{sec-theory1} to~\ref{sec-theory3}, we can find that only feasible sequences are required in the proofs. By setting $f_1=-\infty$ for $|s| > k$, the approximation guarantees derived in Theorems~\ref{theo-prefix},~\ref{theo-weak} and~\ref{theo-dag} still hold, but the upper bounds on the required expected number $\mathbb{E}[T]$ of iterations reduce from $2ek^2(k+1)n$, $2ek^2(k+1)n$ and $4ek^2n^2$ to $ek(k+1)^2n$, $ek(k+1)^2n$ and $2ek(k+1)n^2$, respectively, which is because the largest size $P_{\max}$ of the population during the run of GSEMO reduces from $2k$ to $k+1$. Our motivation of setting $f_1=-\infty$ for $|s| \geq 2k$ instead of $|s| > k$ is to allow infeasible sequences (i.e., sequences with $k<|s|<2k$) with small constraint violation degree to participate in the evolutionary process, which may bring performance improvement in practice.

Tables~\ref{tab-prefix-task-1} to~\ref{tab-dag-submodular-2} have shown the results of the GSEMO setting $f_1=-\infty$ for $|s| > k$, denoted as GSEMO$_k$. Note that as the setting for the GSEMO, the number of iterations of the GSEMO$_k$ has also been set to the theoretically required upper bound on the expected number of iterations, i.e., $ek(k+1)^2n$, $ek(k+1)^2n$ and $2ek(k+1)n^2$ for the three considered problem classes, respectively. We can observe that when the performance of the previous best algorithm is not close to OPT, e.g., in Tables~\ref{tab-dag-modular-1} and~\ref{tab-dag-submodular-1}, the GSEMO$_k$ is clearly better than the previous best algorithm, and the GSEMO can bring further improvement; when the previous best algorithm has performed very well, e.g., in Tables~\ref{tab-prefix-task-1} to~\ref{tab-weak-rs-2}, the performance of GSEMO$_k$ and GSEMO is almost the same, except that the GSEMO is slightly better in some cases (e.g., $k=10$ in Table~\ref{tab-prefix-task-1}). One may feel that the comparison between the GSEMO and GSEMO$_k$ is unfair, because the GSEMO runs for more iterations than the GSEMO$_k$. For the three considered problem classes, the GSEMO runs for $2ek^2(k+1)n$, $2ek^2(k+1)n$ and $4ek^2n^2$ iterations, respectively. However, as the GSEMO achieves a good performance quickly in practice (which will be shown later when we consider the running time), its final performance only decreases slightly and can be still clearly better than the performance of the GSEMO$_k$, even when the GSEMO employs the same number of iterations as the GSEMO$_{k}$. For example, for the problem of maximizing DAG monotone submodular functions on the synthetic data set, Figure~\ref{fig-constraint} plots the approximation ratio of the GSEMO, GSEMO$^*$ and GSEMO$_k$, where the GSEMO$^*$ denotes the GSEMO using the same number of iterations (i.e., $2ek(k+1)n^2$ iterations) as the GSEMO$_{k}$. We can observe that the GSEMO$^*$ is very close to the GSEMO, and is still clearly better than the GSEMO$_k$. Thus, these observations validate the effectiveness of setting $f_1=-\infty$ for $|s| \geq 2k$ instead of $|s| > k$.

\begin{figure*}[h!]\centering
\begin{minipage}[c]{0.42\linewidth}\centering
        \includegraphics[width=1\linewidth]{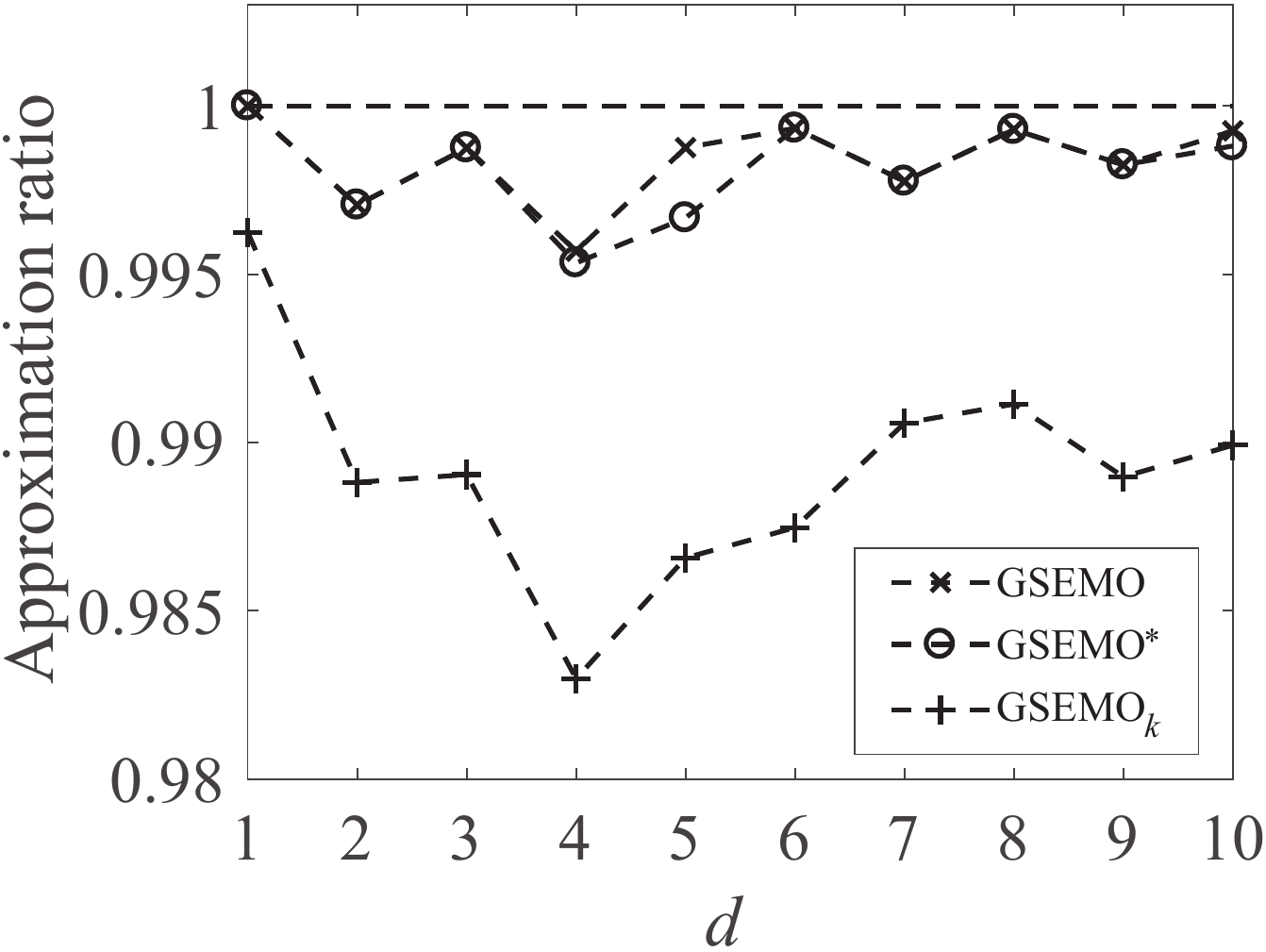}
\end{minipage}\hspace{2em}
\begin{minipage}[c]{0.42\linewidth}\centering
        \includegraphics[width=1\linewidth]{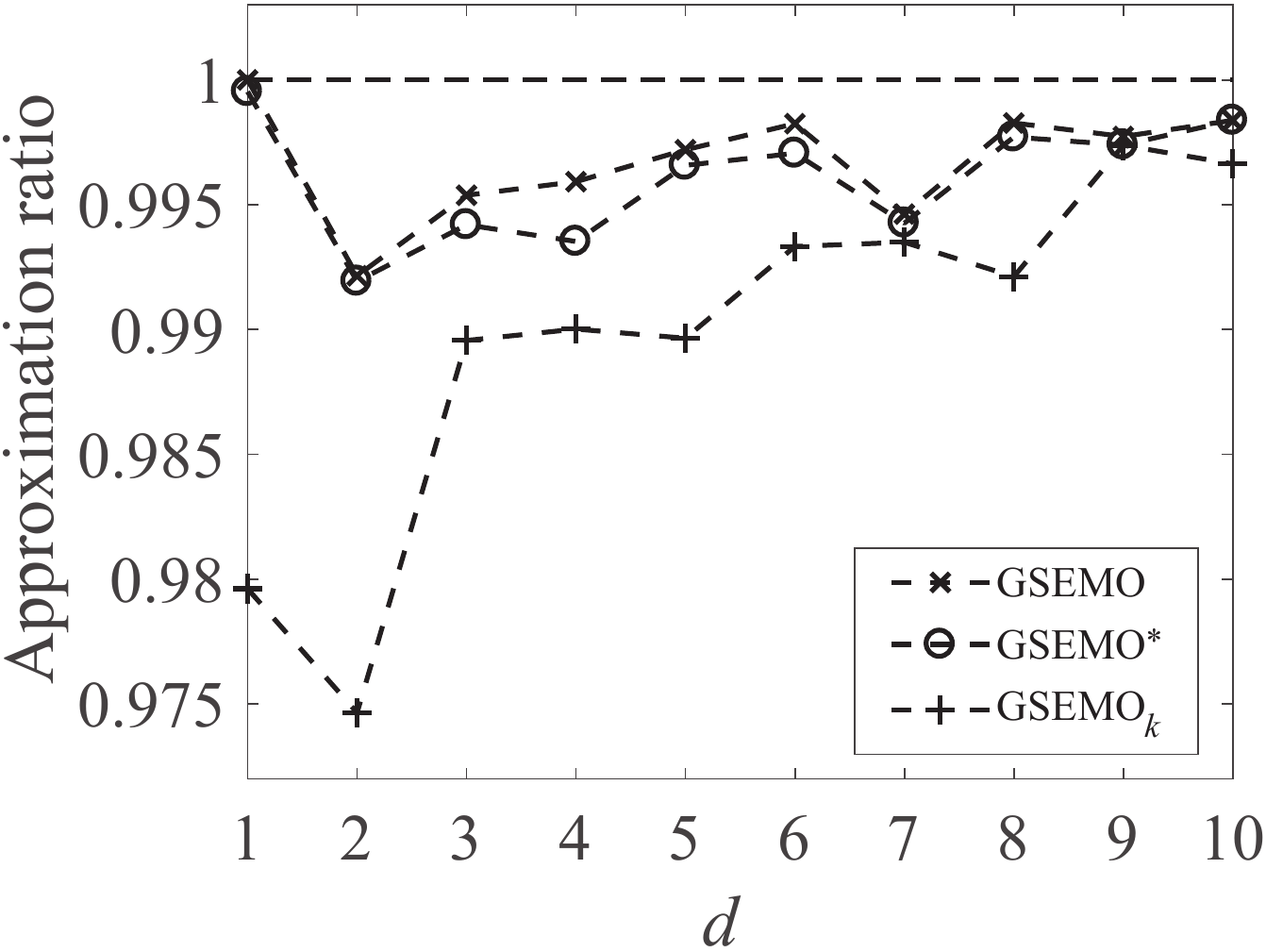}
\end{minipage}\\\vspace{0.3em}
\begin{minipage}[c]{0.42\linewidth}\centering
    \small(a) modular $h$
\end{minipage}\hspace{2em}
\begin{minipage}[c]{0.42\linewidth}\centering
    \small(b) submodular $h$
\end{minipage}
\caption{Approximation ratio of the GSEMO, GSEMO$^*$ and GSEMO$_k$ on maximizing DAG monotone submodular functions using the synthetic data set, where the GSEMO and GSEMO$^*$ set $f_1=-\infty$ for $|s| \geq 2k$, and the GSEMO$_k$ sets $f_1=-\infty$ for $|s| > k$; their number of iterations is set to $4ek^2n^2$, $2ek(k+1)n^2$ and $2ek(k+1)n^2$, respectively.}\label{fig-constraint}
\end{figure*}

Finally, we consider the running time, in the number of objective function evaluations. The greedy algorithm (i.e., Algorithm~\ref{alg:greedy}), the generalized greedy algorithm (i.e., Algorithm~\ref{alg:general-greedy}) and the \textsc{OMegA} (i.e., Algorithm~\ref{alg:OMegA}) take the time in the order of $kn$, $k^2n$ and $k\Delta|E|$ (where $\Delta$ is the smaller one between the largest indegree and outdegree of all items in the graph $G$, and $|E|$ is the number of edges of $G$), respectively. For the GSEMO in Algorithm~\ref{alg:POSeqSel}, the number $T$ of iterations has been set to the upper bound on its expected value derived in theoretical analysis, that is, $2ek^2(k+1)n$ for maximizing prefix monotone submodular functions, $2ek^2(k+1)n$ for maximizing weakly monotone and strongly submodular functions, and $4ek^2n^2$ for maximizing DAG monotone submodular functions.

We want to examine how efficient the GSEMO can be in practice. We take the problem of maximizing DAG monotone submodular functions on the synthetic data set with $d=5$ as an example. We plot the curve of the approximation ratio over the running time for the GSEMO, and select the greedy algorithm and the \textsc{OMegA} as the baselines. The curves are shown in Figure~\ref{fig-time}, where one unit on the $x$-axis corresponds to $kn$ objective evaluations. Compared with the theoretical running time $4ek^2n^2\approx 1631 kn$ (where $n=30$ and $k=5$ here) in Theorem~\ref{theo-dag}, we can observe that the GSEMO obtains a better performance much faster, implying that the GSEMO can be efficient in practice. This is expected, because the theoretical running time is the running time required by the GSEMO to achieve a good approximation in the worst case. We also plot the curve of the GSEMO$_k$, which is clearly below that of the GSEMO. This further confirms the advantage of utilizing infeasible sequences in the evolutionary process, i.e., setting $f_1$ to $-\infty$ for $|s| \geq 2k$ instead of $|s|>k$.

\begin{figure*}[h!]\centering
\begin{minipage}[c]{0.42\linewidth}\centering
        \includegraphics[width=1\linewidth]{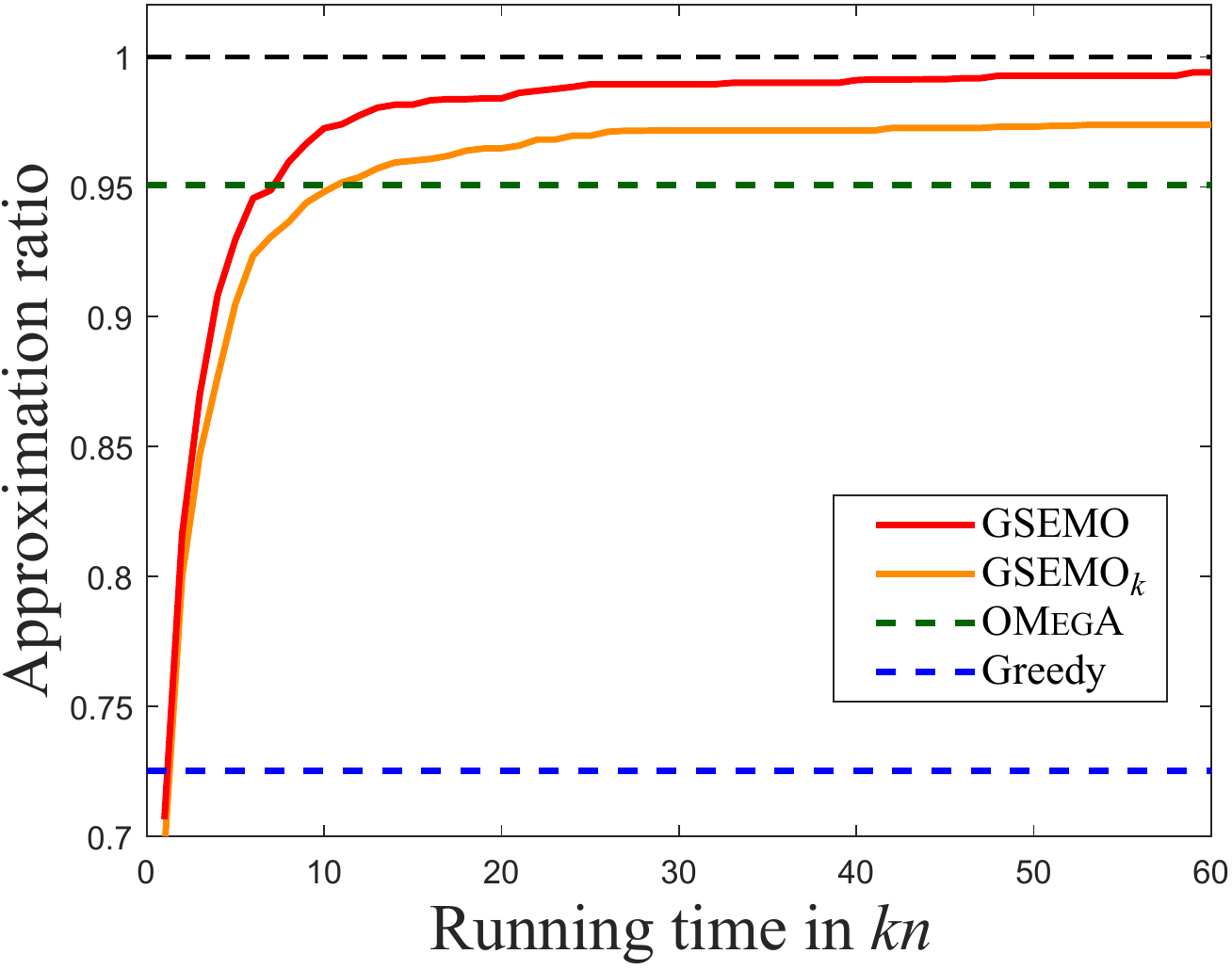}
\end{minipage}\hspace{2em}
\begin{minipage}[c]{0.42\linewidth}\centering
        \includegraphics[width=1\linewidth]{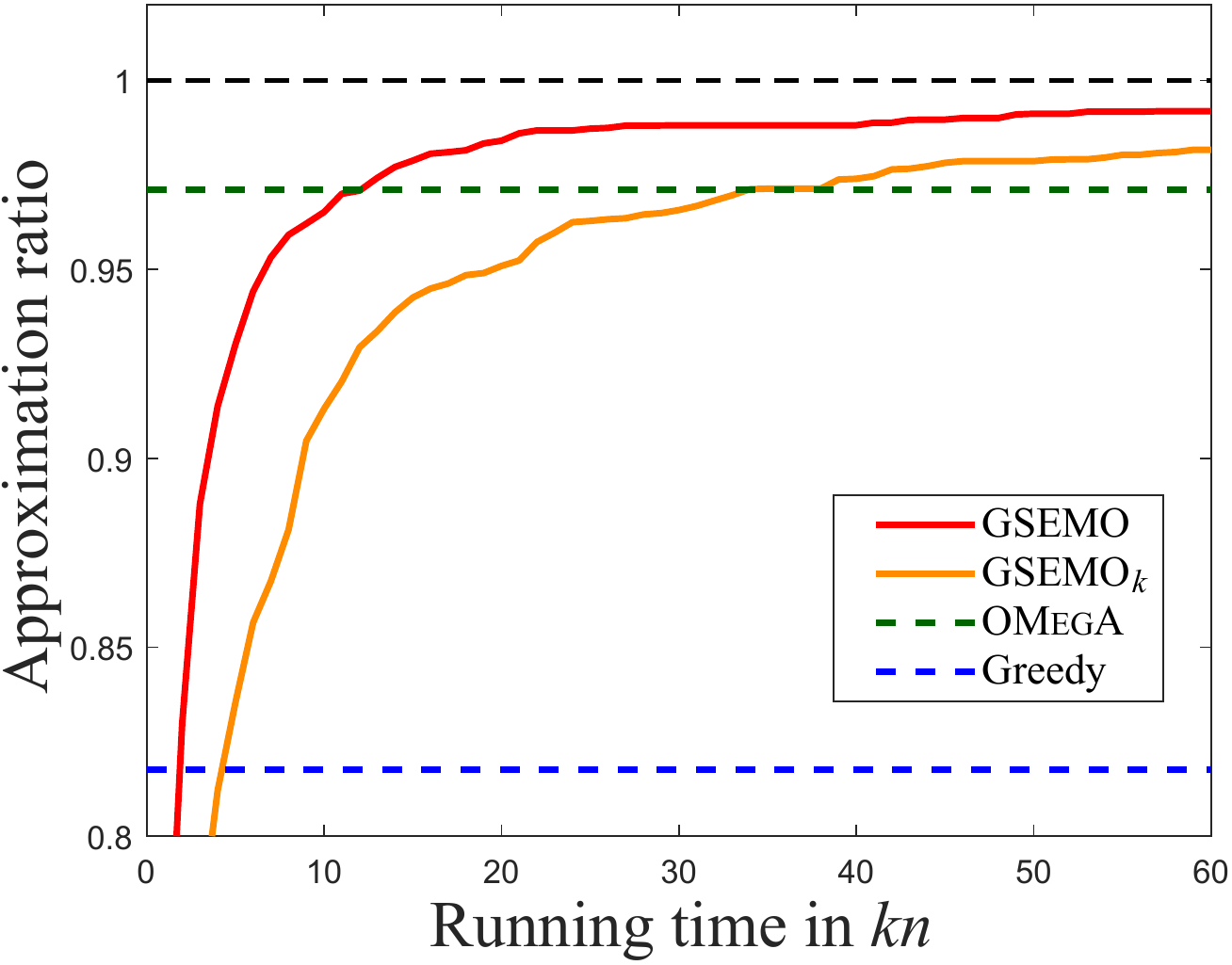}
\end{minipage}\\\vspace{0.3em}
\begin{minipage}[c]{0.42\linewidth}\centering
    \small(a) modular $h$, $d=5$
\end{minipage}\hspace{2em}
\begin{minipage}[c]{0.42\linewidth}\centering
    \small(b) submodular $h$, $d=5$
\end{minipage}
\caption{Approximation ratio vs. running time (i.e., number of objective evaluations) of the GSEMO and GSEMO$_k$ on maximizing DAG monotone submodular functions using the synthetic data set.}\label{fig-time}
\end{figure*}

We also examine the running time in CPU seconds. For the GSEMO, we record the running time until finding a solution at least as good as that obtained by the \textsc{OMegA}. The results are shown in Figure~\ref{fig-cputime}. As expected, the greedy algorithm is the fastest, the \textsc{OMegA} second, and the GSEMO costs the most time. This implies that the GSEMO can achieve better optimization performance by using more running time. As modern computer facilities have more powerful computing abilities, the GSEMO may have wide applicability.

\begin{figure*}[h!]\centering
\begin{minipage}[c]{0.42\linewidth}\centering
        \includegraphics[width=1\linewidth]{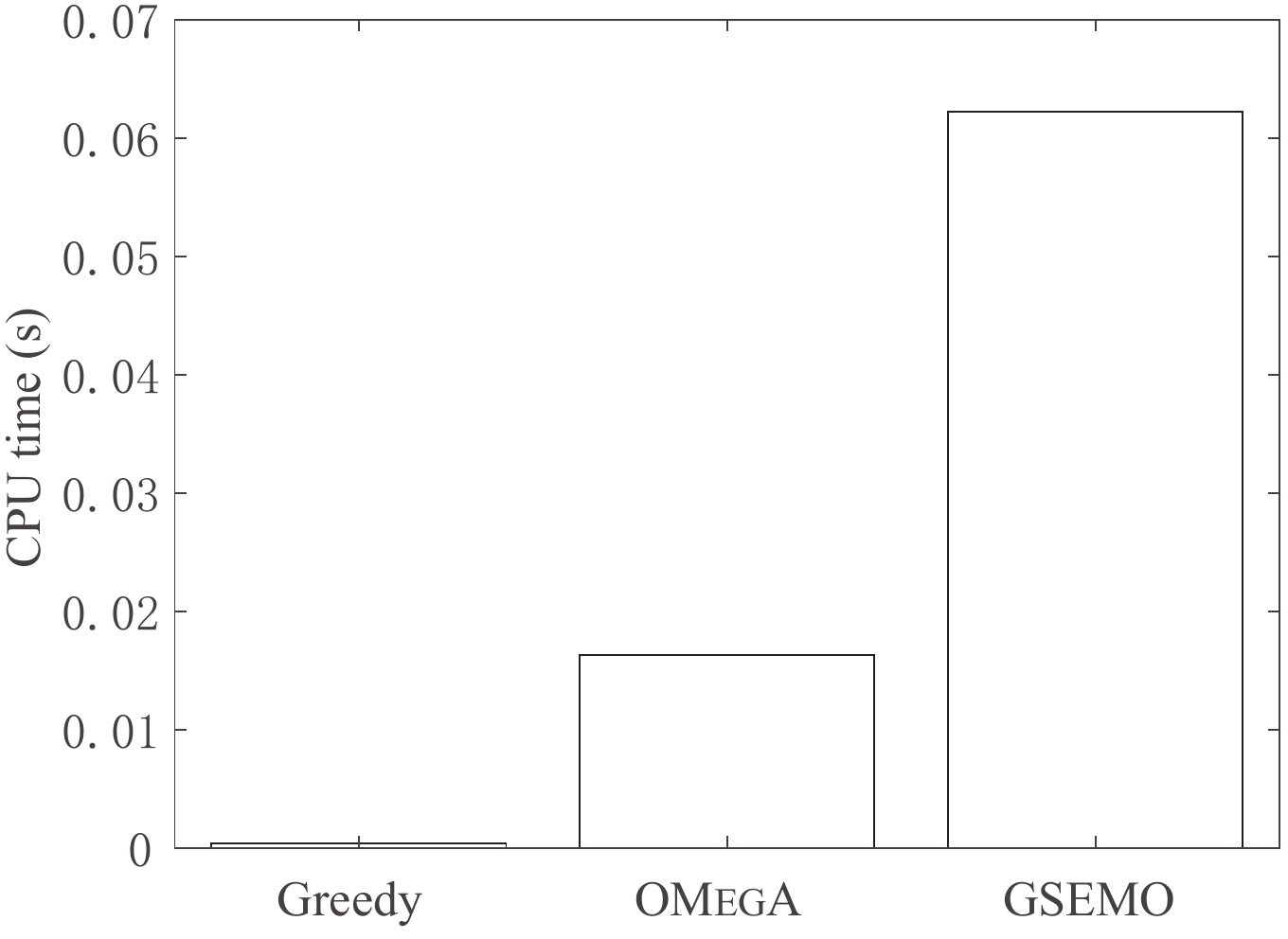}
\end{minipage}\hspace{2em}
\begin{minipage}[c]{0.42\linewidth}\centering
        \includegraphics[width=1\linewidth]{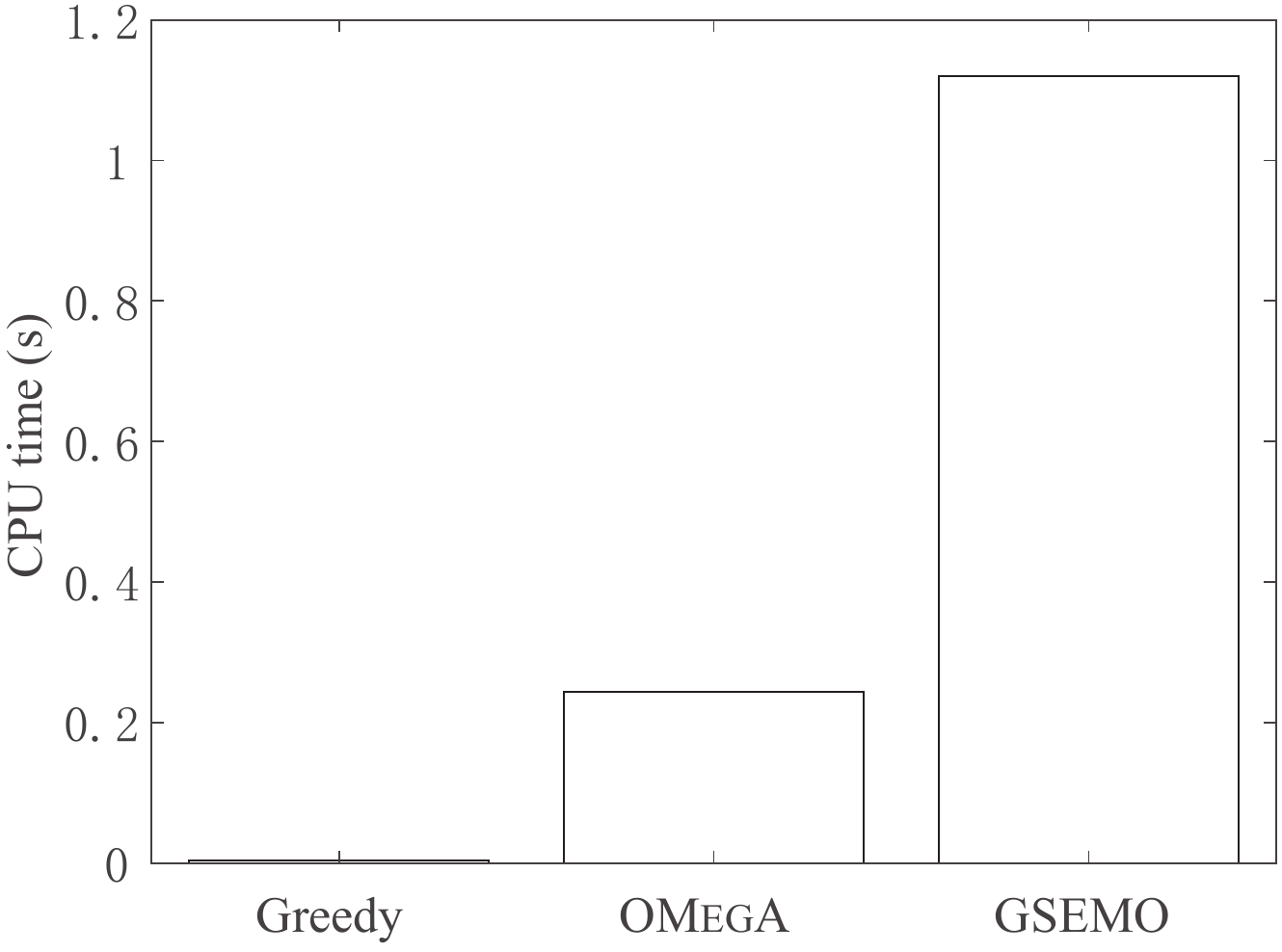}
\end{minipage}\\\vspace{0.3em}
\begin{minipage}[c]{0.42\linewidth}\centering
    \small(a) modular $h$, $d=5$
\end{minipage}\hspace{2em}
\begin{minipage}[c]{0.42\linewidth}\centering
    \small(b) submodular $h$, $d=5$
\end{minipage}
\caption{CPU running time of the greedy algorithm, the \textsc{OMegA} and the GSEMO on maximizing DAG monotone submodular functions using the synthetic data set.}\label{fig-cputime}
\end{figure*}

\section{Conclusion}\label{sec-conclusion}

This paper theoretically studies the approximation performance of EAs for solving the problem classes of maximizing monotone submodular functions over sequences, i.e., selecting a sequence with limited length that maximizes some given monotone submodular objective function. Different kinds of monotone submodular functions over sequences have been previously studied, including prefix monotone submodular functions, weakly monotone and strongly submodular functions, and DAG monotone submodular functions. For these cases, the greedy algorithm, the generalized greedy algorithm and the \textsc{OMegA} algorithm achieve the best-known polynomial-time approximation guarantee, respectively. We prove that within polynomial expected running time, a simple multi-objective EA called GSEMO can always reach or improve the best known approximation guarantee for these previously studied problem classes, providing a theoretical explanation for the generally good practical performance of EAs.

From the analysis, we can find the reason for the generally good approximation performance of the GSEMO. For maximizing prefix monotone submodular functions, the greedy algorithm needs to append an item to the end of the current sequence. For maximizing weakly monotone and strongly submodular functions, the generalized greedy algorithm needs to insert an item into a specific position of the current sequence. For maximizing DAG monotone submodular functions, the \textsc{OMegA} algorithm may need to insert two items into the current sequence. All of these behaviors can be accomplished by the common mutation operator in Definition~\ref{def:mutate}, and thus the GSEMO can be generally good. Note that our main goal is to show the existence of an EA (e.g., the GSEMO as we have shown in the paper) which can generally perform well for maximizing monotone submodular functions over sequences. There may be other EAs that also work well and even be better than the GSEMO, which can be studied in the future.

We also perform experiments on diverse applications of maximizing monotone submodular functions over sequences, and the results show the excellent performance of the GSEMO. But one can see that most of the experiments are performed on randomly generated problem instances. It would be interesting to test the algorithms on more real-world data sets. The superior empirical performance of the GSEMO over the greedy-style algorithms also implies that the GSEMO may be able to achieve a better approximation guarantee, and thus the tightness of the currently derived approximation guarantees is worth to be studied. The last problem class of maximizing DAG monotone submodular functions we studied requires a directed acyclic graph to capture the ordered preferences among items. It is also interesting to study whether EAs can still achieve good approximation guarantees when the directed acyclic graph is relaxed to a hypergraph~\citep{benouaret2019efficient,mitrovic2018submodularity}. Another interesting future work is to study the performance of EAs under the settings of adaptive sequence submodularity~\citep{mitrovic2019adaptive}.

\section{Acknowledgments}

The authors want to thank the associate editor and anonymous reviewers for their helpful comments and suggestions. This work was supported by the National Science Foundation of China (62022039, 62276124, 61921006), and the Shenzhen Peacock Plan (Grant No. KQTD2016112514355531).

\bibliography{tcs-seqselMOEA}
\bibliographystyle{abbrvnat}

\end{document}